\setlist{noitemsep}
\let\oldi\i
\renewcommand{\i}{\textit{(i)}\xspace}
\newcommand{\ii}{\textit{(ii)}\xspace}
\newcommand{\iii}{\textit{(iii)}\xspace}
\newcommand{\iv}{\textit{(iv)}\xspace}
\newenvironment{LIST} {\begin{list} {\small$\bullet$}
    {\setlength{\leftmargin}{6mm} \setlength{\parsep}{1mm}
      \setlength{\itemsep}{0mm} \setlength{\topsep}{0mm}}}
  {\end{list}}
\newcommand{\from}[2]%
{\noindent\textcolor{red}{\textsc{Note from #1:}\textsf{ #2}}}
\newcommand{\funcf}{\textsf{\textit{fn}}\xspace}
\providecommand{\LocROne}{\ensuremath{\sv{loc}(\constant{r1})}\xspace}
\def \decomposition#1 {\xrightarrow{~#1~}_D}
\def \insertion#1 {\xrightarrow{~#1~}_I}
\def \restriction#1 {\xrightarrow{~#1~}_R}
\def \fullchronicle {\ensuremath{\phi = (\plan_\phi, \assertions_\phi, \constraints_\phi)}\xspace}
\def \FapePlan {\procedure{Fape}}
\begin{document}

\title{~\\FAPE: a Constraint-based Planner for \\ Generative and Hierarchical Temporal Planning \\~\\~}

\author[1]{Arthur Bit-Monnot}
\author[1]{Malik Ghallab}
\author[1]{Félix Ingrand}
\author[2]{David E. Smith}
\affil[1]{LAAS-CNRS, Université de Toulouse, INSA, CNRS, France}
\affil[2]{Formerly at NASA Ames Research Center, USA}
\date{~}

\maketitle

\begin{abstract}

Temporal planning offers numerous advantages when based on an expressive representation. Timelines have been known to provide the required expressiveness but at the cost of search efficiency. We propose here a temporal planner, called FAPE, which supports many of the expressive temporal features of the ANML modeling language without loosing efficiency.

FAPE's representation coherently integrates flexible timelines with hierarchical refinement methods that can provide efficient control knowledge. A novel reachability analysis technique is proposed and used to
  develop {\em causal networks} to constrain the search space. It is employed for the design of informed heuristics, inference methods and efficient search strategies. Experimental results on common benchmarks in the field permit to assess the components and search strategies of FAPE, and to compare it to IPC planners. The results show the proposed approach to be competitive with less
  expressive planners and often superior when hierarchical control knowledge is provided. FAPE, a freely available system, provides other features, not covered here, such as the integration of planning with acting, and the handling of sensing actions in partially observable   environments.
\end{abstract}

\newpage
\setcounter{tocdepth}{2}
{\footnotesize  \tableofcontents}

\newpage

\section{Introduction}
\label{sec:intro}

There are numerous advantages in making time a central entity in planning. An explicit time representation permit in particular: 
\begin{LIST}
\item to model action duration;
\item to model effects, conditions, and resources borrowed or
  consumed by an action at various moments along its span,
  as well as delayed effects;
\item to handle goals with relative or absolute temporal constraints;
\item to plan with respect to expected future exogenous events;
\item to plan with actions that maintain a value while being
  executed, as opposed to just changing that value (e.g., tracking a
  target, keeping a spring latch in position);
\item to handle the concurrency of actions that have interacting and
  joint effects; and
  \item to allow for flexible plan execution through dispatching and synchronization mechanisms.
\end{LIST}
Several success stories in the application of automated planning  have illustrated these advantages, e.g., in logistics \cite{Wilkins:2008bv} or space planning \cite{Jonsson2000}.

Planning with explicit time relies either on a \textit{state-space representation} or a \textit{timeline representation}. The former uses temporally
qualified durations between states, i.e., instantaneous snapshots of the entire system.  The latter describes possible evolutions of
individual state variables over time, i.e., partial local views of state trajectories,  together with temporal constraints between elements of timelines.

Recent contributions to temporal planning have generally used a state-space representation based on
PDDL2.1~\cite{Fox2003}. This is explained by the wealth of search techniques, heuristics, and test domains  that have been developed for state-space planning, resulting in significant performance improvements. However, these planners, but of a few exceptions, do
not have all the abilities listed above, in particular the handling  of required concurrency.

The timeline representation is more expressive, as it focuses on local changes and partial plans, which correspond to entire sets of states.
A few timeline modeling languages have been proposed, e.g.,  IxTeT \cite{Ghallab1994}, AML \cite{Rabideau1999}, NDDL \cite{Frank2003}, or the
more expressive ANML \cite{anml2008}. The timeline representation is often perceived as having an efficiency drawback compared to state-space
planning: it mostly relies on plan-space search algorithms and CSP techniques, for which known heuristics are not as efficient as state-space heuristics. 
Only a few planners have been proposed for the timeline languages. In particular, to our knowledge, no planner but FAPE fully support the temporal or the hierarchial features of the powerful ANML language.

Our purpose in developing FAPE (\textit{Flexible Acting and Planning Environment}) was mainly to benefit from the expressivity of a representation combining timelines and task hierarchy. We developed a reachability analysis and heuristics which make our
timeline approach competitive. We specifically report here on the following contributions:
\begin{LIST}
\item[1.] We introduce a temporal planning representation, which consistently blends timelines with hierarchical refinement methods to allow for additional control knowledges;
\item[2.] We present a planning algorithm for the proposed representation, analyze its search space and prove its soundness and completeness;
\item[3.] We propose an original reachability analysis method for a relaxed problem integrating earliest times; we use it to develop specific
  causal networks and potential causal chains toward goals; we exploit the preceding structures for the design of informed heuristics;
\item[4.] We propose search strategies that take advantage of the designed heuristics and the hierarchical control knowledge, when available;
\item[5.] We present empirical evaluation results on classical benchmarks of the field, which allows us to assess the various components and search strategies of FAPE,  and to compare it to IPC planners; the results show that FAPE is competitive with less expressive planners, and generally superior when  hierarchical control knowledge is introduced.
\end{LIST}
Other features of FAPE are not covered here for the sake of focus, e.g., the integration with acting, or observation actions in partially observable environments \cite{bit-monnot-2016-ijcai,bit-monnot2016}.\footnote{FAPE and the tested domains are publicly available at: \url{https://github.com/laas/fape}}

The following five sections detail successively the five preceding items. 
Related work is discussed in \autoref{sec:soa},  before a conclusion.

\newcommand{\taskdependent}{dependent}

\section{A Hierarchical Time-oriented Representation}

Our representation is based on the ANML language \cite{anml2008}.
We use the same notion of actions whose conditions and effects are stated as assertions on multi-valued state variables.
Assertions are related to an action's start and end timepoints through temporal constraints and can appear at arbitrary times.
As in ANML, our representation allows for describing the expected evolution of the environment and temporally extended goals.

A subtle difference with ANML lies in the encoding of hierarchical knowledge.
A high-level action in ANML has subactions expressed through additional assertions on dedicated state variables.
Instead, we provide a separate  notion of \textit{task} which relates to the HTN planning.
This is complemented by an optionnal notion of \emph{task-dependency}, encoding the fact that an action must be part of a task network as in a classical HTN.
Task-dependency enables to handle generative planning as well as HTN planning domains in a uniform way.

\begin{example}[Running Example]\label{fig-running-example}
  For the purpose of illustrating the representation, we use a restricted version of the \emph{dock worker} domain of \cite{Ghallab2004}. There, a  harbor with several connected \emph{docks} is served by automated \emph{trucks} and \emph{cranes} that move containers between different locations to load and unload ships.  We have three primitive actions:
  \begin{LIST}
  \item \act{Move}$(r, d, d')$: truck $r$ moves from dock $d$ to dock $d'$ if the two  are \sv{connected}. A dock can contain a single truck.
    The duration depends on the distance between $d$ and $d'$.
  \item \act{Load}$(r,c,d)$: container $c$ is loaded onto a truck $r$ if both are in the same dock $d$ where a crane is available to perform the loading.
  \item \act{Unload}$(r,c,d)$: container $c$ is unloaded from a truck $r$ in a dock $d$ if $c$ is currently on $r$ and $r$ is at dock $d$ where a crane is available to perform the task.
  \end{LIST}
We are interested in planning the loading and unloading of  ships that will be docked at a specific location for a limited amount of time.\EndOfTheoremMark
\end{example}

\subsection{Main Components}

\paragraph{Temporal Variables and Constraints.}
We use a quantitative time representation based on timepoints.
We rely on temporal variables (e.g. $t,t_1$), designating a timepoint and ranging over integers.
Temporal variables are constrained through the usual arithmetic operators to specify absolute (e.g. $t \geq 9$) or relative constraints (e.g. $t_1 +1 \leq t_2 \leq t_3-2$).
Temporal variables are attached to specific events such as the start of an action or the instant at which a given condition must be fulfilled.

\paragraph{Atemporal Variables and Constraints.}
We consider a finite set of domain constants \objects, e.g., docks or trucks in \autoref{fig-running-example}.
A \emph{type} as a subset of \objects, e.g., $\constant{Docks} = \set{\constant{dock1}, \constant{dock2}, \constant{dock3}}$.
A type can be composed from other types by union or intersection (e.g. $\constant{Vehicles} = \constant{Ships} \cup \constant{Trucks}$).

An \emph{object variable} $o$ with type $T$ is a variable whose domain $\domain{o}$ is a subset of $T$.
A \emph{numeric variable} $i$ is a variable whose domain is a finite subset of the integers.%
\footnote{Note that numeric variables are distinct from temporal variables, which can have infinite domains.}

A \emph{constraint} over a set of variables $\set{x_1,\dots,x_n}$ is a pair $(\tuple{x_1,\dots,x_n}, \gamma_R)$ where $\gamma_R \subseteq \domain{x_1} \times \dots \times \domain{x_n}$ is the relation of the constraint, giving the allowed values for the tuple of variables. $\gamma_R$ can be given as a table of allowed values or a function, e.g., $\function{travel-time}(d,d') = \delta$ is met when $\delta$ is the time it takes from $d$ to $d'$.

Numeric variables can also appear in temporal constraints.
For instance, ($\function{travel{-}time}(d,d') = \delta) \AND (t_s + \delta \leq t_e$) enforces a delay $\delta$ between the timepoints $t_s$ and $t_e$  whose value is constrained by the time needed to travel from $d$ to a location $d'$.

\paragraph{State variables and fluents.}
The state evolution over time is captured by multi-valued \emph{state variables}.
A state variable maps time and a tuple of objects to an object.
For instance $\sv{loc}: \constant{Time} \times \constant{Trucks} \to \constant{Docks}$ gives the position of a truck over time.
The time parameter of state variables is usually kept implicit and we say that the state variable $sv(x_1,\dots,x_n)$ has the value $v$ at time $t$ meaning $sv(t,x_1,\dots,x_n) = v$.
A complete definition of the state of the environment at a given time is specified by taking a snapshot of all state variables at that moment.

A \textit{fluent} is a pair of a state variable $sv$ and its value $v$, denoted $\fluent{sv}{v}$, and is said to hold at time $t$ if $sv$ has the value $v$ at time $t$.

\paragraph{Temporally Qualified Assertions.}
An \textit{assertion} is a temporally qualified fluent.
We use \textit{persistence} and \textit{change} assertions  to express knowledge or constraints on the evolution of a state variable (as in \cite[Sec. 4.2.1]{Ghallab2016}).

A \emph{persistence} assertion, denoted $\persistencetuple{t_s,t_e}{sv}{v}$, requires the state variable $sv$ to keep the same value $v$ over the temporal interval $[t_s,t_e]$.
In planning domains, persistence assertions are typically used to express requirements such as goals or pre-conditions of  actions.
For instance, the persistence assertion $\persistencetuple{400,500}{\sv{loc}(\constant{r1})}{\constant{dock2}}$ can represent the objective that the truck $\constant{r1}$ is at $\constant{dock2}$ at time 400 and stays there through time 500.
We also allow a persistence assertion to be defined at an instant $t$ (denoted $[t]$) rather than over an interval.

A \emph{change} assertion, denoted $\changetuple{t_s,t_e}{sv}{v_1}{v_2}$, asserts that the state variable $sv$ changes from having the value $v_1$ at time $t_s$ to having value $v_2$ at time $t_e$.
It expresses knowledge on the evolution of the environment, whether it results from the proper dynamics of the environment or is a consequence of an agent's activity.
For an action, a change assertion represents a pre-condition that must be met ($sv$ must have the value $v_1$ at time $t_s$) and an effect of the action ($sv$ will have the value $v_2$ at time $t_e$).
Over the temporal interval $]t_s,t_e[$ the value of $sv$ is unspecified.
This allows to represent durative change on a state variable without explicitly specifying the details of the transition.
For instance, the change assertion $\changetuple{100,150}{\LocROne}{\constant{dock1}}{\constant{dock2}}$, means that the truck \constant{r1} will move from \constant{dock1} to \constant{dock2} over the temporal interval $[100,150]$.
The details of its location over $]100,150[$ are unspecified.

A temporally qualified \emph{assignment}, denoted  $\assignmenttuple{t}{sv}{v}$, asserts that the state variable $sv$ will take the value $v$ at time $t$.
For instance the assignment $\assignmenttuple{0}{\LocROne}{\constant{dock1}}$ states that the truck \constant{r1} is at location $\constant{dock1}$ at the initial time.
An assignment $\assignmenttuple{t}{sv}{v}$ is a special case of a change assertion $\changetuple{t-1,t}{sv}{any}{v}$ where $any$ is an unconstrained variable that can take any value.

Assertions can involve object and temporal variables.
A planner has to find activities to be performed and constraints on the variables such that the expected evolution of the system is both feasible and achieves the desired goals.

Given a temporally qualified assertion $\alpha = \persistencetuple{t_s,t_e}{sv}{v}$ or $\alpha =  \changetuple{t_s,t_e}{sv}{v_1}{v_2}$, we denote $t_s$ and $t_e$ as $start(\alpha)$ and $end(\alpha)$ respectively.

\paragraph{Timelines.}
A timeline is a pair $(\assertions,\constraints)$ where \assertions is a set of temporal assertions on a state variable and \constraints is a set of constraints on object and temporal variables appearing in \assertions.
A timeline gives a partial view of the evolution of a state variable over time.
For instance, the following timeline describes the whereabouts of a truck at different points in time.

\begin{example} \label{ex:timeline}
  The following is a timeline containing three temporally qualified assertions on the state variable $\LocROne$.
  \begin{align*}
    \langle & \set{ \change{t_1,t_2}{\LocROne}{\constant{dock1}}{d},~\persistence{t_2,t_3}{\LocROne}{d},~\change{t_4,t_5}{\LocROne}{d}{\constant{dock4}} } \\
            & \set{t_1 < t_2 < t_3 < t_4 <t_5,~\function{connected}(\constant{dock1},d),~\function{connected}(d,\constant{dock4}) }  \rangle\\
  \end{align*}%
  \begin{center}
    \begin{tikzpicture}[x=2cm,y=.5cm,scale=0.8, every node/.style=transform shape
      ]
      \coordinate (y) at (0,6);
      \coordinate (x) at (6,0);
      \foreach \i in {1,2,3,4,5} {
        \node[anchor=north] at (\i,0) {$t_\i$};
      }
      \draw[->] (0,0) -- node[sloped,midway,above] {$\LocROne$} (y);
      \draw[->] (0,0) --  (x) node[right] {\emph{time}};
      
      \draw[timeline/change] (1,1) node[below] {\constant{dock1}} -- (2,3);
      \draw[timeline/persistence] (2,3) -- node[above,sloped,midway] {$d$} (3,3);
      \draw[timeline/change] (4,3) node[below] {$d$}--  (5,5) node[above] {\constant{dock4}};
    \end{tikzpicture}
  \end{center}

  The truck \constant{r1} is at \constant{dock1} at time $t_1$.
  Over the time span $[t_1,t_2]$, it moves to a yet undetermined dock $d$.
  Its location over $]t_1,t_2[$ is unspecified.
  The truck is then constrained to stay at $d$ until time $t_3$.
  Its whereabouts are not constrained over the period $]t_3,t_4[$.
  However it must be at the same dock $d$ at time $t_4$ from where it will move to \constant{dock4}.
  The constraints impose a total order on all temporal events and restrict the possible instantiations of $d$ to places connected to both \constant{dock1} and \constant{dock4}.\EndOfTheoremMark

\label{ex-timeline}
\end{example}

A timeline may have uninstantiated temporal and object variables: it represents a set of possible evolutions of a state variable.
In \autoref{ex:timeline}, the state variable \LocROne  ~will go through different values depending on the value assigned to $d$. Assertions related to an instantiated state variable, such as $loc(r)$ for $r \in trucks$, refer to a set of timelines. 

A timeline $(\assertions,\constraints_1)$ is an instantiation of a timeline $(\assertions,\constraints_2)$ if \i $\constraints_2 \subseteq \constraints_1$ and \ii all variables in $\assertions$ and $\constraints_1$ are instantiated.
While a state variable can only have a single value, it is possible for two assertions to require conflicting values for the state variable.
For instance, two persistences \persistencetuple{t_1,t_2}{sv}{v} and \persistencetuple{t_3,t_4}{sv}{v'} are conflicting if ($[t_1,t_2] \cap [t_3,t_4] \neq \emptyset) \AND (v \neq v'$).
These constraints need to be taken into account in a timeline to unsure its consistency.

\begin{definition}[Possible Consistency of a Timeline]
  A timeline $(\assertions,\constraints)$ is \emph{possibly consistent} if it has an instantiation $(\assertions,\constraints')$ such that $\constraints'$ is a consistent set of constraints and the state variable has no conflicting values in this instance.
\end{definition}

\begin{definition}[Necessary Consistency of a Timeline]
  A timeline $(\assertions,\constraints)$ is \emph{necessarily consistent} if all its instantiations are consistent timelines.
\end{definition}

A timeline that is not possibly consistent cannot describe a valid evolution of its state variable.
A necessarily consistent timeline does not contain any conflicting assertions regardless of the choices made when binding its variables: it must have a set of constraints such that \i no two change assertions can overlap, \ii no persistence can overlap a change assertion, \iii any two overlapping persistences must be of the same value (by definition of a timeline, they are on the same state variable).
The timeline of Example~\ref{ex-timeline} is necessarily consistent because the temporal constraints impede assertions from overlapping.

\begin{definition}[Causal Support]
\label{def:causal-support}
An assertion $\alpha \in \assertions$ is causally supported in a  timeline $(\assertions,\constraints)$, for $\alpha = \persistencetuple{t_1,t_2}{sv}{v}$ or $\alpha = \changetuple{t_1,t_2}{sv}{v}{v'}$ iff:
\begin{LIST}
\item there is an assertion $\beta \in \assertions$ that produces $\fluent{sv}{v}$ at a time $t_0 \leq t_1$; $\beta$, called the causal supporter of $\alpha$, is either
 an assignment \assignmenttuple{t_0}{sv}{v} or if it is a change  \changetuple{t,t_0}{sv}{v''}{v}; and
\item $sv$ keeps the value $v$ from the end of $\beta$ until the start of $\alpha$, i.e., $\fluent{sv}{v}$ holds in $[t_0,t_1]$.
\end{LIST}
Any assignment assertion $\assignmenttuple{t}{sv}{v}$ is \emph{a priori} causally supported.\EndOfTheoremMark
\end{definition}

In \autoref{ex-timeline}, the persistence assertion, \persistence{t_2,t_3}{\LocROne}{d}  is causally supported by the first change assertion, 
\change{t_1,t_2}{\LocROne}{\constant{dock1}}{d}, since \emph{(i)} their temporal intervals meet, and \emph{(ii)} the fluent \fluent{\LocROne}{d} produced is the one required.
However none of the two change assertions are causally supported.
This might seem surprising for the second change assertion because the fluent \fluent{\LocROne}{d} is produced by the first assertion.
This fluent is however not constrained to hold during $]t_3,t_4[$ and, as a consequence, \LocROne might still be modified during that interval.
This second change assertion could be made causally supported by the addition of a persistence condition ending at $t_4$, e.g. \persistence{t_3,t_4}{sv}{d}.
Another way of supporting it would be the introduction of a new change assertion $\changetuple{t,t_4}{sv}{d'}{d}$, such an assertion would itself need to be causally supported.

To have all assertions of a timeline causally supported, the earliest one must be an \emph{assignment}, which is by definition \emph{a priori} supported.

\subsection{Tasks and Action Templates}
\label{sec:actions}

An action has a unique name and a set of parameters.
The pre-conditions and the effects of an action are encoded by a set of temporally qualified assertions.
A set of constraints restrict the allowed values taken by the  parameters and temporal variables in the action.
We augment this usual view of an action with hierarchical features: \i we define a set of parametrized task symbols \tasks, \ii each action is associated with a task symbol in \tasks representing the task this action achieves, \iii an action can have a set of subtasks representing tasks that must be achieved for this action to provide its desired effects.
Furthermore, an action can be marked as \emph{task dependent} in which case it cannot be used freely by the planner but only as a subtask of some other action.

An action template $A$ is a tuple $(head(A), task(A), \taskdependent, subtasks(A), \assertions_ {A}, \constraints_{A})$ where: 
\begin{LIST}
\item $head(A)$ is the name and the list of typed parameters of $A$.
  The parameters refer to object variables. Temporal variables are kept implicit in the parameter list; $\tstart$ and $\tend$  are respectively the start and end timepoints of $A$. An action template may also use  a \textit{duration variable} which is not a binding argument  nor a free variable, but a mean to couple temporal and object constraint networks (see \autoref{sec:dur-var}).
\item $task(A) \in \tasks$ is the task it achieves with parameters from those of A.
\item $\taskdependent \in \set{\true,\false}$ is true if the action is \emph{task dependent}, otherwise the action is said to be \emph{free}. 
The  intuition (formalized in the next section) is that a \emph{task dependent} action can only be inserted in a plan if it achieves a task whose achievement was required either in the problem definition or through a subtask of another action.%
  \footnote{
    The current draft of the ANML manual proposes a notion similar to our task-dependency, by the introduction of the keyword \emph{motivated} \cite{AnmlManual}.
    When placed in an action $A$, an instance $a$ of $A$ can only appear in the plan if there is an action instance $b$ that has a subtask achieved by $a$ and such that $[start(a),end(a)] \subseteq [start(b),end(b)]$.
    Conceptually, the presence of such an action must be ``motivated'' by the presence of a higher level action that requires its presence and temporally envelops it.
    Our \emph{task-dependency} setting differs as it does not require the ``motivating'' task to be part of an action nor the subaction to be temporally contained by the said action.
    This simple difference allows us to motivate the presence of actions from tasks placed in the problem definition.
    This capability is key in relating our model to HTN planning in which all actions are derived from the initial task network.
  }
\item $subtasks(A)$ is a set of temporally qualified subtasks.
  For any action instance in a plan, a subtask $\timedfacttuple{t_s,t_e}{\tau}$ states that the plan should also contain an action instance $a$ that \i starts at $t_s$, \ii ends at $t_e$, and \iii has $task(a) = \tau$.
  The timepoints $t_s$ and $t_e$ need not be grounded; they are related to other timepoints through temporal constraints, possibly inducing a partial order on the subtasks.
\item $(\assertions_ {A},\constraints_{A})$ is a set of  possibly consistent timelines.
 $\assertions_{A}$ provides the conditions and effects of $A$ on some state variables through persistence and change assertions.
 $\constraints_{A}$ imposes constraints on the possible instantiations of the temporal and object variables of $A$.
\end{LIST}

In hierarchical planning, one often distinguishes between high-level actions (or abstract tasks) and primitive actions (or primitive tasks).
A \emph{high-level action} provides a \textit{method} to achieve a given task, expressed as a set of conditions and subtasks.  A \emph{primitive action} is intended to be executed with a command whose effects are represented as assertions on state variables.
While our model does not make a distinction between the two, we will use the same terminology when needed to convey the role of an action in a planning model.

\autoref{fig:move-action} illustrates a primitive action template for moving a robot $r$ from dock $d$ to $d'$.
It requires $r$ to be in $d$ at $\tstart$, and $d'$ to be empty at some point $t'$ before $\tend$.
Its effects are to make the location of the truck be $d'$ at $\tend$, and to have $d$ empty at some point $t$ after $\tstart$ and before $t'$.
The action has no subtask and is the only achiever of the eponymous task $\tsk{move}(r, d, d')$.
\autoref{fig:transport-method} exemplifies  high-level actions offering two methods for achieving the \tsk{transport} task of moving a container $c$ to a location $d$.

\begin{figure}[htbp]
 \centering\fbox{
 \begin{pcode}
    \phead{$\act{move}(r, d, d')$}
    \pkey{task} $\tsk{move}(r,d,d')$ 
    \pkey{dependent} no
    \pkey{subtasks} $\emptyset$ 
    \pkey{assertions}  $\change{\tstart, \tend}{\sv{loc}(r)}{d}{d'}$ 
    \1 $\change{\tstart,t}{\sv{occupant}(d)}{r}{\nil}$ 
    \1 $\change{t', \tend}{\sv{occupant}(d')}{\nil}{r}$ 
    \pkey{constraints} $\function{connected}(d,d') = \true$  
    \1 $\tend - \tstart = \function{travel{-}time}(d,d')$
    \1   $t<t'$ 
  \end{pcode}}
  \caption{
    An action to move a truck $r$ from dock $d$ to dock $d'$. The duration of the action $(\tend - \tstart)$ is set to be equal to the travel time from $d$ to $d'$.
     $\function{connected}(d,d') = \true$ is a constraint on objects variables and requires that the two docks be connected, a temporally invariant property of the planning domain.
    Left implicit for readability are the types of variables and the constraints $\tstart < \tend, \tstart<t$ and $t'< \tend$.
  }
  \label{fig:move-action}
\end{figure}

\begin{figure}[htbp]
  \centering
  \begin{multicols}{2}
   \fbox {\begin{pcode}
      \phead{\act{m1-transport}$(c,d)$} 
      \pkey{task} $\tsk{transport}(c,d)$ 
      \pkey{\taskdependent} yes
      \pkey{assertions}  $\persistence{\tstart,\tend}{\sv{pos}(c)}{d}$
      \pkey{subtasks} $\emptyset$
      \pkey{constraints}  $\emptyset$
    \end{pcode}}   
    
    \fbox{\begin{pcode}
      \phead{\act{m2-transport}$(r,c,d_s,d)$} 
      \pkey{task} $\tsk{transport}(c,d)$ 
      \pkey{\taskdependent} yes
      \pkey{assertions} $[\tstart] \sv{loc}(r) = d_s$ 
      \1 $[\tstart] \sv{pos}(c) = d_s$ 
      \pkey{subtasks} $\timedfact{\tstart,t_1}{\tsk{load}(r,c,d_s)}$ 
      \1 $\timedfact{t_2,t_3}{\tsk{move}(r,d_s,d)}$ 
      \1  $\timedfact{t_4, \tend}{\tsk{unload}(r,c,d)}$ 
      \pkey{constraints}  $\sv{connected}(d_s, d)$ 
      \1 $d_s \neq d$
      \1 $t_1 < t_2 < t_3 < t_4$
    \end{pcode}}
  \end{multicols}
  \caption{
    High-level actions for achieving the task of transporting a container $c$ to a location $d$.
    The first one requires nothing to be done if $c$ is already at its destination $d$.
    The second one states that transporting $c$ from $d_s$ to $d$ can be achieved by a sequence of \act{load}, \act{move} and \act{unload} subtasks, using a truck $r$.}
  \label{fig:transport-method}
\end{figure}

\subsection{Chronicles}\label{actions-as-chronicles}

A \emph{chronicle} is a triple $(\plan, \assertions, \constraints)$ where \plan is a partial plan composed of action instances and unrefined tasks  while (\assertions, \constraints) is a
set of timelines. A chronicle is a temporal and hierarchical extension of partial plans,
as in the plan-space planning approaches.
It extends the existing notion of chronicle \cite[Sec. 4.2.4]{Ghallab2016} with a new member $\plan$ that keep tracks of tasks and actions in the plan.

A planning domain is a tuple $\Sigma = (\objects,\statevariables,\tasks,\actions)$ specifying a set of typed domain objects \objects, a set of state variables \statevariables, a set of task symbols \tasks, and of action templates \actions. A planning problem is a pair $\tuple{\Sigma,\chronicle_0}$ where 
 the chronicle $\chronicle_0 = (\plan_0, \assertions_0, \constraints_0)$ defines:
\begin{LIST}
\item $\plan_0$ a set of temporally qualified tasks that must be achieved.
\item $\assertions_0$ a set of temporally qualified assertions that describe the initial state of the environment and its expected evolution together with a set of goals.
  The initial state and its evolution is typically depicted by \emph{assignment assertions} (that do not require causal support).
  Goals are typically represented by \emph{persistence assertions} whose causal support will require additional activity to be considered by the planner.
\item $\constraints_0$ is a set of constraints restricting the allowed values for the temporal and object variables in $\plan_0$ and $\assertions_0$.
They express temporally extended goals, timed initial literals and ordering constraints on the tasks to be achieved.
\end{LIST}

The chronicle below represents a planning problem with two trucks \constant{r1} and \constant{r2}, initially at \constant{dock1} and \constant{dock2} respectively, and a ship that is expected to be docked at \constant{pier1} at a future interval of time.
The problem is to perform a \tsk{transport} task of moving container \constant{c1} that is initially on \constant{ship1} to \constant{dock3} and to have the two trucks in their initial locations  at the end.
Note that $\phi_0$ states the planning objectives \i as tasks to perform, as in HTN, and \ii as goals to satisfy, as in generative planning.

\begin{figure}[htbp!]
  \centering\fbox{
  \begin{pcode}
    \phead{$\phi_0$} 
    \pkey{tasks} $\timedfact{t,t'}{\tsk{transport}(\constant{c1}, \constant{dock3})}$
    \pkey{assertions} $\assignment{\tstart}{\sv{loc}(\constant{r1})}{\constant{dock1}}$ 
    \1 $\assignment{\tstart}{\sv{loc}(\constant{r2})}{\constant{dock2}}$ 
    \1 $\assignment{\tstart}{\sv{on}(\constant{c1})}{\constant{ship1}}$
    \1 $\assignment{\tstart+10}{\sv{docked}(\constant{ship1})}{\constant{pier1}}$
    \1 $\assignment{\tstart+\delta}{\sv{docked}(\constant{ship1})}{\nil}$
    \1 $\persistence{\tend}{\sv{loc}(\constant{r1})}{\constant{dock1}}$
    \1 $\persistence{\tend}{\sv{loc}(\constant{r2})}{\constant{dock2}}$
    \pkey{constraints}   $\tstart<t < t'< \tend \wedge 20 \leq \delta\leq30 $
  \end{pcode}}
  \caption{The initial chronicle of a planning problem.}
  \label{fig:initial-chronicle}
\end{figure}

Planning with this representation is a sequence of transformations of a chronicle, starting from $\phi_0$, until a solution plan is found.
We now describe the possible transformations and the conditions a solution plan must meet.

\subsection{Plan: Transformations and Solutions} 
\label{sec:transformation-solutions}

HTN planners build plans by systematically decomposing tasks, while generative planers synthesizes them by iteratively introducing new action instances. In FAPE, both processes are combined, interleaved with the insertion of temporal and binding constraints.

\subsubsection{Task Refinement}
\label{sec:task-decomposition}

A chronicle $\chronicle=(\plan_\chronicle, \assertions_\chronicle, \constraints_\chronicle)$ can be refined into a chronicle $\phi' = (\plan_{\phi'}, \assertions_{\phi'}, \constraints_{\phi'})$ by decomposing an unrefined task $\tau \in \plan_\phi$ with a new action instance $a$ such that $task(a) = \tau$.
This transformation is denoted by $\phi \decomposition{\tau, a} \phi'$ and results in the following $\chronicle'$:
\begin{align*}
  \plan_{\phi'} & \gets \plan_\phi \cup  \set{a} \cup subtasks(a) \setminus \{\tau \} \\
  \assertions_{\phi'} &\gets \assertions_\phi \cup assertions(a) \\
  \constraints_{\phi'} &\gets \constraints_\phi \cup constraints(a) \cup \set{ task(a) = \tau}
\end{align*}

The refining action $a$ may introduce additional tasks and assertions representing both the conditions and the effects of this action.
In addition to the constraints from the action template, the constraint $task(a)=\tau$ unifies all parameters of $task(a)$ and $\tau$ and enforces $a$ to start and end at the times specified by the temporal qualification of this task.

\subsubsection{Action Insertion}
\label{sec:action-insertion}

A chronicle \fullchronicle can be refined into a chronicle $\phi' = (\plan_{\phi'}, \assertions_{\phi'}, \constraints_{\phi'})$ by the insertion of a \emph{free} action instance $a$, i.e., $a$ is not task-dependent.
This transformation is denoted by $\chronicle \insertion{a} \chronicle'$ and results in the following $\chronicle'$:
\begin{align*}
  \plan_{\phi'} & \gets \plan_\phi \cup  \set{a} \cup subtasks(a) \\
  \assertions_{\phi'} &\gets \assertions_\phi \cup assertions(a) \\
  \constraints_{\phi'} &\gets \constraints_\phi \cup constraints(a)
\end{align*}

Note that task dependent actions can only be used through task decomposition and must respect all constraints specified on that task.

\subsubsection{Plan Restriction Insertion}
\label{sec:restriction-insertion}

A chronicle \fullchronicle can be refined into a chronicle $\phi' = (\plan_{\phi'}, \assertions_{\phi'}, \constraints_{\phi'})$ by the insertion of restrictions $(\assertions,\constraints)$, where \assertions is a set of persistence assertions and \constraints is a set of constraints over temporal and object variables.
This transformation is denoted by $\chronicle \restriction{(\assertions,\constraints)} \chronicle'$ and results in the following $\chronicle'$:
\begin{align*}
  \plan_{\phi'} & \gets \plan_\phi \\
  \assertions_{\phi'} &\gets \assertions_\phi \cup \assertions \\
  \constraints_{\phi'} &\gets \constraints_\phi \cup \constraints
\end{align*}

This transformation restricts a chronicle by either \i  adding persistence conditions to achieve the causal support of an assertion, (as for causal links in plan-space planning), or \ii   restricting allowed values of some variables to remove potential inconsistencies in the chronicle, for instance by imposing an order on two actions with conflicting requirements, or to unify task and actions instances that can be unified.

\subsubsection{Reachable and Solution Plans}

A chronicle $\phi'$ is \emph{reachable} from a chronicle $\phi$ if there is a sequence of task refinements, action insertions, and plan restrictions that transformes \chronicle into $\chronicle'$.
The planner has to find a sequence of transformations such that the resulting chronicle corresponds to a feasible plan.

\begin{definition}[Solution plan]\label{def:solution-plan}
  A chronicle $\chronicle^* = (\plan^*,\assertions^*,\constraints^*)$ is a solution to a planning problem $(\Sigma,\chronicle_0)$ if:
  \begin{LIST}
  \item $\chronicle^*$ is reachable from $\chronicle_0$,
  \item $\plan^*$ has no unrefined tasks,
  \item all assertions in $\assertions^*$ are causally supported.
  \item $(\assertions^*,\constraints^*)$ is a necessarily consistent set of timelines.\EndOfTheoremMark
  \end{LIST}
\end{definition}

The solution plan is given by the actions in $\plan^*$ with the constraints from $\constraints^*$.
Since $\chronicle^*$ must be reachable from $\chronicle_0$, the actions in $\chronicle^*$ must fulfill hierarchical constraints: if an action in $\plan^*$ is \emph{task-dependent} then it was inserted through task refinement and must respect all constraints placed on the task it refines.
Furthermore, all assertions in $\chronicle^*$ are causally supported. Finally, all instantiations of variables in $\plan^*$ are consistent. Uninstantiated variables allow to delay choices  until execution time. 
This is typically the case for temporal variables whose instantiation is often decided dynamically at execution time.

\subsection{Discussion}
\label{sec:discussion}

The proposed representation builds on the rich temporal semantics of ANML.
It supports \i temporal actions with assertions placed at arbitrary timepoints, \ii the description of the current state of the environment as well as its expected evolution over time and \iii temporally extended goals.
It also supports a unique mix of generative and hierarchical planning.
The notion of task-dependency allows for a seamless integration of generative and hierarchical models especially allowing for partial hierarchies and the capability of allowing task insertion.

This representation extends the expressivity of existing planners allowing to encode generative planning problems (i.e. with no tasks nor task-dependent actions) as well as HTN problems (i.e. where all actions are task-dependent). 
In the following sections, we introduce a planning algorithm supporting this representation as well as several dedicated search control methods.

\section{A Plan-Space Planning Procedure}
\label{sec:planning-procedure}

\subsection{Overview}

FAPE extends the chronicle planning approach \cite[Sec. 4.3]{Ghallab2016}, which was initially proposed in IxTeT \cite{Ghallab1994}.
Our contribution is a planning procedure that seamlessly supports both hierarchical and generative planning.

Given a planning problem with an initial chronicle $\phi_0$, the planner attempts to transform $\phi_0$ into a solution chronicle $\phi^*$ where all tasks have been refined, all assertions are causally supported  and the set of temporal assertions defines necessarily consistent timelines.
For that, the planner detects \emph{flaws} in a chronicle: features that prevent the chronicle from being a solution.
A \emph{flaw} in a chronicle $(\plan,\assertions,\constraints)$ is either:
\begin{LIST}
\item an \emph{unrefined task} $\tau \in \plan$,
\item an \emph{unsupported assertion} $\alpha \in \assertions$,  or
\item \textit{conflicting assertions}: a pair of assertions $(\alpha,\beta) \in \assertions\times\assertions$  that can be conflicting given $\constraints$. 
\end{LIST}

Each type of flaw matches one of the necessary conditions for a chronicle to be a solution plan (\autoref{def:solution-plan}); the last two types generalize respectively the notions of open-goals and threats in plan-space planning.
Every flaw must be resolved to transform a chronicle into a solution plan. Resolving a flaw requires the application of one or several plan transformations (\autoref{sec:transformation-solutions}).

\autoref{alg:plan} is a nondeterministic abstract view of the planning procedure.
For a given chronicle, the planner chooses a flaw $\varphi$ to solve.
Since all flaws must eventually be solved, this choice is not a backtracking point.
Next, the planner nonderterministically chooses of a resolver $\rho$ to handle the flaw; it may backtrack on this choice.
The resolver results in a transformation of  $\phi$ into a refined chronicle $\phi'$ in which the flaw $\varphi$ is absent, and from which the search proceeds recursively.
The algorithm returns when it encounters a solution plan (i.e. a flaw-less chronicle) or a deadend (i.e. a flaw with no resolvers).

\begin{algorithm}[!htb]
\begin{algorithmic}
\Function{\FapePlan}{$\Sigma,\phi$}
    \State $Flaws \gets$ flaws in $\phi$
    \If{$Flaws = \emptyset$} \Return $\phi$ \EndIf
    \State $\varphi \gets$ select a flaw in $Flaws$
    \State $Resolvers \gets$ resolvers for $\varphi$
    \If{$Resolvers = \emptyset$} \Return failure \EndIf
    \State nondeterministically choose $\rho \in Resolvers$
    \State $\phi' \gets$ \Call{Transform}{$\phi, \rho$}
    \State \Return \Call{FapePlan}{$\Sigma,\phi'$}
  \EndFunction
\end{algorithmic}
\caption{\FapePlan algorithm: returns a solution
  plan achieving the tasks and goals in $\phi$ for the domain $\Sigma$, or failure if none exists.}
\label{alg:plan}
\end{algorithm}

This procedure follows the general refinement planning framework \cite{kambhampati1995,Schattenberg2009}, which has been used by partial order planners  \cite{Penberthy1992,Younes2003}, generative temporal planners \cite{Ghallab1994,Vidal2006},  and hierarchical planners, e.g.,  HiPOP \cite{Bechon2014} and PANDA \cite{Schattenberg2009}.
The differences between all those planners lie in four aspects: \i their definition of flaws and resolvers, \ii the type of constraints and propagation used to reason about variables in the plan, \iii their internal representation of a chronicle, and \iv the strategy for exploring the search space

The remainder of this section details successively these first three aspects, while the search guidance will be addressed in \autoref{sec:search-control}.

\subsection{Flaws and Resolvers}
\label{flaws-resolvers}

The definition of flaws is critical for the soundness of the planning procedure. Their  resolvers condition the completeness. Ill-defined resolvers may also result in redundancies in the search space. Let consider the three types of flaws and their resolvers.

\subsubsection{Unsupported Assertions}
\label{sec:unsupported-assertions}

The first type of flaw is a lack of causal support for some assertion in the current chronicle.
The planner incrementally tracks this property by associating every assertion that requires causal support to an assertion that produces the desired fluent.
The causal link relation $\beta \rightarrow \alpha$ denotes that the assertion $\beta$ is the causal supporter of $\alpha$.
A causal link is created by inserting an additional persistence assertion \persistence{end(\beta),start(\alpha)}{sv}{v}, which prevents any change on the value of the state variable $sv$ from the end of $\beta$ until the start of $\alpha$.

With the exception of \emph{a priori} supported assertions, the planner assumes all assertions to be unsupported until a causal link is added to the chronicle.
  Any change or persistence assertion that does not have an incoming causal link constitutes an \emph{unsupported assertion flaw}.
Such a flaw in a chronicle $\chronicle$ is solved by finding another assertion $\beta$ that can serve as a support: either $\beta$ is already in $\assertions_\chronicle$, or $\beta$ can be added through the insertion of an action.

Action insertion must take into account the hierarchical features of the domain.
Instead of directly inserting an action that provides causal support, we select or create a refinement tree in which the supporting assertion will be chosen.

\paragraph{Possible effects.}
\newcommand{\optimisticeffects}[1]{\ensuremath{E_{#1}^+}\xspace}

We say that the fluent $f = \tuple{sv=v}$ is a \emph{direct effect} of an action $a$ if  $a$ has an assertion of the form $\changetuple{t,t'}{sv}{v'}{v}$ or  \assignmenttuple{t''}{sv}{v}. 

Possible effects represent fluents that can be produced as part of the refinement tree of a task or an  action. A fluent $f$ is a \emph{possible effect} of an action $a$, denoted $f \in \optimisticeffects{a}$, if it is either a direct effect of $a$ or a possible effect of the refinement of one of its subtasks. A fluent $f$ is a \emph{possible effect} of a task $\tau$, denoted $f \in \optimisticeffects{\tau}$, if $f$ is a possible effect of an action $a$ that refines $\tau$.

\begin{align*}
\optimisticeffects a &= \textit{direct\_effects}(a)  \Union_{\tau \in \textit{subtasks}(a)} ~~\optimisticeffects{\tau}\\
  \optimisticeffects{\tau} &=  \bigcup_{m~\in~\{ a~|~\textit{task}(a) = \tau \}} \optimisticeffects{m}                           
\end{align*}

\def \delayposeff {\ensuremath{\Delta_{PosEff}}\xspace}

The possible effect $f$ of an action $a$ (resp. a task $\tau$) is also associated with a duration representing the minimal delay from the moment the action (resp. task) starts to the instant at which the possible effect can be produced, noted $\delayposeff(a,f)$.
This minimal delay is computed by taking an optimistic view of the delays enforced as temporal constraints in the actions, as illustrated in the following example.

\begin{example}
  The \act{move}(r,d,d') action of \autoref{fig:move-action} has a direct effect $\fluent{\sv{loc}(r)}{d'}$. %
  It is produced at time $\tend$, thus its associated delay is the minimal duration of the action: $\tend-\tstart$.  

  \[\delayposeff(\act{move}(r,d,d'),\fluent{\sv{loc}(r)}{d'}) = \text{min-delay}(\tstart,\tend) = \function{travel-time}(d,d')\]

  The fluent $\fluent{loc(r)}{d'}$ is also a possible effect of the action \act{m2-transport} (\autoref{fig:transport-method}) because the presence of its \tsk{move} subtask means it can be  produced as a result of inserting \act{m2-transport} in the plan.
  The delay associated with the possible effect is defined recursively as:
  \begin{align*} 
    \delayposeff(\act{m2-transport}&(r,c,d,d'),\fluent{\sv{loc}(r)}{d'}) \\ &= \text{min-delay}(\tstart,t_2) + \delayposeff(\act{move}(r,d,d'),\fluent{\sv{loc}(r)}{d'}) \\
                                   &= \text{min-delay}(\tstart,t_2) + \function{travel-time}(d,d') \\
                                   &= \text{min-delay}(\tstart,t_1)+\text{min-delay}(t_1,t_2) + \function{travel-time}(d,d') \\
                                   &= duration(\act{load}(r,d,c)) +1 + \function{travel-time}(d,d')
  \end{align*}
  The computation of minimal delays is done by a shortest path computation in an STN containing all timepoints and temporal constraints of the action.\EndOfTheoremMark
\end{example}

A supporting assertion $\beta$ for an unsupported assertion $\alpha$ is either:
\begin{LIST}
\item  already in the plan,
\item  introduced by decomposing an unrefined task $\tau \in \plan$, such that $\tau$ has a possible effect supporting $\alpha$, denoted $cond(\alpha) \in \optimisticeffects{\tau}$), or
\item introduced through the addition of a \emph{free} action $a$, such that $a$ has a possible effect supporting $\alpha$, i.e., $cond(\alpha) \in \optimisticeffects{a}$.
\end{LIST}

It is impractical for the planner to branch over all the assertions that can be used for causal support.
Instead, the planner branches on the choice of a source for providing the causal supporter, e.g, for a given unsupported assertion, the planner commits to select its causal supporter in the set of assertions resulting from the future refinement of a particular unrefined task $\tau \in \plan_\chronicle$.

\paragraph{Supporting tasks commitments.}
When doing hierarchical planning, it might happen that several decomposition steps occur before a task is fully decomposed into primitive actions.
On the contrary in plan-space planning, the resolution of an unsupported condition typically requires the immediate insertion of a supporting action in the plan.
To reconcile the two approaches, we now present a mechanism that restricts the possible supporters of a condition to the actions descending from a particular task. 
Effectively this allows expressing search commitments of the form "a condition $c$ must be supported by the yet unrefined task $\tau$" and its corollary "$\tau$ must decomposed in a way that supports $c$".
This construct is to be exploited by the resolvers that we present immediately after.

\newcommand{\descendanttasks}[1]{\ensuremath{DT_{#1}}}
We associate an unsupported assertion $\alpha$ to a set of tasks \descendanttasks{\alpha} representing a commitment on the origin of the supporter for $\alpha$.
More specifically, for any task $\tau \in \descendanttasks{\alpha}$, 
the causal supporter of $\alpha$ can only be chosen in the possible effects of an action that is a descendant of $\tau$.
An action is a descendant of $\tau$ if it refines $\tau$ or if it refines a subtask of an action descending from $\tau$.

This mechanism allows tying the resolution of an unsupported assertion to a tree or subtree of the task network.
$\descendanttasks{\alpha}$ is initially empty and might be extended during search to track commitments made on the origin of the support of $\alpha$.

If there is an unrefined task $\tau \in \descendanttasks{\alpha}$, the resolution of $\alpha$ is postponed until $\tau$ is refined.

\paragraph{Resolvers.}

Let us now explore the three different resolvers for an unsupported assertion $\alpha = \persistencetuple{t_s,t_e}{sv}{v}$ (resp. an unsupported change assertion $\alpha = \changetuple{t_s,t_e}{sv}{v}{v'}$):
\begin{LIST}
\item \textbf{Direct Supporter.}
Let $\eventsupporters{\alpha}$ be the set of all assertions $\beta \in \assertions_\chronicle$ such that $\beta$ produces $\tuple{sv=v}$.
For any change assertion $\beta \in \eventsupporters{\alpha}$, a possible resolver for $\alpha$ is  a causal link $\beta \rightarrow \alpha$.

The transformation resolving the unsupported assertion $\alpha$ is a plan restriction that introduces a causal link from $\beta$ to $\alpha$: $\chronicle \restriction{(\set{\beta \rightarrow \alpha},C)} \chronicle'$.
The additional set of constraints $C$ simply enforces that the value produced by $\beta$ is the one needed by $\alpha$.
Furthermore, the assertion representing the causal link is \emph{a priori} supported since it is inseparable from $\beta$ that causally supports it.

For instance, if the assertion $\changetuple{t_1,t_2}{loc(\constant{r1})}{\constant{d1}}{d}$ was selected to be the causal support of an assertion $\persistencetuple{t_3,t_4}{loc(r)}{\constant{d4}}$, we would have an additional set of constraints \mbox{$t_2 \leq t_3 \wedge \constant{r1}=r \wedge d=\constant{d4}$}.
The causal link would take the form of the persistence assertion $\persistence{t_2,t_3}{\LocROne}{\constant{d4}}$.

To avoid introducing redundancies in the search space, we must also take into account any previous commitment that has been made regarding the source of the causal support of $\alpha$.
For this reason, an assertion $\beta$ is only considered as possible supporter if the action that added $\beta$ is a descendant of all tasks $\tau \in \descendanttasks{\alpha}$.
Put otherwise, the supporter must originate from the refinement trees that were previously selected.

\item \textbf{Delayed support from existing task.}
Another possible source of supporting assertions is in the unrefined tasks of $\phi$.
Let \tasksupporters{\alpha} be a subset of the unrefined tasks $\tau \in \plan_\chronicle$ such that: 

\begin{LIST}
\item $\tuple{sv=v}$ is a possible effect of $\tau$, i.e., $\fluent{sv}{v} \in \optimisticeffects{\tau}$, and
\item there is enough time for the possible effect to occur before $\alpha$, meaning that the addition of the temporal constraint $start(\alpha) - start(\tau) \geq \delayposeff(\tau,\fluent{sv}{v})$ does not make the temporal network inconsistent.
\end{LIST}

For any unrefined task $\tau \in \tasksupporters{\alpha}$, a possible resolver for $\alpha$ is to add $\tau$ to \descendanttasks{\alpha}. While this does not directly resolve the flaw, it makes a commitment to a subset of resolvers in the possible effects of $\tau$. The choice of a supporting assertion for $\alpha$ will be delayed until $\tau$ is refined. The chosen refinement for $\tau$ will need to provide an enabler for $\alpha$.
It should be noted that this resolver does not bring any change to the chronicle itself but constrains the planner to a restricted set of solution plans.

Once again, to avoid redundancies in the search space, we must account for the previous commitments made on the causal supporter of $\alpha$.
For this reason, we only consider as a resolver a task $\tau \in \tasksupporters{\alpha}$ if, for any task $\tau' \in \descendanttasks{\alpha}$, $\tau$ is a descendant of $\tau'$.

\item \textbf{Delayed support from new action.}
The last possible source of supporting assertions comes from the introduction of free actions that have $\fluent{sv}{v}$ as a possible effect.

If the planner already made a commitment to support $\alpha$ from a particular task (i.e., $\descendanttasks{\alpha} \neq \emptyset$), then no such resolver is applicable. 
Indeed, any supporter must appear by means of task refinement only.
When $\descendanttasks{\alpha}=\emptyset$ we need to consider the insertion of free actions outside of any existing decomposition trees.
Let \actionsupporters{\alpha} be a set of action templates $A$ such that $\fluent{sv}{v} \in \optimisticeffects{A}$.
For any action  $A \in \actionsupporters{\alpha}$, a possible resolver for $\alpha$ is brought by inserting an instance $a$ of  $A$ in $\phi$ and adding $task(a)$ to \descendanttasks{\alpha}.
This resolver ensures that $\alpha$ will be supported either by a direct effect of $a$ or by an effect of one of its subactions. The choice of which assertion will support $\alpha$ is delayed.
\end{LIST}

\medskip
To summarize, every assertion needs a causal supporter with and explicit causal link.
We distinguish three cases for the selection of the causal supporter depending on whether it is already in the plan, can be introduced by refining pending tasks in the plan, or requires the introduction of new actions outside of existing decomposition trees.
In the two latter choices, the selection of the actual causal supporter is delayed until the task network is refined.

\subsubsection{Unrefined Tasks}
\label{sec:unrefined-tasks}

A task is \emph{unrefined} if \textit{(i)} it has been required either in the problem specification or as a subtask of an action in the plan, and \textit{(ii)} it has not been refined by a \emph{task refinement} transformation (\autoref{sec:task-decomposition}).
Given a chronicle $\chronicle = \tuple{\plan_\chronicle,\assertions_\chronicle,\constraints_\chronicle}$, any task $\tau$ appearing in $\plan_\chronicle$ is an unrefined task.
Resolving an unrefined task requires selecting an action that achieves it and applying the corresponding task decomposition transformation. Our procedure extends the usual HTN task decomposition to account for the commitments previously made while resolving unsupported assertion flaws.

Let $\tau: \tsk{tsk}(x_1,\dots,x_n)$ be an unrefined task, and %
$\possiblerefinements{\tau}$ be the set of action templates $A$ such that $task(A)$ is unifiable with $\tau$.
For any action template $A \in \possiblerefinements{\tau}$, a possible resolver to the unrefined task $\tau$ is to apply the decomposition transformation $\chronicle \decomposition{\tau,a} \chronicle'$, where $a$ is an instance of $A$.
In the resulting chronicle $\chronicle'$, the unrefined task $\tau$ has been replaced by a new action instance $a$.

Furthermore, we need to account for the possible commitments already made for the support of unsupported assertions.
For this reason, we only consider as a resolver an action template $A \in \possiblerefinements{\tau}$ if 
 for any unsupported assertion $\alpha$ such that $\tau \in \descendanttasks{\alpha}$, $A$ has a possible effect $e \in \optimisticeffects{\tau}$ that could support $\alpha$.

\subsubsection{Possibly Conflicting Assertions}

This flaw occurs when a chronicle $\chronicle =(\plan_\chronicle,\assertions_\chronicle,\constraints_\chronicle)$ has an instantiation that makes two assertions conflicting.
We start by studying the case of persistence assertions and then consider change assertions.

Two persistence assertions $p_1=\persistencetuple{t^s_1,t^e_1}{sv_1}{v_1}$
  and $p_2=\persistencetuple{t^s_2,t^e_2}{sv_2}{v_2}$ are conflicting when they concurrently require distinct values for the same state variable.
  More precisely they are \textit{possibly conflicting} when the conjunction of these three conditions holds:

  \begin{LIST}
  \item they can refer to the same state variable, that is $sv_1$ and $sv_2$ are respectively of the form $\sv{sv}(x_1,\dots,x_n)$ and $\sv{sv}(y_1,\dots,y_n)$ and $\constraints_\chronicle \union \set{x_1=y_1 \wedge \dots \wedge x_n=y_n}$ is a consistent set of constraints,
 \item they can temporally overlap, i.e., $[t^s_1,t^e_1] \intersection [t^s_2,t^e_2] \neq \emptyset$ is consistent with the temporal constraints in $\constraints_\chronicle$,
 and
  \item their values can be different, i.e., $v_1 \neq v_2$ is consistent with $\constraints_\chronicle$.
  \end{LIST}

Resolving a conflict requires enforcing that the two assertions cannot be conflicting in any instantiation of the chronicle.
A conflict involving a pair of assertions $(p_1,p_2)$ can be resolved by a plan restriction $\chronicle \restriction{(\emptyset,\set{c})} \chronicle'$ where $c$ is one of the following:
\begin{LIST}
\item a state variable separation constraint, i.e., $c$ is one of $\set{x_1 \neq y_1, \dots, x_n\neq y_n}$ where $x_{1..n}$ and $y_{1..n}$ are the arguments of the state variables of $p_1$ and $p_2$ respectively, 
\item a temporal separation constraint, i.e., $c$ is either $t^e_1 < t^s_2$ or $t^e_2 < t^s_1$,
or
\item a unification constraint ($v_1 = v_2$) that requires the two values to be the same .
\end{LIST}

\def \undefined {\text{\textsc{undefined}}\xspace}
  
A change assertion \change{t_s,t_e}{sv}{v_s}{v_e} is more complex and, for the purpose of identifying conflicts, can be seen as a combination of three persistences:
\begin{LIST}
\item a start persistence assertion \persistence{t_s}{sv}{v_s},
\item an intermediate persistence $\tuple{]t_s,t_e[\ sv=\undefined}$, where \undefined is a special value that is not unifiable with any variable including \undefined itself (i.e.,  $\forall v, \undefined \neq v$), and
\item an end persistence \persistence{t_e}{sv}{v_e}.
\end{LIST}

A conflict involving a change assertion can be identified, and resolved, by reasoning on its three components, as for persistences.
We note that this definition allows the start and end points of the change assertion to overlap with other assertions.
However, the inner part of the change assertion that denotes the interval over which the value is changing cannot overlap with any other assertion on the same state variable.

For the purpose of identifying conflicts, an assignment assertion $\assignmenttuple{t}{sv}{v}$ is seen as a change assertion \changetuple{t-1,t}{sv}{any}{v} where $any$ is an unbounded variable that can take any value.

Conflicting assertions extend the threats of plan space planning with assertions spanning over temporal intervals.
The resolvers are separation constraints on temporal as well as on object variables  in the two conflicting assertions.

\subsection{Constraint Networks}
\label{sec:constraint-networks}

FAPE makes an important use of constraint networks to manage the temporal aspects of the problem and to maintain partially instantiated chronicles.
The constraint networks have to efficiently store the constraints accumulated by the planner and answer various queries necessary for the identification of flaws and finding resolvers.

For instance, finding whether assertions are possibly conflicting requires testing their possible temporal overlap and  unification of their state variables.
For this purpose we have two distinct constraint networks: one responsible for temporal reasoning and one for handling binding constraints.
A special case of constraints are \emph{duration constraints} that refer to variables present in both networks.
such as the one enforcing travel time in \autoref{fig:move-action}.
We start by presenting the temporal and binding constraint networks independently, and then show how they are synchronized to handle duration constraints.

\subsubsection{Temporal Constraint Network}
\label{sec:stn}
\def \timepoints {\ensuremath{\mathcal{X}}\xspace}

Temporal variables (or timepoints) and temporal constraints are handled as a Simple Temporal Networks (STN) \cite{Dechter1991}.
Given a set of timepoints \timepoints, a temporal constraint is an inequality of the form $t_2 -t_1 \geq d$, where $t_1$ and $t_2$ are timepoints in \timepoints and $d$ is an integer constant.
Such a constraint means that the minimal delay from $t_1$ to $t_2$ must be at least $d$.
The consistency and the minimal network of an STN can be computed in polynomial time.

The Floyd-Warshall All-Pairs Shorter Path algorithm \cite{Floyd1962,Warshall1962} gives the minimal network where the minimum and maximum distances between all pairs of timepoints are explicitly computed. This is performed in
$\Theta(\sizeof{\timepoints}^3)$.
We use the incremental version of this algorithm, as presented by \textcite{Planken-thesis}, that has a quadratic complexity of $O(\sizeof{\timepoints}^2)$.

We denote as $\diststn(t_1,t_2)$ the minimal delay between $t_1$ and $t_2$, which is given in the minimal network.
The maximum delay from $t_1$ to $t_2$ is simply $-\diststn(t_2,t_1)$.

\subsubsection{Binding Constraint Network}
\label{sec:binding-csp}
\def \bindlowerscript {Bind}
\newcommand{\variables}{\ensuremath{\mathcal{X}}}
\newcommand{\bindingvars}{\ensuremath{\variables_{\bindlowerscript}}\xspace}
\newcommand{\bindingconstraints}{\ensuremath{\constraints_{\bindlowerscript}}\xspace}
\newcommand{\bindingdomains}{\ensuremath{\mathcal{D}_{\bindlowerscript}}\xspace}
\def \bindingqueue {\ensuremath{\mathcal{Q}_{\bindlowerscript}}\xspace}

A binding constraint network is a tuple $(\bindingvars,\bindingdomains,\bindingconstraints)$ where:
\begin{LIST}
\item \bindingvars is a set of object variables and integer variables.
\item \bindingdomains is a set of finite domains for each variable in \bindingvars.
  The initial domain of object variables is a subset of the objects in the problem having the type of the object variable. The initial domain of integer variables is the set of integers.
  
\item \bindingconstraints is a set of constraints on the variables of \bindingvars.
  A constraint is either:
  \begin{LIST}
  \item an equality constraint, e.g., $x_1=x_2$,
  \item a difference constraint, e.g., $x_1 \neq x_2$,
  \item a general relation constraint, e.g., $\function{travel-time}(x_1,\dots,x_{n-1}) = x_{n}$. Relation constraints are associated with a  table that gives the allowed tuple of values for the variables $x_{1..n}$,
  \item an inequality on a integer variable, e.g., $x_1 \leq 12$ where $x_1$ is a integer variable, or
  \item disjunctive equality constraint, e.g., $x=x_1 \vee x=x_2\vee \dots \vee x=x_n$.
  \end{LIST}
\end{LIST}

The main purpose of the constraint network is to detect inconsistencies in the set of constraints and to answer various queries on binding variables.
Typical queries on the constraint network include \i the domain of a variable, \ii knowing if two variables are equal, \iii knowing if two variables can be made equal or different. These constraints  correspond to NP-complete CSPs.
(Networks with inequality constraints alone are NP-Complete.)

To find a trade-off between accuracy and efficiency we use an arc-consistency algorithm (AC-3 \cite{Mackworth1977}) for constraint propagation.
More specifically, we maintain a work list of constraints \bindingqueue.
When a new constraint is added to the network, it is added to \bindingqueue.
Until $\bindingqueue$ is empty, propagation iterates overs the following three steps:
\begin{LIST}
\item[\i] A constraint $c$ is extracted from \bindingqueue.
\item[\ii] For any variable $x$ involved in $c$,  remove from $dom(x)$ any value that cannot satisfy this constraint.\\
\tab For a relation constraint $c = \tuple{R(x_1,\dots,x_{n-1})=x_n}$, we first remove from the relation any tuple of values $(v_1,\dots,v_n)$ that is not achievable (i.e., $\exists i\in [1,n] \suchthat v_i \not\in dom(x_i)$).
For all remaining tuples, projecting them on a single variable $x_i$ gives a set of allowed values ${Proj}^c_{x_i}$.
The domain of each variable $x_i$ is restricted to the values in ${Proj}^c_{x_i}$.\\
\tab For a disjunctive equality constraint $c = \tuple{x=x_1 \vee x=x_2\vee \dots \vee x=x_n}$, the domain of $x$ is restricted to $\Union_{i\in[i,n]} dom(x_i)$. 
\item[\iii] If the domain of any variable $x$ was updated during propagation, all constraints involving $x$ are added to \bindingqueue.
\end{LIST}

If a variable $x \in \bindingvars$ has an empty domain, the network is not consistent.
Otherwise the network is \emph{arc-consistent} which is used as an optimistic approximation of consistency by the planner.
Full consistency is only checked at the end to ensure that a plan with no flaw is indeed a solution to a planning problem (i.e., that the binding constraint network is consistent).
If the network is not fully consistent, the corresponding chronicle is a dead-end and the planner backtracks.

The network is used to identify flaws and filter out impossible resolvers.
The binding constraint network considers that two variables $x_1$ and $x_2$ are:
\begin{LIST}
\item \emph{Unified} if they have the same singleton domain or if there is an equality constraint between the two.
\item \emph{Separable} if they are not unified.
\item \emph{Unifiable} if they have intersecting domains and there is no inequality constraints between them (or between any two variables unified with $x_1$ and $x_2$ respectively).
\item \emph{Separated} if they are not unifiable, i.e.,  they have non intersecting domains or there is an inequality constraint between them.

\end{LIST}

\subsubsection{Duration Constraints}\label{sec:dur-var}

Consider the constraint $t_2 - t_1 \geq \delta$ where $t_1$ and $t_2$ are timepoints and $\delta$ is an integer variable in the binding constraint network. This \textit{duration constraint} involves variables of both networks. For instance, the introduction of an instance $a$ of the \act{move} action of \autoref{fig:move-action}, would result in the following constraints:
\begin{align*}
  \function{travel-time}(r,d,d') &= \delta  \\
 \wedge~ end(a) - start(a) &\geq \delta  \\
 \wedge~ start(a) - end(a) &\geq -\delta
\end{align*}
where $r$, $d$ and $d'$ are object variables, $start(a)$ and $end(a)$ are timepoints and $\delta$ is a duration variable.
The last two constraints are mixed constraints as they involve variables from the temporal network and from the binding constraint network.

A duration constraint is handled as follows:
\begin{LIST}
  \item When a new constraint $d \geq t_2 - t_1$ is added or inferred in the temporal network, we remove from the domain of $\delta$ any value that would force a duration greater than $d$:
    \[dom(\delta) \gets \set{ v \suchthat v \in dom(\delta) \AND d \geq v } \]
    If the domain of $\delta$ is modified by this operation, it will trigger propagation in the binding network.
  \item When the domain of $\delta$ is modified in the binding constraint network, a new temporal constraint $t_2 - t_1 \geq 
  \minimum \set{ v \suchthat v \in dom(\delta) }$ is added to the STN. I.e the least constraining instantiation of $\delta$ is propagated into the temporal network.
\end{LIST}

To handle the more general form of a duration constraint $t_2-t_1 \geq \funcf(x_1,\ldots, x_n)$ where $t_1$ and $t_2$ are timepoints of the temporal network, $x_1,\ldots, x_n$ are object variables in the binding network,
and \funcf is a function given as or transformed into a table, we reify $\funcf(x_1, \dots, x_n)$ by \i introducing a new integer variable $\delta$, \ii imposing the relation constraint $\funcf(x_1, \dots, x_n) = \delta$ in the binding network and \iii enforcing $t_2 - t_1 \geq \delta$ as above.

Hence the constraint $\tend - \tstart = \function{travel-time}(r,d,d')$ of \autoref{fig:move-action} is effectively reformulated into the following conjuncts that can be handled separately:
\begin{equation*}
  \function{travel-time}(r,d,d') = \delta  
    ~~\wedge~~ \tend - \tstart \geq \delta  
    ~~\wedge~~ \tstart - \tend \geq -\delta
\end{equation*}

\subsection{Search Space: Properties and Exploration}
\label{sec:search-space}

Here we analyze the characteristics of the search space of  \FapePlan.

\paragraph{Search space dimension.}
The search space of plan-space planners is infinite, since they explore the set of plans which is infinite \cite[Chap. 5]{Ghallab2004}.
Indeed, even for a simple goal such as going somewhere, there can be an infinite set of plans fulfilling it: one can go round and round in circles an arbitrary number of time before getting to the destination.
Similarly, the presence of recursive methods can make the search space of HTN planners infinite \cite{Erol1994a}

Since our search procedure is an extension to plan-space planning and can represent HTN problems, the search-space of \FapePlan is infinite as well.
Under these conditions, the planner must guarantee a methodical exploration of the search space to ensure that, if there is a solution plan, it will be found.
Algorithms such as depth-bounded search or iterative deepening meet those expectations.
More interestingly, best-first search (and A* in particular) also meet those expectations as long as the addition of an action to the chronicle has a strictly positive cost for its evaluation function.
Indeed when the number of actions in a plan approaches infinity, the evaluation functions will eventually prefer a shallower plan that is shorter.
Our planner uses a best-first search algorithm whose evaluation function allows for a methodical exploration (as we will detail in \autoref{sec:general-search-strategy}).

\paragraph{Search space acyclicity.}
Another condition for the completeness of the search is either an acyclic space or an adequate handling of cycles, if any, to avoid infinite loops.

\begin{proposition}
  The search space of \FapePlan is acyclic.
\end{proposition}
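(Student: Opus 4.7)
The plan is to exhibit a measure $\mu$ on chronicles taking values in a well-founded partial order such that any non-trivial application of the three transformation rules strictly increases $\mu$. A cycle in the search space would then contradict strict monotonicity.

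First I would observe that all three transformations of \autoref{sec:transformation-solutions} are monotonic in the assertion and constraint sets: from the update equations for task refinement ($\phi \decomposition{\tau,a} \phi'$), action insertion ($\phi \insertion{a} \phi'$), and plan restriction ($\phi \restriction{(A,C)} \phi'$), we immediately get $\assertions_\phi \subseteq \assertions_{\phi'}$ and $\constraints_\phi \subseteq \constraints_{\phi'}$. Moreover, each of the first two rules adjoins a freshly created action instance $a$ to the plan (with a fresh identifier and fresh variable names), so the number of action instances in $\plan_{\phi'}$ strictly exceeds that in $\plan_\phi$. For plan restriction, $\plan$ is preserved, but whenever the step genuinely changes the chronicle (i.e., $\phi' \neq \phi$), the set $A \cup C$ must contribute at least one new element to $\assertions_\phi$ or $\constraints_\phi$.

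With these observations in hand, I would define
\[
  \mu(\phi) \;=\; \bigl(\,\#\{a \in \plan_\phi : a \text{ is an action instance}\},\; |\assertions_\phi|,\; |\constraints_\phi|\,\bigr)
\]
and compare values componentwise in $\mathbb{N}^3$. By the observations above, each edge $\phi \to \phi'$ of the search space with $\phi \neq \phi'$ satisfies $\mu(\phi) \leq \mu(\phi')$ componentwise, with at least one strict inequality; in particular $\mu(\phi) \neq \mu(\phi')$. Since componentwise $\leq$ on $\mathbb{N}^3$ is a well-founded partial order, no sequence $\phi_0 \to \phi_1 \to \cdots \to \phi_k = \phi_0$ of non-trivial edges with $k \geq 1$ can exist, which is the desired acyclicity.

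The hard part will be making rigorous the claim that the action instance $a$ introduced by task refinement or action insertion is genuinely a new element of $\plan$, rather than being identified with a pre-existing one. This relies on the convention that instantiating an action template always produces a distinguishable instance with fresh identifiers and variables, so that set-cardinality counting is meaningful. A secondary, routine point is to formally dismiss degenerate restriction steps whose added assertions and constraints are already present in the current chronicle: such steps do not produce a distinct successor and therefore are not edges of the search space at all.
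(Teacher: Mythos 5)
Your overall strategy is essentially the paper's own argument --- every transformation is incremental, nothing is ever removed, so no sequence of steps can return to an earlier node --- and packaging it as a strictly increasing measure into a well-founded order is a clean way to make that rigorous. However, the key claim that \emph{every} non-trivial edge strictly increases $\mu$ fails for one family of resolvers. The ``delayed support from an existing task'' resolver (\autoref{sec:unsupported-assertions}) handles an unsupported assertion $\alpha$ by adding an unrefined task $\tau$ to the commitment set $\descendanttasks{\alpha}$; as the paper notes explicitly, this resolver does not bring any change to the chronicle itself. It leaves $\plan$, $\assertions$ and $\constraints$ untouched, so $\mu(\phi)=\mu(\phi')$ on that edge and your componentwise-strict-inequality claim is false there; the argument then cannot exclude a cycle passing through such edges. (If you identify search nodes with bare chronicles, the situation is worse: the edge becomes a self-loop, i.e.\ a trivial cycle.) The paper's proof has to treat exactly this case specially, observing that a delayed unsupported assertion is only reconsidered once at least one other flaw has been fixed.

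The gap is repairable within your framework: take the search node to include the commitment sets and append a fourth component $\sum_{\alpha}\lvert\descendanttasks{\alpha}\rvert$ to $\mu$. Commitments are never retracted by any transformation, and the delayed-support resolver strictly increases this component while leaving the other three fixed, so the extended measure is again strictly increasing along every non-trivial edge and the well-foundedness argument goes through. The two ``hard parts'' you flagged (freshness of inserted action instances, dismissal of degenerate restriction steps) are indeed the right routine points to nail down, but the missing piece is the one above.
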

\begin{proof}
  To show the acyclicity of the search space, we show that given a chronicle, there is no infinite sequence of resolver applications that can lead to the same chronicle.

  We first observe that the $\procedure{Transform}(\chronicle,\rho)$ of \autoref{alg:plan} is incremental: it can add actions or constraints but never removes anything.
  Furthermore, each $\procedure{Transform}$ application results in one flaw being solved (even in the case where the resolution of an unsupported assertion is delayed, this unsupported assertion is only reconsidered when at least one other flaw has been fixed).
  Since there are only a finite number of flaws in a chronicle, the repeated application of \procedure{Transform} will either \i result in a solution plan with no flaw, \ii result in an inconsistent plan or \iii add a new action to the plan, possibly introducing new flaws.
  In the latter case, the added action can never be removed and the planner cannot transform it back into the original chronicle.
\end{proof}

\paragraph{Soundness \& Completeness.}
We now consider the soundness and completeness of our planning procedure.

\begin{restatable}{proposition}{FapeSoundComplete}
  \label{prop:fape-sound-complete}
  \FapePlan is sound and complete.
\end{restatable}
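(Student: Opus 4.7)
The plan is to treat soundness and completeness separately, and in each case reduce the claim to Definition~\ref{def:solution-plan} combined with the inventories of flaws and resolvers given in Section~\ref{flaws-resolvers}, together with the acyclicity/methodical-exploration result already established above.

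For \textbf{soundness}, I would argue that whenever \FapePlan returns a chronicle $\chronicle^*$, that chronicle is flaw-free and is the result of a sequence of legal plan transformations applied to $\chronicle_0$. Reachability from $\chronicle_0$ is immediate from the definition of \textsc{Transform}, which only composes the three transformations of Section~\ref{sec:transformation-solutions}. I would then verify each clause of Definition~\ref{def:solution-plan} by contraposition against the three flaw types: the absence of unrefined-task flaws gives clause~(ii); the absence of unsupported-assertion flaws, together with the invariant that every non-\emph{a priori}-supported assertion is tracked by an incoming causal link, gives clause~(iii) directly from Definition~\ref{def:causal-support}; and the absence of possibly-conflicting-assertion flaws yields necessary consistency of the timelines (clause~(iv)), since the case analysis on pairs of persistence and change assertions (decomposed into their three components) exhausts all ways two assertions on the same state variable can conflict. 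The final ``full consistency check'' before returning, mentioned in Section~\ref{sec:binding-csp}, rules out chronicles that are only arc-consistent but not fully consistent.

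For \textbf{completeness}, the plan is to show that for any solution chronicle $\chronicle^*$ reachable from $\chronicle_0$, there exists a sequence of nondeterministic choices in \autoref{alg:plan} that produces $\chronicle^*$ (up to equivalence of constraint orderings). I would proceed by induction on the length of a witness transformation sequence $\chronicle_0 \to \cdots \to \chronicle^*$, exhibiting at each step a flaw in the current intermediate chronicle and a resolver whose application advances along this witness sequence. The key lemma is \emph{resolver completeness}: for every flaw $\varphi$ in a chronicle $\chronicle$ that can be extended into a solution, some listed resolver is compatible with that extension. For unsupported assertions this requires case analysis over where the eventual causal supporter comes from in $\chronicle^*$ (already in $\assertions_\chronicle$, introduced by refining a pending task, or introduced by a free action added outside existing refinement trees), matching the three resolver families. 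For unrefined tasks and conflicting assertions the mapping is more direct. Finally, the search-space analysis preceding the proposition (acyclicity plus a methodical best-first exploration whose evaluation function strictly grows with plan size) ensures that the tree of nondeterministic choices is exhaustively enumerated in the limit, so the witness sequence is eventually produced.

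The main obstacle will be the completeness argument in the presence of the \descendanttasks{\alpha} commitment mechanism, because resolvers are filtered by prior commitments: one must check that restricting an unsupported assertion's supporter to descendants of tasks in $\descendanttasks{\alpha}$, and symmetrically restricting task-refinement resolvers to action templates whose possible effects can still enable every $\alpha$ with $\tau \in \descendanttasks{\alpha}$, never prunes a resolver that lies along the witness sequence. I would handle this by choosing the witness sequence so that each commitment placed in $\descendanttasks{\alpha}$ corresponds to an actual ancestor, in $\chronicle^*$, of the eventual supporter of $\alpha$; then the filter is tight by construction. A secondary subtlety is that \textsc{Transform} may produce a chronicle whose STN or binding CSP is detected inconsistent; such branches are correctly pruned because the underlying constraint networks over-approximate (arc-consistency) and fully check at the leaves, so any witness sequence through $\chronicle^*$ avoids these prunings. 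Combining resolver completeness with the acyclicity proposition and the methodical-exploration property of the search strategy then yields completeness.
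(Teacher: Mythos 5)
Your proposal is correct and follows essentially the same route as the paper's proof: soundness by matching each flaw type to a clause of Definition~\ref{def:solution-plan} plus reachability from the transformation definitions, and completeness by establishing resolver completeness for each flaw type (with the same three-way case analysis for unsupported assertions and the same justification that the \descendanttasks{\alpha} filtering only prunes supporters that are genuinely excluded by the commitment, the alternatives being explored in sibling branches). The only difference is presentational — you phrase completeness as an explicit induction along a witness transformation sequence, whereas the paper appeals to the general refinement-planning framework and separately notes that flaws detected in an already-valid plan are resolved into an equivalent plan by making implicit causal links or constraints explicit, a point your "up to equivalence" caveat covers implicitly.
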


\begin{proof}[Proof (Sketch)]
The full version of this proof is given in Appendix \ref{proof:sound-complete}.

For soundness, we remark that each type of flaw identifies one of the conditions of \autoref{def:solution-plan}.
Thus if a plan has no flaw, it fulfills all three requirements.
The requirement that the solution plan be \emph{reachable} from the initial chronicle is always fulfilled as all resolvers are defined in terms of the allowed transformation.

Given that the search space is acyclic and that the exploration is methodical, showing completeness requires us to show that a solution appears in the search space if one exists.
We show this by proving that, for each flaw, the set of resolvers is complete and does not overlook any solution plan.
\end{proof}

\paragraph{Discussion.}
The definition of  resolvers results in a top down approach to hierarchical planning, similar to that of other HTN planners. %
Indeed, the search starts from an existing task network, which, at the end, is entirely decomposed over the course of planning.
In practice, this means that any action in the plan is inserted after the action it possibly descends from.
This scheme has the advantage of being well documented and understood.

This decomposition procedure is integrated into generative planning, using partially instantiated plans.
As a result, the planner is able to handle temporal planning problems spanning from fully generative to fully hierarchical, together with mixed specification of generative and hierarchical problems.

\section{Search Control}
\label{sec:search-control}

Section~\ref{sec:search-space} defined the search space of FAPE,
obtained by extending the PSP algorithm to handle hierarchical
decomposition. Compared to most planners, the search space of
constraint-based planners (including FAPE) is much more compact as the lifted representation allows representation of many partial plans in a single search node.
On the other hand, the maintenance of various constraint networks make the expansion of a search node computationally expensive and such planners still require good search strategies to perform efficiently.
In this section, we detail the strategies used for shaping and exploring the search space in FAPE.
These can be roughly separated into three categories:
\begin{itemize}
\item the strategy for \emph{choosing which flaw to solve next} plays an important role in plan-space planning as it allows shaping the search space.
  For instance, choosing to first decompose all unrefined tasks would lead to a very different search space than the one resulting from resolving conflicting assertions first.
\item the strategy of \emph{which partial plan to consider next} is critical to guide exploration of the search space towards a solution plan, even for planning problems involving only a handful of actions.
\item \emph{inference of necessary constraints} to allow an early detection of specific characteristics of a partial plan.
  An example of such inference capability would be the detection that a given unsupported assertion cannot be achieved early because its establishment requires a long chain of actions.
  Such new constraints can be used to reduce the search space by detecting dead-ends or filtering out impossible resolvers.
\end{itemize}

These three aspects are strongly interconnected since flaw ordering and inference can be seen as shaping up and restricting the search space that will be explored given the partial plan selection strategy.
We first explain the techniques used for automatic inference in FAPE (Sections \ref{sec:inst-supp-constr} to \ref{sec:causal-network}).
More specifically, \autoref{sec:reach-analys} will detail a reachability analysis to infer what cannot be achieved from a partial plan due to its temporal and hierarchical features and
\autoref{sec:causal-network} will provide techniques to reason about the current causal structure of a plan.
Finally, \autoref{sec:search-strategies} will focus on the strategies for flaw and plan selection.

\subsection{Instantiation and Refinement Variables}
\label{sec:inst-supp-constr}

\newcommand{\instantiationvar}[1]{\ensuremath{I_{#1}}\xspace}
\newcommand{\refinementvar}[1]{\ensuremath{R_{#1}}\xspace}
\newcommand{\dom}[1]{\ensuremath{dom(#1)}\xspace}

While the lifted representation is beneficial for the efficiency of the search, it makes reasoning on a partial plan harder because a given lifted assertion can be refined into many ground ones.
This can be detrimental as most heuristic computations for planning rely on a ground representation.
To facilitate the mapping between lifted partial plans and ground heuristic techniques, we introduce two types of variables that represent the possible instantiations of actions and the set of ground actions that can be used to refine a task.

\logicalpar
Any action template \act{act} is associated with a relation $\gamma_{\act{act-inst}}$ that contains all possible ground instances of \act{act}.
A ground instance of $\act{act}$ is one where all object variables are bound and satisfy all binding constraints in the template.
If \act{act} has a ground instance $id = \tuple{\act{act}(\constant{c1},\dots,\constant{cn})}$, then $\gamma_{\act{act-inst}}$ will contain the tuple $(\constant{c1}, \dots, \constant{cn}, id)$ where $id$ is a unique identifier of this instance.
\autoref{tab:move-isntantiations} gives an example of this relation for the \act{move} action of \autoref{fig:move-action}.

\begin{table}[htb]
  \centering
  \begin{tabular}{c c c c}
    \textbf{Robot} & \textbf{Origin} & \textbf{Destination} & \textbf{Action ID} \\
    \constant{r1} & \constant{d1} & \constant{d2} & $id_1$ \\
    \constant{r1} & \constant{d2} & \constant{d3} & $id_2$ \\
    \constant{r1} & \constant{d2} & \constant{d1} & $id_3$ \\
    \constant{r2} & \constant{d1} & \constant{d2} & $id_4$ \\
    \constant{r2} & \constant{d2} & \constant{d3} & $id_5$ \\
    \multicolumn{4}{c}{\textbf{\dots}}
  \end{tabular}
  \caption{
    Table for the relation $\gamma_{\act{move-inst}}$ that give the instantiations of the \act{move} action.
    In this example, $id_1$ identifies the ground action $\act{move}(\constant{r1},\constant{d1},\constant{d2})$.
  }
  \label{tab:move-isntantiations}
\end{table}

\logicalpar
A (lifted) action $a: \act{act}(x_1,\dots,x_n)$ in $\plan_\chronicle$ is associated with an instantiation variable \instantiationvar{a} that takes a value in the set of ground instances of \act{act}.
At a given time, \dom{\instantiationvar{a}} should contain every ground action that $a$ might become once all its parameters are instantiated:

\[\act{act}(\constant{c1},\dots,\constant{cn}) \in dom(I_a) \Longleftrightarrow \forall_{i \in 1..n}\ \constant{ci} \in dom(x_i)
\]

To enforce, this relationship, every time a (lifted) action instance $a = \tuple{\act{act}(x_1,\dots,x_n)}$ is added to the partial plan, a 
constraint $(\tuple{x_1,\dots,x_n,I_a},\gamma_{\act{act-inst}})$ is added to the binding constraint network.

This instantiation variable has two main benefits.
First it allows us to find all possible instantiations of an action which is a useful input for the reasoning techniques we will introduce later in this section.
This set of instantiations is iteratively refined through constraint propagation whenever the parameters of the action are updated.
Second, when a ground action is found to be impossible, it can simply be removed from the allowed tuples in the relation.
This change will trigger propagation in the binding constraint network and be reflected on the parameters of actions in the partial plan.

\logicalpar
Similarly, every (lifted) task $\tau : \tsk{tsk}(y_1,\dots,y_n)$ is associated with a refinement variable \refinementvar{\tau} with values being the ground actions whose task is \tsk{tsk}.
At a given time, \dom{\refinementvar{\tau}} contains any ground action that might be used as a refinement for $\tau$:

\[a \in \dom{\refinementvar{\tau}} \Longleftrightarrow 
   task(a) = \tsk{tsk}(\constant{c1},\dots,\constant{cn}) \AND \forall_{i \in 1..n}\ \constant{ci} \in dom(y_i)\]

More specifically, a task symbol \tsk{tsk} is associated with a relation $\gamma_\tsk{tsk-ref}$ that gives the possible refinements of a task.
If there is a possible ground action $id = \act{act}(\constant{c1},\dots,\constant{cn})$ whose task is $\tsk{tsk}(\constant{p1},\dots,\constant{pm})$, then the relation $\gamma_\tsk{tsk-ref}$ will contain the tuple $(\constant{p1},\dots,\constant{pm},id)$. 
This identifies that the action (uniquely identified by $id$) is a possible refinement of the task $\tsk{tsk}(\constant{p1},\dots,\constant{pm})$.
When a new unrefined task $\tsk{tsk}(y_1,\dots,y_n)$ is inserted into the partial plan, it is associated with a refinement variable $\refinementvar{\tau}$ %
and constraint $(\tuple{y_1,\dots,y_n,\refinementvar{\tau}},\gamma_{\tsk{tsk-ref}})$.

When an action $a$ is introduced as a refinement of task $\tau$, their respective instantiation and refinement variables are unified by the introduction of a constraint $\instantiationvar{a}=\refinementvar{\tau}$.

\logicalpar
Instantiation and refinement variables make explicit the relationships between lifted actions and tasks in a partial plan and the ground actions that will appear in a solution plan.
In search control, those variables are used to transform lifted partial plans into a grounded relaxed problem for reachability analysis and heuristic computation.
The result of reachability analysis is also used to remove unreachable actions from the domains of both instantiation and refinement variables, as we will see in the next subsection.

\subsection{Reachability Analysis}
\label{sec:reach-analys}

Reachability analysis has been a crucial component of many planning systems including \textsc{ff} \cite{Hoffmann2001a} and \textsc{popf} \cite{DBLP:conf/aips/ColesCFL10}.
This analysis is done by solving a relaxed version of the planning problem in order to infer optimistic estimates of the set of states reachable from the initial state.
This further allows inferring which actions and fluents might appear in a solution plan.
In classical planners, this analysis is often done by taking the delete-free relaxation of a planning problem.
A set of reachable actions is then incrementally expanded by adding, one by one, any action whose preconditions are achieved either in the initial state or by an action in the reachable set.

This technique is not directly applicable to temporal problems that might contain inter-dependent actions.
Indeed, if two actions $A$ and $B$ are interdependent (as in \autoref{fig:concurrent-actions}), inferring that $B$ is reachable requires knowing that $A$ is reachable, which would require prior knowledge that $B$ is reachable.
Planners such as \textsc{popf} get around this problem by further relaxing delete-free problems: each durative action $A$ is split in two instantaneous at-start and at-end actions, $A_{start}$ and $A_{end}$.
$A_{start}$ contains only the start conditions and start effects of $A$, while $A_{end}$ contains only the end conditions and end effects of $A$.
Since $A_{start}$ does not contain the end conditions $A$ this eliminates any possibility of inter-dependency \cite{Coles2008,Cooper2013}.
To illustrate how reachability works, in the above example if $A_{start}$ were reachable, $B_{start}$ would be reachable, allowing $B_{end}$ to be reachable.  This, in turn, allows $A_{end}$ to be reachable.  This works fine as long as $B$ is shorter than $A$.  However, it is overly optimistic when $B$ is longer than $A$, leading the analysis to conclude that both actions are reachable when they are not.

\begin{figure}[htb]
  \centering
  \begin{tikzpicture}[node distance=.4cm]%
    \node[timed-act/action,minimum width=6.3cm,minimum height=.2cm,inner sep=3pt] (A) {A \smaller (duration: 10)};
    \node[timed-act/condition] at (A.north east) {y};
    \node[timed-act/effect] at (A.south west) {x};
    \node[timed-act/action,below right=1.3cm and 0.7cm of A.west,anchor=west,minimum width=4cm,minimum height=.3cm,inner sep=3pt] (B) {B \smaller (duration: 7)}; \node[timed-act/condition] at (B.north west) {x}; 
    \node[timed-act/effect] at (B.south east) {y};
  \end{tikzpicture}
  \caption{Two interdependent actions: $A$ with a start effect $x$ and an end condition $y$, and $B$ with a start condition $x$ and an end effect $y$.}
  \label{fig:concurrent-actions}
\end{figure}
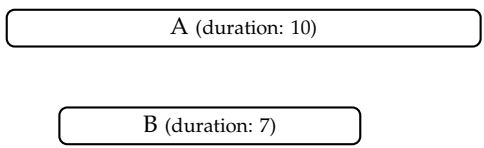

Hierarchical problems introduce many of these kinds of problems, because a higher level task requires that the subtasks be contained within it, and the subtasks require the presence of the higher level task.
An illustration of such interdependencies is given in \autoref{fig:transport-dependencies}.

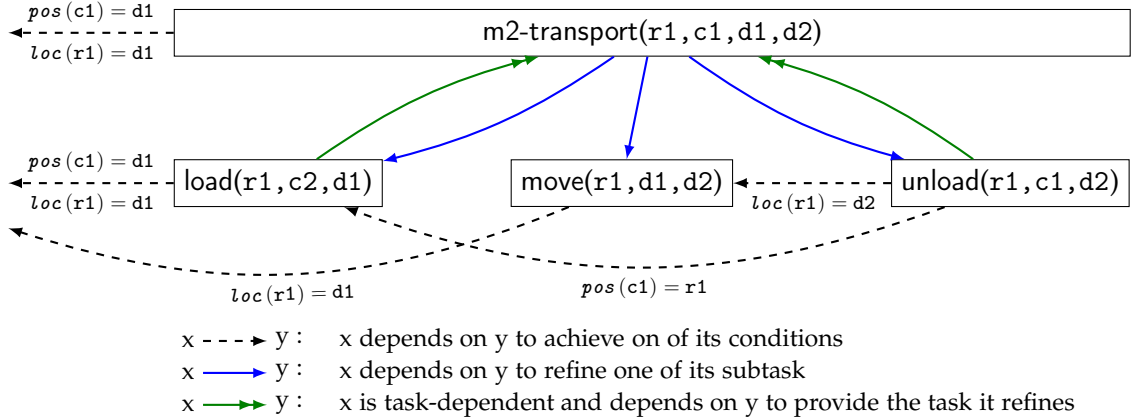
\begin{figure}[!phtb]\centering  
  \def \PosCOne {\sv{pos}(\constant{c1})}
  \tikzstyle{causal-dep} = [thick,dashed,->]
  \tikzstyle{ref-dep} = [thick,color=blue,->]
  \tikzstyle{task-dep} = [thick,color=darkgreen,->>]
  \begin{tikzpicture}[auto,scale=1, every node/.style={transform shape,color=black}]
    \node[hier-schema/action] (a1) {\act{load}(\constant{r1,c2,d1})};
    \node[hier-schema/action, right=1.7cm of a1] (a2) {\act{move}(\constant{r1,d1,d2})};
    \node[hier-schema/action, right=2.1cm of a2] (a3) {\act{unload}(\constant{r1,c1,d2})};
    \hla[]{ha1}{a1}{a3}{\act{m2-transport}(\constant{r1,c1,d1,d2})}
    \path[causal-dep,below,sloped,font=\smaller\smaller]
    (a2) edge[bend left=20] node {$\LocROne=\constant{d1}$}  ($(a1.west) + (-2.2,-.6)$)
    (a3) edge node {$\LocROne=\constant{d2}$} (a2)
    (a3) edge[bend left=20] node {$\PosCOne=\constant{r1}$} (a1)
    (ha1) edge node[above] {$\PosCOne=\constant{d1}$} node {$\LocROne=\constant{d1}$} ($(ha1.west) + (-2.2,0)$)
    (a1) edge node[above]  {$\PosCOne=\constant{d1}$} node {$\LocROne=\constant{d1}$}  ($(a1.west) + (-2.2,0)$)
    ;
    \path[ref-dep]
    (ha1) edge[bend left=10] (a1)
    (ha1) edge (a2)
    (ha1) edge[bend right=10] (a3);
    \path[task-dep]
    (a1) edge[bend left=10] (ha1)
    (a3) edge[bend right=10] (ha1);

    \begin{scope}[shift={(-1,-2.1)},y=.5cm,scale=0.85] %
      \draw (0,0) node[left] {x} edge[causal-dep] (1,0) node[right=1cm] {y~:} node[right=2cm] {x depends on y to achieve on of its conditions};
      \draw (0,-1) node[left] {x} edge[ref-dep]   (1,-1) node[right=1cm] {y~:} node[right=2cm] {x depends on y to refine one of its subtask};
      \draw (0,-2) node[left] {x} edge[task-dep]  (1,-2) node[right=1cm] {y~:} node[right=2cm] {x is task-dependent and depends on y to provide the task it refines};
    \end{scope}

  \end{tikzpicture}
  \caption{
    Dependencies between actions in a plan achieving a single $\tsk{transport}(\constant{c1,d2})$ task.
    The \act{m2-transport} action is the one of \autoref{fig:transport-method}. 
    Its three subtasks are refined by a \act{move} action (\autoref{fig:move-action}) and two \emph{task-dependent} \act{load} and \act{unload} actions.
  }     
  \label{fig:transport-dependencies}
\end{figure}

In this section, we describe a reachability analysis technique that takes into account the hierarchical properties of a problem and supports interdependent actions with no additional relaxation.
As in most existing planners, we consider delete-free actions in our relaxed model.
To give an intuition of what a delete-free relaxation would be in our planning model, consider a change assertion \change{t_s,t_e}{sv}{\constant{a}}{\constant{b}}.
This transition requires $sv$ to have the value $\constant{a}$ at time $t_s$ (i.e. that the fluent $\fluent{sv}{\constant{a}}$ holds at $t_s$) and states that $sv$ will have the value $\constant{b}$ at time $t_e$, meaning that the $\fluent{sv}{\constant{a}}$ no longer holds at time $t_e$.
This change assertion thus has a positive effect on the fluent $\fluent{sv}{\constant{b}}$ and a delete effect on the fluent $\fluent{sv}{\constant{a}}$.
In a delete-free model, we would consider that after the change assertion the state variable has both the values $\constant{a}$ and $\constant{b}$.
Those delete-free actions are extended with additional conditions and effects that account for hierarchical features of the original actions.

This relaxed model is used to compute a set of actions that might appear in a solution together with information about their earliest appearance.
This information is then used to infer constraints on partial plans as well as to detect dead ends in the search.

\subsubsection{Relaxed Problem}
\label{sec:elementary-actions}

\def \aflat {\ensuremath{a_{flat}}\xspace}
\def \condaflat {\ensuremath{C_{\aflat}}\xspace}
\def \effaflat {\ensuremath{E_{\aflat}}\xspace}

In order to perform a reachability analysis, we start by defining a relaxed planning problem.
A relaxed problem is a tuple $\tuple{F,A,I}$ where $F$ is a set of fluents, $A$ is a set of elementary actions and $I$ is the initial set of timed fluents.

The set of fluents $F$ contains all fluents in the original domain (i.e. all combinations of a state variable and a value).
Furthermore, for any task symbol $\tau$ in the original domain, $F$ is extended with $required(\tau)$, $started(\tau)$ and $ended(\tau)$,
which take a value in $\{\true,\false\}$. Those respectively represent that an action refining $\tau$ is needed, has started and has ended.

The elementary actions in $A$ are simple temporal actions with a set of conditions and a single positive effect. Conditions and effects are on fluents in $F$ and can represent causal or hierarchical requirements and effects of the original action.

The initial set of timed fluents $I$, represents a set of fluents whose appearance in a solution plan is already supported.
It is built from both the set of \emph{a priori} supported assertions in the original problem and from the effects of actions already in the partial plan.

Given a relaxed problem, the objective is to find the reachable subsets of $F$ and $A$ that can be reached from the facts in $I$.

\paragraph{Elementary actions.}
We start by transforming our actions (both high-level and primitive) into a set of simpler actions with only single effects. 
Our objective is to obtain delete-free actions that still encompass temporal and hierarchical aspects of the original action.
We start by associating any action $a$ of the planning domain with a flat action $a_{flat}$ with a set of conditions and effects such that:

\begin{itemize}
\item for any persistence condition $\timedfacttuple{t,t'}{sv = v}$ in $a$, \aflat has the condition $\timedfacttuple{t}{sv=v}$
\item for any assertion $\changetuple{t,t'}{sv}{v}{v'}$ in $a$, \aflat has the condition $\timedfacttuple{t}{sv=v}$ and the effect $\timedfacttuple{t'}{sv=v'}$
\item for any assignment assertion \assignmenttuple{t}{sv}{v} in $a$, $a_{flat}$ has the effect $\timedfacttuple{t}{sv=v}$
\item given the task $\tau_a$ achieved by $a$, \aflat has the effect $\timedfacttuple{\tstart}{started(\tau_a)=\true}$ and the effect $\timedfacttuple{\tend}{ended(\tau_a)=\true}$
\item if $a$ is task-dependent and its task is $\tau_a$, then \aflat has the additional condition $\timedfacttuple{\tstart}{required(\tau_a)=\true}$
\item for every subtask $\timedfacttuple{t,t'}{\tau}$ of $a$, $a_{flat}$ has:
  \begin{itemize}
  \item two additional conditions $\timedfacttuple{t}{started(\tau)=\true}$ and $\timedfacttuple{t'}{ended(\tau)=\true}$
  \item one additional effect $\timedfacttuple{t}{required(\tau)=\true}$
  \end{itemize}
\item \aflat contains all constraints of $a$
\end{itemize}

\begin{figure}\centering
  \begin{pcode}
    \phead{$\act{move}_{flat}(\constant{r1}, \constant{d1}, \constant{d2})$}
    \pkey{conditions}
    $\timedfact{\tstart}{\sv{loc}(\constant{r1})=\constant{d1}}$ \1
    $\timedfact{\tstart}{ \sv{occupant}(\constant{d1})=\constant{r1}}$ \1
    $\timedfact{t'}{\sv{occupant}(\constant{d2})=\nil}$ 
    \pkey{effects}
    $\timedfact{\tend}{\sv{loc}(\constant{r1})=\constant{d2}}$ \1
    $\timedfact{t}{ \sv{occupant}(\constant{d1})=\nil}$ \1
    $\timedfact{\tend}{\sv{occupant}(\constant{d2})=\nil}$ \1
    $\timedfact{\tstart}{started(\tsk{move}(\constant{r1},\constant{d1},\constant{d2}))=\true}$ \1
    $\timedfact{\tend}{ended(\tsk{move}(\constant{r1},\constant{d1},\constant{d2}))=\true}$
    \pkey{constraints}
    $\sv{connected}(\constant{d1},\constant{d2})$ \1
    $\tend - \tstart = 10$ \1
    $\tstart<t<t'<\tend$
  \end{pcode}
  \caption{Flattened version of the $\act{move}(\constant{r1,d1,d2})$ operator whose template is given in \autoref{fig:move-action}. The implicit temporal constraints are shown explicitly here.}
  \label{fig:flat-move}
\end{figure}

Such flat actions define a relaxed version of the problem in which actions have no ``delete effects'' and hierarchical relationships are compiled as additional conditions and effects.
It is important to note that the resulting `flat' problem relaxes some hierarchical aspects of the original one.
Indeed, a given subtask $\tuple{[t_s,t_e]\ \tau}$  yields two conditions $\persistence{t_s}{started(\tau)}{\true}$ and $\persistence{t_e}{ended(\tau)}{\true}$.
Those two conditions can be fulfilled by distinct actions, thus ignoring temporal constraints on the unique action that should have achieved the subtask in the original model.
Furthermore, a single ``subtask effect'' could allow the presence of multiple task-dependent actions.
This is however not a problem as this transformation is simply meant to expose some hierarchical features of the problem for an optimistic reachability analysis.

The flat actions are still temporally complex and might feature numerous timepoints related by temporal constraints.
As a second preprocessing step, each flat action is split into simpler elementary actions where all delays are fixed.
An elementary action contains a single effect and the necessary conditions to achieve this effect with \aflat.
Specifically, given a flat action \aflat, the set of elementary actions for \aflat is given by:

\begin{itemize}
\item for each effect $e = \tuple{[t_e] f}$ in \aflat, creating a new elementary action $a_e$ with $\tuple{[1] f}$ as the only effect of $a_e$.
In practice this places the effect exactly one time unit after the start of the elementary action, which facilitates reasoning of the relative placement of conditions and effect in an elementary action.
\item any condition $c$ in \aflat is added to each $a_e$ with a least-constraining timing constraint on when $c$ is needed relative to the effect $e$.
By least-constraining, we mean the latest time at which $c$ can be required to be true given the temporal constraints.
For a condition $\tuple{[t_c] sv'=v'}$ in \aflat, this is achieved by adding to $a_e$ a condition $\tuple{[1+max(t_c-t_e)] sv'=v'}$ where $max(t_c-t_e)$ is the maximal delay from $t_e$ to $t_c$ as given by the temporal constraints in the action.%
\footnote{Recall that in a delete-free problem, if a fluent holds at a given time $t$, it will hold for any subsequent time $t' \geq t$.}
\end{itemize}

\begin{figure}\centering
  \begin{pcode}
    \phead{$\act{move}_{\sv{loc}(\constant{r1})}(\constant{r1}, \constant{d1}, \constant{d2})$}
    \pkey{conditions}
    $[-9]\ \sv{loc}(\constant{r1})=\constant{d1}$ \1
    $[-9]\ \sv{occupant}(\constant{d1})=\constant{r1}$ \1
    $[0]\ \sv{occupant}(\constant{d2})=\nil$ \1
    $[-9]\ required(\tsk{move}(\constant{r1},\constant{d1},\constant{d2}))=\true$
    \pkey{effects}
    $[1]\ \sv{loc}(\constant{r1})=\constant{d2}$
    \pkey{constraints}
    $\sv{connected}(\constant{d1},\constant{d2})$
  \end{pcode}

  \begin{pcode}
    \phead{$\act{move}_{\sv{occupant}(\constant{d1})}(\constant{r1}, \constant{d1}, \constant{d2})$}
    \pkey{conditions}
    $[0]\ \sv{loc}(\constant{r1})=\constant{d1}$ \1
    $[0]\ \sv{occupant}(\constant{d1})=\constant{r1}$ \1
    $[9]\ \sv{occupant}(\constant{d2})=\nil$ \1
    $[0]\ required(\tsk{move}(\constant{r1},\constant{d1},\constant{d2}))=\true$
    \pkey{effects}
    $[1] \sv{occupant}(\constant{d1})=\nil$ 
    \pkey{constraints}
    $\sv{connected}(\constant{d1},\constant{d2})$
  \end{pcode}
  \caption{The first two elementary actions generated from $\act{move}_{flat}(\constant{r1,d1,d2})$}
  \label{fig:elementary-actions}
\end{figure}

\autoref{fig:elementary-actions} shows two elementary actions generated for the flat $\act{move}$ of \autoref{fig:flat-move}. Three additional elementary actions are needed to cover the last three effects of the action.

\paragraph{Timed fluents.}
We now describe how the initial set of timed fluents $I$ is derived from a partial plan.
It is meant to capture the fluents that are known to hold at a given time, regardless of whether they derive from timed initial literals or actions in the current partial plan.
A timed fluent $i \in I$ is denoted by $\tuple{[t] f}$ where $f$ is a fluent and $t$ is a time at which $f$ holds.
Given a partial plan $(\plan,\assertions,\constraints)$, $I$ is composed of:

\begin{itemize}
\item for any assertion $\tuple{\change{t_1,t_2}{sv}{v_1}{v_2}}$ or $\assignmenttuple{t_2}{sv}{v_2}$ in $\assertions$; if $sv'$ and $v_2'$ are instantiations of $sv$ and $v_2$ consistent with $\constraints$, $I$ contains $\tuple{\timedfact{t_2'}{sv'=v_2'}}$ where
 $t_2'$ is the smallest instantiation of $t_2$ consistent with \constraints, given by $\diststn(\timeorigin,t_2)$.
\item for any unrefined task $\tuple{\timedfact{t_1,t_2}{\tau}}$ in \plan; if $\tau'$ is an instantiation of $\tau$ consistent with \constraints, $I$ contains $\tuple{\timedfact{t_1'}{required(\tau')=\true}}$ where $t_1'$ is the smallest possible instantiation of $t_1$ consistent with \constraints.
\end{itemize}

\paragraph{Definitions.}
An elementary action is \emph{applicable} once all of its conditions are met.
An action with an effect $f$ is called an \emph{achiever} of the fluent $f$. 
A fluent becomes \emph{achievable} after one of its achievers becomes applicable.
As a consequence of using (delete-free) elementary actions, once a fluent is achievable or an action is applicable, it stays achievable/applicable at all subsequent time points.

Action $a$ is applicable at time $t$ (denoted by $applicable(a,t)$) if for all conditions $\tuple{[\delta] f}$ of $a$, $f$ is achievable at time $t+\delta$.
Similarly, a fact $f'$ is achievable  at time $t'$ (noted $achievable(f',t')$) if there exists an achiever of $f'$ applicable at time $t'-1$.

We say that an action $a$ (resp. a fluent $f$) is reachable if there exists a time $t$ such that $applicable(a,t)$ holds (resp. $achievable(f,t)$ holds).
The \emph{earliest appearance} of a reachable action $a$ (denoted by $ea(a)$) is the smallest $t$ for which $applicable(a,t)$ holds.
Similarly, the earliest appearance of a reachable fluent $f$ is the lowest $t$ for which $achievable(f,t)$ holds.

\subsubsection{Reachability analysis with inter-dependent actions}

\paragraph{The problem of inter-dependent actions.}

As shown by \textcite{Cooper2013}, the difficulty of doing reachability analysis with interdependent actions is due to the presence of \emph{after-conditions}: conditions that must hold when or after an effect of the action is achieved.
The end condition $y$ of the action $A$ of \autoref{fig:concurrent-actions} is an \emph{after-condition}.
Because the effect of our elementary actions are placed at time 1, an \emph{after-condition} can be easily detected as any condition \timedfacttuple{t}{f} where $t \geq 1$.
All other conditions are referred to as \emph{before-conditions}.
 
The approach taken by \textsc{popf} of splitting an action into instantaneous at-start and at-end actions means that the after-conditions of at-start actions are ignored.
As a result, a start effect can be considered as reachable even when an end condition is not.
In \autoref{fig:concurrent-actions}, the action $A$ would indeed become an action $A_{start}$ containing only an effect $x$ and an action $A_{end}$ with a condition $y$.
In this model, the effect $x$ can thus be produced independently of the after-condition on $y$.
This constitutes an additional relaxation resulting in the elimination of all interdependencies in the delete-free problem.

While this relaxation is reasonable for many generative planning problems, hierarchical problems typically feature many interdependencies between methods and their subtasks.
The next subsection describe a technique for reachability analysis that does not need any additional relaxation, thus taking all after-conditions into account.

\begin{algorithm}[hbtp]
  \caption{Algorithm for identifying reachable actions and fluents and computing their earliest appearance.}
  \label{algo:prop}
  \begin{algorithmic}[1]
    \State $A \gets \text{Elementary actions}$
    \State $F \gets \text{Fluents}$ 
    \State $I \gets \text{Initial timed fluents}$
    \State $Q \gets \emptyset$ \Comment Priority queue of $(action/fluent, time)$ ordered by increasing $time$
    \label{line:start-init}
    \aforall{$f \in F$}{
      \State $reachable(f) \gets \false$ }
    \aforall{$\tuple{\timedfact{t}{f}} \in I$}{
      \State $Q \gets Q \union \{(f,t)\}$ }
    \aforall{$a \in A$}{
      \State $reachable(a) \gets \false$
      \aif{$a$ has no before-conditions}{
        \State $Q \gets Q \union \{(n,0)\}$ } }

    \label{line:end-init}
    \State
    \awhile{Q non empty}{ \label{line:main-loop}
      \State \Call{DijkstraPass}{} \label{line:dij-pass-call} 

      \aforall{$a \in \text{A}$} {
        \aforall{$\tuple{\timedfact{\delta}{f}} \in \text{after-conditions of } a$} {
          \If{$\neg reachable(f)$}
            \label{line:rm-start}
            \State $(A,F) \gets \Call{RecursivelyRemove}{a}$
            \label{line:rm-end}
          \ElsIf{$ea(a) <  ea(f) - \delta$}
            \State $Q \gets Q \union \set{(a, ea(f) - \delta)}$    \label{line:enforce-end}
          \EndIf
        }
      }
      \aforall{$a \in A$}{
        \aif{$a$ is late}{
          \State $(A,F) \gets \Call{RecursivelyRemove}{a}$ \label{line:rm-late}
        }
      }
    }

    \State
    \aprocedure{DijkstraPass}{}{ \label{line:dij-pass-proc}
      \awhile{Q non empty}{
        \State $(n,t) \gets pop(Q)$
        \aif{$n$ already expanded in this pass}{
           \State \textbf{continue}
        }
        \aif{$reachable(n) \wedge ea(n) \geq t$}{ \label{line:inc-check-start}
          \State \textbf{continue} %
          \label{line:inc-check-end}
        }
        \State $reachable(n) \gets \true$ 
        \State $ea(n) \gets t$
        \aifelse{$n$ is an action with the effect \tuple{\timedfact{1}{f}}}{
          \State $Q \gets Q \union \set{(f,t+1)}$
        }{%
          \aforall{$a \in A$ with a condition on the fluent $n$}{\label{line:cond-edge-expansion-start}
            \aif{all before-conditions of $a$ are reachable}{ 
              \State $t' \gets max_{\tuple{\timedfact{\delta}{f}}\: \in\: \function{cond}(a)} ~~ea(f) - \delta$
              \State $Q \gets Q \union \{(a,t')\}$
              \label{line:cond-edge-expansion-end}
            }
          }
        }
      }
    }
  \end{algorithmic}
\end{algorithm}

\def \dmax {\ensuremath{d_{max}}\xspace}

\paragraph{Propagation.}
To handle after-conditions during reachability analysis, as detailed in \autoref{algo:prop}, we alternate two 
steps: \i a propagation that ignores all after-conditions by performing a Dijkstra-like propagation in the graph composed of all fluents and elementary actions; and \ii a second step that enforces all after-conditions.
Those two steps are complemented with a pruning mechanism that repeatedly detects actions that have unsolvable interdependencies.

\autoref{algo:prop} begins by selecting a set of \emph{assumed reachable} nodes (i.e. actions or fluents) from which to start propagation (lines \ref{line:start-init}-\ref{line:end-init}).
The obvious candidates are fluents known to be true at a given time, e.g., fluents achieved by assertions in the problem definition or by actions in the partial plan.
All such fluents have been previously inserted in the initial set of timed fluents ($I$) and are selected.
We also optimistically select all actions that have no \emph{before-conditions}, i.e., actions whose conditions are all after-conditions.
Those assumed reachable elements are inserted into a priority queue $Q$ of $\tuple{n,t}$ pairs where $n$ is either an elementary action or a fluent and $t$ is a candidate time for its earliest appearance.

The initial assumed reachable set is then iteratively extended with all fluents with an assumed reachable achiever and any action whose all before-conditions are assumed reachable.
This is done by a Dijkstra-like propagation (line \ref{line:dij-pass-call}) that extracts the items in $Q$ by increasing earliest appearances.
The corresponding actions (resp. fluents) are marked as reachable and the fluents (resp. actions) depending on them are inserted into $Q$.
More specifically, if a pair $\tuple{a,t}$ is extracted from $Q$ and $a$ is an action with the effect $\tuple{\timedfact{1}{f}}$, the pair $\tuple{f,t+1}$ is inserted into the queue.
If a pair $\tuple{f,t}$, with $f \in F$, is extracted from $Q$, all actions depending on $f$ that have all their before-conditions reachable are pushed onto $Q$ (lines \ref{line:cond-edge-expansion-start}-\ref{line:cond-edge-expansion-end}).

As a second step, we revise our optimistic assumptions by considering after-conditions:
\begin{itemize}%
\item Line~\ref{line:rm-end} removes any action $a$ with an after-condition on an unreachable fluent $f$.
  More specifically, the \procedure{RecursivelyRemove} procedure marks its parameter as unreachable and removes it from the set of actions.
  The removal is recursive: if a removed action is the only achiever for a fluent $f$ then $f$ is removed as well (and as a consequence all actions depending on $f$ will also be removed, etc.).
  Furthermore, if the first achiever of a fluent is removed and there is at least one other
  achiever for it, then the fluent is added back to $Q$  with an updated earliest appearance.

\item Line~\ref{line:enforce-end} takes an after-condition of an action $a$ on a reachable fluent $f$ and enforces the minimal delay $\delta$ between $ea(f)$ and $ea(a)$.
  If the current delay is not sufficient, $a$ is added to $Q$ and will be reconsidered upon the next Dijkstra pass.
\end{itemize}

Finally, \emph{late actions} are marked unreachable and removed from the graph (line~\ref{line:rm-late}).
We say that an action $a$ is \emph{late} if for any \emph{non-late} action $a'$, $\ ea(a') + \dmax < ea(a)\ $ where \dmax is the highest delay in the relaxed model (either of a timed initial literal or between a before-condition and an effect).
In practice, this means that actions are partitioned into non-late and late, these two sets being separated by a temporal gap of at least \dmax.
The intuition (demonstrated in the next subsection) is that the earliest appearance of a late action is being pushed back due to unachievable interdependencies with other late actions.

The two-step process is repeated (line~\ref{line:main-loop}) to take into account the newly updated reachability information.
In the subsequent runs, the Dijkstra algorithm will start propagating from the items updated by the previous iteration, with lines \ref{line:inc-check-start}-\ref{line:inc-check-end} making sure that the earliest appearance values $ea(n)$ are never decreased to a too optimistic value.
The algorithm detects a fix-point and exits if the queue is empty, meaning that after-conditions did not trigger any change.

\subsubsection{Analysis and Possible Variants}
\label{sec:reachability-properties}

We now explore some of the characteristics of Algorithm~\ref{algo:prop}.
The first Dijkstra pass acts as an optimistic initialization: it identifies a set of possibly reachable nodes and assigns them earliest appearance times.
All operations after this first pass will only \emph{(i)} shrink the set of reachable nodes; and \emph{(ii)} increase the earliest appearance times.

It is helpful to see the relaxed problem as a graph whose nodes are the fluents and elementary actions.
Edges either represent a condition (edges from a fluent to an action) or an effect (edges from an action to a fluent).

\begin{definition}[Causal loop]
  We denote as a causal loop a cycle of actions and fluents $f_0\rightarrow A_0 \rightarrow f_1 \rightarrow A_1 \dots A_n \rightarrow f_0$, such that each fluent $f_i$ is a condition of the elementary action $A_i$ and each action $A_i$ is an achiever of the fluent $f_{i+1}$.

  Each edge of this loop is associated with a delay that is respectively the delay from when an action $A_i$ can start to the moment its effect $f_{i+1}$ is achieved, or the delay from when a condition $f_i$ is needed to the moment its containing action $A_i$ can start.

  We say that a causal loop is self-supporting if its length (i.e. the sum of the delays on its edges) is less than or equal to 0.

  A causal loop is said to be unfeasible if its length is strictly positive.
\end{definition}

The notion of causal loop is crucial in the understanding of problems with interdependent actions.
The actions of \autoref{fig:concurrent-actions} form a self-supporting causal loop $A \xrightarrow{0} x \xrightarrow{0} B \xrightarrow{7} y \xrightarrow{-10} A$, which essentially means that $B$ can be used to produce the condition $y$ of $A$ early enough for $A$ to be executable.

On the other hand, if $B$ had a duration of $12$, we would have an unfeasible causal loop $A \xrightarrow{0} x \xrightarrow{0} B \xrightarrow{12} y \xrightarrow{-10} A$.
Indeed, $B$ does not ``fit'' in $A$ anymore and the planner must find another way to achieve either $x$ or $y$  to use those two actions.

\begin{restatable}{proposition}{EarliestAppearanceConvergence}\label{prop:ea}
  If a node (i.e. action or fluent) $n$ is reachable in the relaxed problem, then $ea(n)$ converges to a finite value.
  If a node $n'$ is not reachable then $ea(n')$ either remains at $\infty$ or diverges towards $\infty$ until it is removed from the graph.
\end{restatable}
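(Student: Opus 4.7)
The plan is to prove both parts by (a) establishing monotonicity of $ea$ after the initial Dijkstra pass, (b) exhibiting a finite upper bound on $ea(n)$ when $n$ is reachable, and (c) showing that when $n$ is unreachable, $ea(n)$ is either stuck at $\infty$ or driven up by a strictly positive causal loop until the late-action pruning eliminates $n$.

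First, I would establish monotonicity. After the initial \procedure{DijkstraPass} on line~\ref{line:dij-pass-call}, the check on lines~\ref{line:inc-check-start}--\ref{line:inc-check-end} blocks any downward revision of $ea$ within a pass; and the only reinsertion outside of Dijkstra expansion occurs on line~\ref{line:enforce-end}, which by its own guard $ea(a) < ea(f)-\delta$ can only push a strictly larger candidate. Hence for every surviving node $n$, the sequence of values taken by $ea(n)$ is non-decreasing over the outer iterations.

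Second, for a reachable node $n$, I would exhibit a finite support structure: an acyclic ``causal DAG'' rooted at initial timed fluents in $I$, whose internal nodes are elementary actions, whose leaves feed $n$, and along which every after-condition is satisfied by another branch of the DAG (such a DAG exists precisely because $n$ is truly reachable). The longest accumulated delay along this DAG, call it $B(n)$, is finite and independent of the iteration count. A routine induction on the DAG (mirroring Dijkstra expansion) shows that in every iteration one has $ea(n)\le B(n)$. Combined with monotonicity and the fact that $ea$ takes integer values, this forces convergence to a limit at most $B(n)$.

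For the divergence part, I would split unreachable nodes into three cases. (i) If some before-condition of $n$ has no achiever in the relaxed problem, then $n$ is never popped from $Q$ and $ea(n)$ stays at $\infty$. (ii) If $n$ carries an after-condition on an unreachable fluent, then line~\ref{line:rm-end} invokes \procedure{RecursivelyRemove} on $n$. (iii) Otherwise $n$ lies on a causal loop of strictly positive length $L$, unfeasible by hypothesis: every outer iteration performs a full Dijkstra pass followed by the enforcement on line~\ref{line:enforce-end} along some edge of the loop, and one traversal of the loop increases $ea(n)$ by at least $L$. Thus $ea(n)\to\infty$, eventually exceeding by more than $\dmax$ the $ea$ of any reachable action, at which point the late-action predicate on line~\ref{line:rm-late} fires and $n$ is removed.

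The main obstacle, and the bookkeeping-heavy part, will be case (iii): showing that the late-action test correctly separates divergent $ea$ values from merely large finite ones, so that no reachable node is ever pruned as ``late''. The key invariant I plan to maintain is that $B(m)$, as defined above for any reachable $m$, is bounded by the $ea$ of $m$'s latest prerequisite plus at most $\dmax$; iterating this along the support DAG, the $ea$ values of reachable nodes form a cluster whose width is at most the DAG depth times $\dmax$, and no two consecutive layers are separated by more than $\dmax$. Any node whose $ea$ has drifted more than $\dmax$ above the entire reachable cluster must therefore be divergent, which justifies the soundness of the late-action removal on line~\ref{line:rm-late} and closes the argument.
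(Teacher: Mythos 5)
Your overall strategy (monotonicity of $ea$, a finite upper bound for reachable nodes, forced divergence via strictly positive causal loops for unreachable ones) matches the paper's, but your case analysis for unreachable nodes is not exhaustive, and that is where the argument breaks. Consider a node $n'$ all of whose before-conditions have achievers present in the graph and which carries no after-condition, but whose only support, transitively, passes through an action stuck on an unfeasible causal loop. Such an $n'$ is unreachable, yet it falls under none of your three cases: it is popped from $Q$ in the first Dijkstra pass with a finite $ea$ (so case (i) does not apply), it has no after-condition (so case (ii) does not apply), and it does not itself lie on any causal loop (so case (iii) does not apply). Its $ea$ diverges only because the $ea$ of its upstream support diverges. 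The paper closes this hole with an explicit propagation step: it first establishes divergence for the ``seed'' nodes of false reachability (assumed-reachable actions whose ignored after-conditions turn out to be unreachable, shown to be either on unfeasible loops or dependent on other such seeds), and then observes that every other unreachable-but-assumed-reachable node has all of its shortest-path sources among those seeds, so its $ea$ is dragged to infinity with them. You need an analogous induction over the support structure; without it the second half of the proposition is unproved for exactly the nodes that the late-pruning mechanism is designed to catch.

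There is a second, smaller gap in the convergence half. You define the bound $B(n)$ via a support DAG ``rooted at initial timed fluents in $I$'', but a node can be reachable without any path from $I$: the interdependent pair of \autoref{fig:concurrent-actions} is the canonical example, where the only seeds are actions with no before-conditions and reachability is certified by a self-supporting (non-positive-length) causal loop rather than by a DAG hanging off $I$. The paper's proof explicitly takes as its base set both the fluents of $I$ and the elements of self-supporting causal loops, and uses the fact that the loop length is $\leq 0$ to argue that the after-condition enforcement never pushes these seeds upward. Your construction can be repaired by admitting actions without before-conditions as additional roots and verifying that enforcement along a non-positive loop is a no-op, but as written the claim that ``such a DAG exists precisely because $n$ is truly reachable'' is false. (A minor point: your monotonicity argument omits the reinsertion performed by \procedure{RecursivelyRemove} when a fluent loses its first achiever but retains another; that reinsertion is still monotone, so the conclusion survives.)
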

\begin{proof}[Proof (Sketch)]
    We sketch the proof that is fully given in Appendix~\ref{sec:proof-earliest-appearance}. 
    An action or fluent $n$ is reachable if there is either a path from initial facts to $n$ or if $n$ is part of a self-supporting causal loop (i.e. cycle of negative or zero length).
  Consequently and because the earliest appearance can only increase, repeated propagations will eventually converge.
  On the other hand, an unreachable node either depends on an unreachable node or is involved only in causal cycles of strictly positive length.
  If the node was ever assumed reachable, its earliest appearance will thus be increased by \autoref{algo:prop} until it is removed from the graph.
\end{proof}

\begin{restatable}{proposition}{LateNodesUnreachable} \label{prop:late-impos}
  If a node is put in the late set, then it is not reachable.
\end{restatable}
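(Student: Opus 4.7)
The plan is to prove the contrapositive: every reachable action $a$ satisfies $ea(a) \leq M + \dmax$, where $M = \max_{a' \in N} ea(a')$ is the maximum earliest appearance among non-late actions. Suppose for contradiction that some late action $a^*$ is reachable; by Proposition~\ref{prop:ea} its $ea(a^*)$ is finite, and by the definition of \emph{late} we have $ea(a^*) > M + \dmax$. I would pick $a^*$ to minimize $ea(a^*)$ among reachable late actions.

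The first technical step is a one-step bound on propagation. At the algorithm's fixed point, $ea(a^*) = \max_{[\delta]f \in cond(a^*)} (ea(f) - \delta)$; let $[\delta^*]f^*$ be the binding condition and $a^{**}$ the achiever of $f^*$, so that $ea(a^*) - ea(a^{**}) = 1 - \delta^*$. The definition of $\dmax$ gives $1 - \delta^* \leq \dmax$ for before-conditions directly and $1 - \delta^* \leq 0 \leq \dmax$ for after-conditions; likewise, if $f^* \in I$ then $ea(f^*) \leq \dmax$. In either case the predecessor has $ea \geq ea(a^*) - \dmax > M$, and so it cannot be non-late. In the before-condition sub-case ($\delta^* \leq 0$), we additionally obtain $ea(a^{**}) < ea(a^*)$, and then $a^{**}$ is a reachable late action with strictly smaller $ea$, contradicting the minimality of $a^*$.

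The main obstacle is the after-condition sub-case ($\delta^* \geq 1$), where $ea(a^{**}) \geq ea(a^*)$ and minimality alone does not yield a contradiction. I would close it by iterating: by the argument above, every max-achieving parent along the chain remains in the late set, and since the late set is finite the chain must eventually cycle. The increments $1 - \delta^*_i$ along the cycle sum to zero; since before-condition edges contribute at least $1$ and after-condition edges contribute at most $0$, this is possible only if every cycle edge is an after-condition with $\delta^*_i = 1$, i.e., the cycle is a self-supporting causal loop living entirely inside the late set. Such a loop cannot acquire reachability from outside, because any supporting link from a non-late predecessor would have to be a single propagation step bridging the gap strictly greater than $\dmax$, which the one-step bound above forbids. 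This contradicts the assumption that $a^*$ is reachable, completing the proof.
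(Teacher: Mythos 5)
Your opening move is the same as the paper's: the observation that one propagation step changes an $ea$ value by at most $\dmax$, so that the predecessor of a late node must itself be late, is exactly how the paper shows the late set is closed under predecessors (and, a point you only gesture at in your last sentence, closed under \emph{all} achievers of a late fluent, which matters because a fluent's $ea$ is a minimum over its achievers). The proof breaks at the final step. Following predecessors inside the finite late set until they cycle is fine, but your fixed-point computation assigns the cycle a total length of exactly zero, i.e., a \emph{self-supporting} causal loop --- and in the relaxed semantics a self-supporting loop is precisely a structure whose members \emph{are} reachable with no outside support (the proof of \autoref{prop:ea} classifies nodes on self-supporting loops as trivially reachable; cf.\ the interdependent actions of \autoref{fig:concurrent-actions}). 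Exhibiting such a loop therefore does not contradict the reachability of $a^*$; if anything it would certify it, and your remark that the loop ``cannot acquire reachability from outside'' is beside the point because a self-supporting loop needs no outside support. (The intermediate claim that a zero sum forces every cycle edge to be an after-condition with $\delta_i^*=1$ is also false --- one before-edge contributing $+1$ and one after-edge contributing $-1$ already sum to zero --- but that is minor next to the sign problem.)

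The paper escapes this trap by reasoning about the \emph{dynamics} of propagation rather than a static fixed point: a cycle in the predecessor relation records a chain of updates that wrapped around, and since each update strictly increases an $ea$ value, such a cycle has strictly \emph{positive} length, i.e., it is an \emph{infeasible} causal loop. It then deletes one effect edge of that loop (provably without altering the relaxed problem), notes that the late set remains ``self-dependent,'' and iterates until some late fluent has no achiever left --- which is what actually establishes unreachability \emph{in the relaxed problem}, not merely that the algorithm's bookkeeping looks bad. Your argument never leaves the bookkeeping: the predecessor of $a^*$ is just the achiever currently recorded by the algorithm, so declaring it ``reachable'' (as your minimality step does) and then refuting reachability via properties of recorded $ea$ values conflates the data structure with the semantic property being proved. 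To repair the proof you would need (i) strict positivity of predecessor cycles, which comes from the update dynamics and is incompatible with your fixed-point equalities, and (ii) closure of the late set under \emph{all} achievers together with the edge-removal iteration, so that the infeasibility of one support structure cannot be escaped through an alternative achiever.
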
 
\begin{proof}[Proof (Sketch)]
  We sketch the proof that is fully given in Appendix~\ref{sec:proof-late-impos}. 
  The intuition is that the gap between non-late and late nodes appeared because late nodes are delaying each others due to positive causal cycles.
  We first show that any late node was delayed to its current time due to a dependency on another late node: because the temporal gap is bigger than all delays in the model, a non-late node could not have influenced a late node.
  It follows that any late node depends on at least one other late node.
  Furthermore a late node necessarily participates in a positive cycle or depends on a late node that does.
  From there, one can show that at least one node $n$ in this group is involved only in positive cycles.
  Any other possibility (path from initial timed literals or negative cycle) would have resulted in $n$ being less than \dmax away from a non-late node.
\end{proof}

\newcommand{\hfull}{\ensuremath{R_\infty}\xspace}
\newcommand{\hk}{\ensuremath{R_K}\xspace}
\newcommand{\hfive}{\ensuremath{R_5}\xspace} 
\newcommand{\hone}{\ensuremath{R_1}\xspace}
\newcommand{\hplus}{\ensuremath{R^{+}}\xspace}

It follows from propositions~\ref{prop:ea} and~\ref{prop:late-impos} that Algorithm~\ref{algo:prop} produces a reachability model (denoted as \hfull) that contains a fluent or action  $n$ and its earliest appearance $ea^*(n)$ iff $n$ is reachable in the relaxed problem.
In the worst case, computing this model has a pseudo-polynomial complexity since there may be as many as $\dmax$ iterations of the algorithm ($\dmax$ being the highest delay in the relaxed model).
As we will see in the experiments (\autoref{sec:evaluation-reachability}), this worst case doesn't tend to occur in practice; typically, quiescence occurs after a relatively small number of iterations.
The cost of each iteration is dominated by the Dijkstra pass of $O(\sizeof{N}\times \log(\sizeof{N}) +\sizeof{E})$, where $N$ is the number of fluents and actions and $E$ is the number of conditions and effects appearing in actions.

\paragraph{Discussion:} One might consider computing various approximations of \hfull by limiting the number of iterations to a fixed number $K$, making the algorithm polynomial in $O(K\times ( \sizeof{N}\times\log(\sizeof{N}) +\sizeof{E}))$ for producing a reachability model \hk.
In the special case where $K=1$, this is equivalent to 
performing a single Dijkstra pass and removing all actions with an unreachable
after-condition.
Increasing $K$ would allow the algorithm to better estimate the earliest appearances
and detect additional late nodes.

Another simplification is to ignore all after-conditions, which can be done by stopping Algorithm~1 after the first Dijkstra pass.
In practice, this model has all the characteristics of the temporal planning graph of {\sc popf}:
\i the separation of durative actions into at-start and at-end instantaneous actions is done by the transformation into elementary actions;
\ii the minimal delay between matching at-start and at-end actions is enforced by the presence of start conditions in the elementary actions representing the end effects; and
\iii any end condition appearing in the elementary action of a start effect would be ignored because it would be an after-condition.
This model, that we call \hplus, is thus a direct adaptation of the techniques used in {\sc popf} to our richer action representation.

Note that \hplus and \hfull are equivalent on all problems with no after-conditions.
Classical planning obviously falls in this category as well as any PDDL model with no at-start effect or no at-end condition. 
In fact, on such problems \hplus and \hfull are equivalent to building a temporal planning graph, with no additional computational overhead.

\subsubsection{Exploiting the results of a reachability analysis}
\label{sec:using-reachability}

For a given partial plan $\phi$, a reachability analysis provides us with:
\begin{itemize}
\item $Ra_\phi$, a set of actions reachable in the relaxed problem,
\item $Rf_\phi$, a set of fluents reachable in the relaxed problem,
\item $ea_\phi : (Ra_\phi \union Rf_\phi) \rightarrow \mathcal{N}$ a function associating each reachable action and fluent with an
  optimistic earliest time at which it can be added or achieved in a solution plan.
\end{itemize}

These are computed for any partial plan that is extracted from the priority queue for expansion.
Because all computed values are optimistic, \autoref{algo:prop} can be run incrementally by initializing the set of reachable nodes and earliest appearances with those computed for the previous partial plan.
While the worst case complexity of the incremental version is unchanged, our implementation suggests that it avoids a lot of redundant computation.
The results of a reachability analysis are used in many parts of the planner to prune parts of the search space and derive additional constraints on the current partial plan:
\begin{itemize} 
\item For any unsupported assertion $\alpha \in \assertions_\phi$, if its condition cannot be instantiated to a reachable fluent $f \in Rf_\phi$, then the partial plan is marked as a dead-end and search proceeds with the next best partial plan.
Otherwise we temporally constrain $\alpha$ to be at least as late as its earliest reachable instantiation. This is done by adding the following constraint to the STN:
\[
\diststn(\timeorigin,start(\alpha)) \geq \min \set{ea(f) \suchthat f \in Rf_\phi \cap dom(cond(\alpha)) }
\]
where $\timeorigin$ is the temporal origin and $dom(cond(\alpha))$ denotes all possible instantiations of the fluent that is required by $\alpha$.
\item We check that all unrefined tasks can be refined by a reachable action.
This is done by restricting the domain of any refinement variable (\autoref{sec:inst-supp-constr}) to reachable actions:

\[
\dom{\refinementvar{\tau}} \subseteq Ra_\phi
\]

If a task has no possible refinement (i.e. one refinement variable has an empty domain) then the partial plan is declared a dead-end.
Otherwise, the earliest start time of all unrefined tasks is updated to be at least as late as the earliest reachable action that can refine it.

\item When considering unsupported assertions or unrefined task flaws, we disregard any resolver involving an action with no reachable instances.
For instance, if there is no instances of the $\act{move}$ action in $Ra_\phi$, then the planner would not consider the insertion of $\act{move}$ to support an assertion on the location of a robot.
In this case, the planner would need to rely on assertions already in the partial plan.
\item All domain transition graphs (to be introduced in \autoref{sec:causal-network}) are updated by removing any transition provided by an unreachable action.
This update has indirect effects, since it allows more reliable information when reasoning on causal networks.
\item When creating the instantiation variables of \autoref{sec:inst-supp-constr}, the domain of these variables is constrained to be a subset of $Ra_\phi$.
This indirectly constrains the parameters of any newly added action to respect reachability requirements.
\end{itemize}

\subsection{Causal Network}
\label{sec:causal-network}

\def \cn {\ensuremath{G}\xspace}
\def \cl {\rightarrow}
\def \potsup {\ensuremath{\mathcal{S}}\xspace}

We define the causal network of a partial plan $\phi$ as the graph $\cn_\phi = \tuple{N,E}$ where $N$ is the set of assertions in $\assertions_\phi$ and $E$ contains an edge $x \rightarrow y$ iff there is a causal link stating that $x$ supports $y$.
This causal network is explicitly maintained by the planner by adding edges when new causal links are inserted and adding nodes when new assertions are introduced by newly added actions.
For a partial plan to be a solution, the corresponding causal network must be such that:
\begin{itemize}
\item every assertion $x \in N$ that is not a priori supported has an incoming edge (i.e. an incoming causal link),
\item any change assertion or a priori supported assertion $x \in N$ has at most one outgoing edge that targets a change assertion. In addition, $x$ might support several persistence conditions.
\end{itemize}

In this section, we show how this graph $\cn_\chronicle$ can be exploited to infer additional constraints on the partial plan and extract heuristic information.

\begin{definition}[Causal Chain]
  A causal chain is a sequence of change assertions $\tuple{\beta_1,\dots,\beta_n}$ such that for any element $\beta_i$ there is a causal link to its direct successor $\beta_{i+1}$.

  A causal chain spans over the temporal interval $[start(\beta_1),end(\beta_n)]$ and is said to be about the state variable $sv$ common to all its composing assertions. 
  Over its temporal interval, a causal chain fully constraints the evolution of its state variable.
\end{definition}

We say that two causal chains \emph{possibly overlap} if their state variables can be unified by consistent binding constraints and they span over two possibly overlapping temporal intervals. 
Two causal chains \emph{necessarily overlap} if every consistent instantiation overlaps.
Two necessarily overlapping causal chains result in an unsolvable threat because at least one change assertion of the first chain would temporally overlap a change assertion or a causal link of the second chain.

In order to facilitate the reasoning on the possible transitions that can be taken by a state variable, we now introduce Domain Transition Graphs.

\def \nodesdtg {V}
\def \edgesdtg {T}
\def \anydtg {\textsc{any}\xspace}
\newcommand{\distdtg}{\ensuremath{dist_{DTG}}}
\begin{definition}[DTG]
  The Domain Transition Graph (DTG) of a state variable $sv$ is a directed graph $(\nodesdtg,\edgesdtg)$ where $\nodesdtg$ is composed of the
  values that can be taken by $sv$ and a special node \anydtg.
  $\edgesdtg$ is a set of allowed transitions from one value to the other. An edge in $\edgesdtg$ is of the form $v_1 \xrightarrow{d} v_2$ meaning that the value of $sv$ can be changed from $v_1$ to $v_2$ in $d$ time units.
  In the special case where $v_1 = \anydtg$, it means that $sv$ can take the value $v_2$ regardless of its previous value (even if $sv$ had no known previous value).

  DTGs are used to reason on the changes that can be made to a state variable $sv$ through the addition of new actions in a partial plan.
  For any ground action $a$ that is reachable (according to reachability analysis):
  \begin{itemize}
  \item if the action contains a change assertion \changetuple{t_1,t_2}{sv}{v_1}{v_2}, then the DTG of $sv$ contains an edge $v_1 \xrightarrow{min(t_2-t_1)} v_2$,
  \item if the action contains an a priori supported assertion \assignmenttuple{t}{sv}{v}, then the DTG of $sv$ contains the edge $\anydtg \xrightarrow{1} v$.
  \end{itemize}
\end{definition}

We say that there is a \emph{feasible transition} of a state variable $sv$ from a value $v_1$ to a value $v_2$ if there is a path in the DTG of $sv$ from $v_1$ to $v_2$ or from \anydtg to $v_2$.
We denote as $\distdtg(v_1,v_2)$ the length of the shortest such path.
An example of a DTG is given in \autoref{fig:dtg-loc}.

\begin{figure}
  \centering
  \begin{tikzpicture}
    \def \n {7}
    \def \radius {1.5cm}
    \foreach \s in {0,...,6}
    {
      \node[circle] at ({360/\n * (- \s)}:\radius) (d\s) {$\constant{d\s}$};
    }
    \path[->, >=latex,bend left=20] (d0) edge node[above,sloped] {\tiny10} (d1);
    \path[->, >=latex,bend left=20] (d1) edge node[above,sloped] {\tiny10} (d2);
    \path[->, >=latex,bend left=20] (d2) edge node[above,sloped] {\tiny10} (d3);
    \path[->, >=latex,bend left=20] (d3) edge node[above,sloped] {\tiny10} (d4);
    \path[->, >=latex,bend left=20] (d4) edge node[above,sloped] {\tiny10} (d5);
    \path[->, >=latex,bend left=20] (d5) edge node[above,sloped] {\tiny10} (d6);
    \path[->, >=latex,bend left=20] (d6) edge node[above,sloped] {\tiny10} (d0);
  \end{tikzpicture}
  \caption{Example DTG of $\sv{loc}(r)$: where the location in which the robot $r$ can navigate are organized in a circular pattern. Moving from one place to the next takes 10 time units.}
  \label{fig:dtg-loc}
\end{figure}
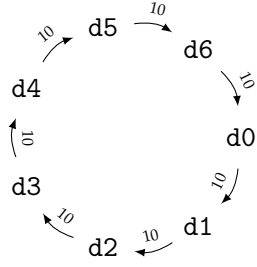

\subsubsection{Possible Supporters}
\label{sec:potential-supporters}

In order to appear in a solution plan, any unsupported assertion must eventually be linked to a supporting assertion. This link can take the form of a single causal link or of a chain of causal links going through statements not yet in the plan. We refer to the candidates for such supporting assertions as \emph{possible supporters}.

\begin{definition}[Possible supporter]
  Given the causal network $\cn_\phi$ of a partial plan $\phi$, a change assertion $\beta$ is a \emph{possible supporter} of an unsupported assertion $\alpha$ if there is a set of statements $\{s_1,\dots,s_n\}$ and a chain of causal links $\beta \cl s_1 \cl \dots \cl s_n \cl \alpha$ that can be added to $\cn_\phi$. 
\end{definition}

For an unsupported assertion $\alpha$, we consider a superset of the set of possible supporters, noted $\potsup_\alpha$. This set is incrementally updated to contain any change assertion $\beta \in N$ that satisfies the following required conditions:
\begin{itemize}
\item the state variables of $\alpha$ and $\beta$ are unifiable.
\item there is a feasible transition in the DTG from the value produced by $\beta$ to the value required by $\alpha$.
\item adding a chain of causal links from $\beta$ to $\alpha$ will not result in any unsolvable threat. We consider that there is an unsolvable threat, if the causal chain obtained by concatenating the current causal chains of $\alpha$ and $\beta$ would necessarily overlap an existing causal chain on the same state variable.
\end{itemize}

In the causal network example of \autoref{fig:ex-causal-network}, the possible supporters of  $\alpha$ would be $\beta$ and $\mu$  because $\beta/\mu$ can come before $\alpha$ and the DTG has a path from $d_1/d_6$ to $d_2$.
The possible supporters of $\gamma$ would be $\beta$ and $\alpha$ because $\beta/\alpha$ can come before $\gamma$ and the DTG has a path from $d_1/d_3$ to $d_4$.  
 The possible supporters of $\rho$ are $\alpha$ and $\mu$; indeed any causal chain from $\beta$ to $\rho$ would be threatened by $\alpha$ and by the causal chain of $\gamma$ and $\mu$.

 In search, we restrict resolvers of an unsupported condition $\alpha$ to the assertions in $\potsup_\alpha$. This removes infeasible resolvers and thus reduces the number of branches in the search tree.

\begin{figure}
  \centering
  \begin{tikzpicture}
    \pic[from=d_0,to=d_1,duration=10,assertionID=\beta] (beta) {change};
    \pic[from=d_2,to=d_3,duration=10,assertionID=\alpha,above right=1.5cm and 5.5cm of beta-c] (alpha) {change};
    \pic[from=d_4,to=d_5,duration=10,assertionID=\gamma,below right=1.5 and 4cm of beta-c] (gamma) {change};
    \pic[from=d_5,to=d_6,duration=10,assertionID=\mu,right=3cm of gamma-c] (mu) {change};
    
    \pic[value=d_1,duration=2,assertionID=\rho, right=11cm of beta-c] (rho) {persistence};

    \draw[causal link] (gamma-to) to[out=0,in=180] (mu-from);
    \def \sm {\smaller[2]}
    \path[->,color=gray,dashed,above,sloped,midway,bend left=15,thick] 
    (beta-to) edge node[midway,above] {\sm before} (alpha-from)
    (beta-to) edge[bend right=15] node[below] {\sm before} (gamma-from)
    (alpha-to) edge[] node {\sm before} (rho-from)
    (mu-to) edge[bend right=15] node[below] {\sm before} (rho-from);

  \end{tikzpicture}
  \caption{Partial view of a causal network of a state variable $\sv{loc}(r)$ with 4 change assertions $(\beta,\alpha,\gamma,\mu)$ and one persistence condition $\rho$, all on the same state variable $loc(r)$.
    There is a causal link from $\gamma$ to $\mu$, $\beta$ is temporally constrained to be before $\alpha$ and $\gamma$, and $\rho$ is temporally constrained to be after $\alpha$ and $\mu$.
    We further suppose $\beta$ to be initially supported.
    This causal network is to be considered together with the DTG of \autoref{fig:dtg-loc}.
  }
  \label{fig:ex-causal-network}
\end{figure}
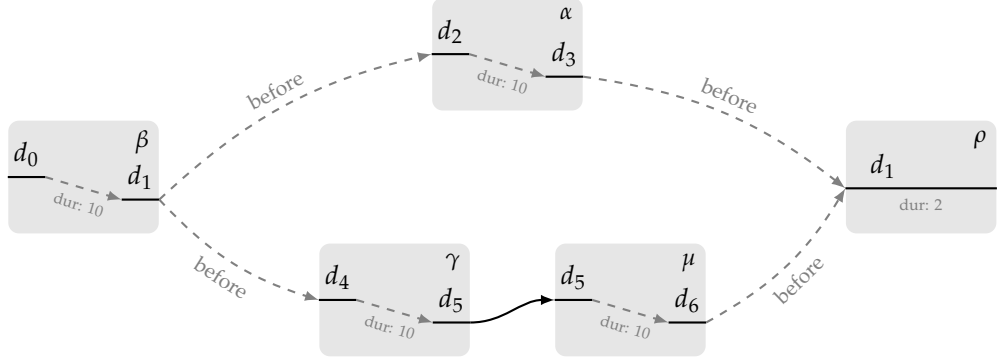

\subsubsection{Deriving Constraints from Potential Supporters}
\label{sec:causal-network-constraints}

We now consider what temporal constraints can be inferred from the necessary evolution of a causal network.
In order to keep the explanations and notations concise, we first assume that actions do not contain any \emph{a priori} supported assertions.

Given this assumption, an unsupported assertion $\alpha$ must eventually be linked to a change assertion $\beta \in \potsup_\alpha$ by a chain of causal links $\beta \rightarrow\dots\rightarrow \alpha$.
The length of the causal chain depends on the change assertions needed to go from the value produced by $\beta$ (denoted $\function{eff}(\beta)$) to the value needed by $\alpha$ (denoted $\function{cond}(\alpha)$).
Therefore, $start(\alpha)$ must be after $end(\beta)$ with a delay depending on the times of $\function{eff}(\beta)$ and $\function{cond}(\alpha)$.
More formally, this requirement is expressed by the following inequality,

\[
dist_{STN}(\timeorigin,start(\alpha)) \geq \min_{\beta \in \potsup_\alpha} \; dist_{STN}(\timeorigin, end(\beta)) + dist_{DTG}(\function{eff}(\beta),\function{cond}(\alpha))%
\]

where $dist_{STN}(\timeorigin,t)$ is the minimal delay in the STN between the origin of time \timeorigin and the time point $t$ and $dist_{DTG}(x,y)$ represents the length of the minimal path in the DTG to go from any instantiation of $x$ to any instantiation of $y$.
If this inequality does not hold, it is enforced by setting the earliest time of $start(\alpha)$ to be greater or equal to the right side of the inequality.

In the case where an assertion $\alpha$ has a single possible supporter $\beta$, one can devise a more specific version that does not use a triangular inequality:

\[
dist_{STN}(end(\beta),start(\alpha)) \geq dist_{DTG}(\function{eff}(\beta),\function{cond}(\alpha))
\]

Again this inequality is enforced by adding in the STN a minimal delay constraint between $end(\beta)$ and $start(\alpha)$.

In the case where some actions contain an \emph{a priori} supported assertion and that $\alpha$ can be achieved using one such assertion (i.e. $\distdtg(\anydtg,\function{cond}(\alpha)) \neq \infty$), the above rules are generalized by considering a virtual \emph{possible supporter} that could support it at time $\distdtg(\anydtg,\function{cond}(\alpha))$.

\begin{example}
  Let us now consider what applying those rules on the causal network of \autoref{fig:ex-causal-network} would allow us to infer.
  Assuming that $dist_{STN}(\timeorigin,st_\beta) = 0$, we would infer the following temporal constraints:
  \begin{align*}
    dist_{STN}(\timeorigin,start(\alpha)) & \geq \min~ \set{ dist_{STN}(\timeorigin,end(\beta)) + 10,~ \diststn(\timeorigin,end(\mu))+30 } \\
                                      & \geq 20 \\
    \diststn(\timeorigin,start(\gamma)) & \geq \min~ \set{ \diststn(\timeorigin,end(\beta)) + 30,~ \diststn(\timeorigin,end(\alpha)) + 10 } \\
                                      & \geq 40 \\
    \diststn(\timeorigin,start(\rho)) & \geq \min ~\set{ \diststn(\timeorigin,end(\alpha)) + 50,~ \diststn(\timeorigin,end(\mu)) + 20 } \\
                                      & \geq 80
  \end{align*}

  The important catch is the detection that the persistence condition $\rho = \tuple{loc(r)=d_1}$ cannot be satisfied before time 80.
  This is because the planner has already made commitments to other change assertions, between when the value $d_1$ is first achieved by $\beta$ and the moment it is required by $\rho$.\EndOfTheoremMark
\end{example}

\subsubsection{Estimating the number of additional assertions needed for a valid causal chain}
\label{sec:min-causal-chain}

We further use the causal network as part of heuristic evaluation in order to estimate how many additional assertions are needed in order to support an unsupported assertion.

For each unsupported assertion $\alpha$, the key idea is to find a chain of causal links going from an \emph{a priori} supported assertion to $\alpha$. 
We seek a minimal chain: filling out the missing parts should result in as few additional assertions as possible.
\autoref{fig:ex-causal-chain} gives an example of the minimal causal chain needed to support the persistence condition $\rho$ from \autoref{fig:ex-causal-network}.
Building such a causal chain requires the addition of 4 change assertions, resulting in as many new open goals to be solved.

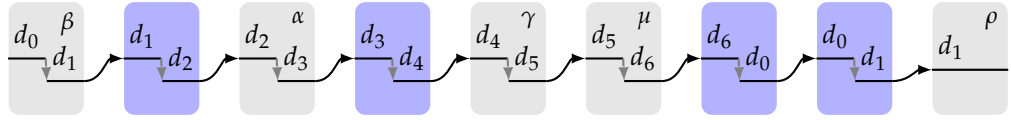
\begin{figure}[h]
  \centering
  \begin{tikzpicture}
    \tikzstyle{base} = [width=1cm]
    \tikzstyle{tmp} = [base,background=blue!30]
    \pic[base,from=d_0,to=d_1,assertionID=\beta] (beta) {change};
    \pic[tmp,from=d_1,to=d_2,right=1.4cm of beta-c] (tmp1) {change};
    \pic[base,from=d_2,to=d_3,assertionID=\alpha,right=1.4cm of tmp1-c] (alpha) {change};
    \pic[tmp,from=d_3,to=d_4,right=1.4cm of alpha-c] (tmp2) {change};
    \pic[base,from=d_4,to=d_5,assertionID=\gamma,right=1.4cm of tmp2-c] (gamma) {change};
    \pic[base,from=d_5,to=d_6,assertionID=\mu,right=1.4cm of gamma-c] (mu) {change};
    \pic[tmp,from=d_6,to=d_0,right=1.4cm of mu-c] (tmp3) {change};
    \pic[tmp,from=d_0,to=d_1,right=1.4cm of tmp3-c] (tmp4) {change};
    \pic[base,value=d_1,assertionID=\rho, right=1.4cm of tmp4-c] (rho) {persistence};

    \draw[causal link,out=0,in=180] (beta-to) to (tmp1-from);
    \draw[causal link,out=0,in=180] (tmp1-to) to (alpha-from);
    \draw[causal link,out=0,in=180] (alpha-to) to (tmp2-from);
    \draw[causal link,out=0,in=180] (tmp2-to) to (gamma-from);
    \draw[causal link,out=0,in=180] (gamma-to) to (mu-from);
    \draw[causal link,out=0,in=180] (mu-to) to (tmp3-from);
    \draw[causal link,out=0,in=180] (tmp3-to) to (tmp4-from);
    \draw[causal link,out=0,in=180] (tmp4-to) to (rho-from);
  \end{tikzpicture}
  \caption{One possible causal chain to support the persistence condition $\rho = \tuple{loc(r)=d1}$. In blue are temporal assertions that would need to be added for the causal chain to be complete.}
  \label{fig:ex-causal-chain}
\end{figure}

\def \hc {hc}

We now describe how we compute the heuristic value, $\hc(\alpha)$, an estimation of the number of additional assertions that are needed to build a complete causal chain to the open goal $\alpha$.
We first remark that our lifted representation means that there are multiple candidates for the instantiation of the condition of $\alpha$.
We thus introduce $\hc(f,\alpha)$ as the cost of building the causal chain to $\alpha$ if its condition is the ground fluent $f$.
Since the planner has the choice in the instantiation of variables, we consider the cost of building the causal chain to $\alpha$ to be the minimum of the cost of building it with any possible instantiation of its condition:

\begin{equation}
  \hc(\alpha) = \min_{\fluent{sv}{v} \in dom(cond(\alpha))} hc(\fluent{sv}{v},\alpha) \label{eq:hc-global}
\end{equation}

where $\hc(f,\alpha)$ is the cost of achieving $\alpha$ if its condition is $f$ and $dom(cond(\alpha))$ is the set of possible instantiations of the condition of $\alpha$.

We define $\hc(f,\alpha)$, the cost of achieving the ground condition $f = \fluent{sv}{v}$ of an assertion $\alpha$.
The cost of building a causal chain is expressed recursively as the cost of having its rightmost link plus the cost of building a chain up to this rightmost link.

\begin{equation}
  \hc(\fluent{sv}{v},\alpha) =
    \begin{cases}
      0 & \text{if $\alpha$ is a priori supported} \\[20pt]
      \displaystyle \min_{\tuple{sv=v'\mapsto v)} \in dom(\beta)} \hc(\fluent{sv}{v'},\beta) & \text{if } \exists \text{ a causal link } \beta \cl \alpha \\[20pt]
      \min 
        \begin{cases}
          \displaystyle  \min_{\gamma \in \potsup_\alpha,~ \tuple{sv=v'\mapsto v} \in dom(\gamma)} \hc(\fluent{sv}{v'},\gamma) & \\[13pt]%
          \displaystyle \min_{e=\tuple{v' \rightarrow v} \in DTG(sv)} c(e) + \hc(\fluent{sv}{v'},\alpha) &
        \end{cases} & \text{otherwise} \\
    \end{cases}
  \label{eq:hc-with-fluent}
\end{equation}

Intuitively, there is no additional cost if $\alpha$ is \emph{a priori} supported because there is no need for any causal support (i.e. $\hc(\cdot,\alpha) = 0$).

If $\alpha$ is supported by an incoming causal link $\beta \cl \alpha$, this causal link is necessarily the last link of the causal chain to $\alpha$ (for instance in \autoref{fig:ex-causal-network}, the last link of any causal chain to $\mu$ is the existing causal link $\gamma \cl \mu$).
Thus, the cost of achieving the condition $f$ of $\alpha$ is the cost of achieving the condition $f'$ of $\beta$, where $f'$ is an instantiation of the condition of $\beta$ such that $\beta$ produces $f$.

If $\alpha$ is in neither of these cases, we are left with two possibilities for the last link of its causal chain.
First, if $\alpha$ has a possible supporter $\gamma \in \potsup_\alpha$ and $\gamma$ can be instantiated to provide $f$, then a possibility is to have a causal link $\gamma \cl \alpha$.
In this case, the cost is that of achieving $\gamma$ with such an instantiation.
Second, there might be a possible action producing $f = \fluent{sv}{v}$ by adding a change assertion \change{t,t'}{sv}{v'}{v}.
Such a change would appear as an edge $e$ in the DTG of $sv$.
Since adding this link in the causal chain requires inserting a new action in the plan, we associate a cost $c(e)$ to this operation.
This cost is set to the number of unsupported assertions in the introduced action, e.g., for the action \act{move} of \autoref{fig:move-action} this cost would be 3 since its insertion would result in 3 new assertions.
Furthermore, we still need to build the causal chain to achieve the value $\fluent{sv}{v'}$ for $\alpha$.

\begin{example}
  Considering the causal network of \autoref{fig:ex-causal-network}, the equation below gives the estimated cost of building a causal chain to $\alpha$ (i.e. $\hc(\alpha)$).
  Since all variables in $\alpha$ are already bound, there is a single possibility for instantiating its condition (first line).
  From there, the only possibility to provide the value $\constant{d2}$ is to insert an additional $\act{move}(r,\constant{d1},\constant{d2})$ action, resulting in an additional cost of 3 (second line).
  To provide the value $\constant{d1}$, we can either have a causal link from $\beta$ or add another action $\act{move}(r,\constant{d0},\constant{d1})$ again with an additional cost of 3 (third line).
  Since $\beta$ is initially supported, it induces no extra cost and we can conclude that $\hc(\alpha) = 3$: building a complete causal chain to support $\alpha$ would require the insertion of three new assertions into the partial plan.
  \begin{align*}
    hc(\alpha) &= \hc(\fluent{loc(r)}{\constant{d2}}, \alpha) \\
               &= 3 + \hc(\fluent{loc(r)}{\constant{d1}}, \alpha) \\
               &= 3 + \min~ \{~ \hc(\fluent{loc(r)}{\constant{d0}}, \beta),~~ 3 + \hc(\fluent{loc(r)}{\constant{d0}}, \alpha) ~\} \\
               &= 3 + \min~\{~0, \dots~\} \\ 
               &= 3
  \end{align*}
\end{example}

In practice for computing $hc(\alpha)$, we use a distance computation in an equivalent graph where each node is a pair $\tuple{f,\alpha}$, $f$ being a fluent and $\alpha$ an assertion in the causal network.
An example of such a graph for the causal network of \autoref{fig:ex-causal-network} is given in \autoref{fig:ex-causal-chains-graph}.
Edges in the graph represent the different possible transitions defined in equation~\eqref{eq:hc-with-fluent}.
We distinguish a causal link from an existing assertion (green with a cost of 0) from a causal link from an additional assertion (dashed red with a non-zero cost).
We consider two special kinds of nodes: nodes representing \emph{a priori} supported conditions (empty circles) and nodes representing unsupported conditions (full circles).
Finding a minimal causal chain to an unsupported assertion $\alpha$ is equivalent to finding the shortest path in the graph from any \emph{a priori} supported node to any node representing an instantiation of $\alpha$.

\begin{figure}[!tb]
  \centering
  \begin{tikzpicture}[
    ord/.style={anchor=east},
    short edge/.style={shorten >=0.2cm,shorten <=0.2cm},
    cnedge/.style={color=darkgreen,->,>=latex,short edge,thick},
    dtgedge/.style={dashed,color=darkred,->,>=latex,short edge},
    open goal/.style={circle,fill=gray},
    supported/.style={circle,draw}]

    \newcommand*{\xMin}{0}%
    \newcommand*{\xMax}{6}%
    \newcommand*{\yMin}{0}%
    \newcommand*{\yMax}{4.3}%
    \def \ybeta  {4}
    \def \yalpha {3}
    \def \ygamma {2}
    \def \ymu {1}
    \def \yrho {0}
    
    \newcommand{\drawdtgedges}[2]{
      \foreach \i in {0,...,6} {
          \ifthenelse{\i=6} {
            \path[dtgedge,dotted] (6,#1) edge (6.8,#1)
            (-0.8,#1) edge (0,#1);
          }{
            \draw[dtgedge] (\i,#1) -- node[above] {\tiny 4} ($ (\i + 1 ,#1) $);
          }
      }
    }

    \foreach \i in {\xMin,...,\xMax} {
      \draw [very thin,gray] (\i,\yMin) -- (\i,\yMax)  node [above,black] at (\i,\yMax) {$d_\i$};
    }
    
    \node[ord] at (-1,\ybeta) {$\beta$};
    \node[ord] at (-1,\yalpha) {$\alpha$};
    \node[ord] at (-1,\ygamma) {$\gamma$};
    \node[ord] at (-1,\ymu) {$\mu$};
    \node[ord] at (-1,\yrho) {$\rho$};
    
  \path[cnedge]
    (0,\ybeta) edge node[above] {\tiny 0} (1,\yalpha)
    (0,\ybeta) edge node[below] {\tiny 0} (1,\ygamma)
    (2,\yalpha) edge node[above]  {\tiny 0} (3,\ygamma)
    (2,\yalpha) edge node[above] {\tiny 0} (3,\yrho)
    (4,\ygamma) edge node[above] {\tiny 0} (5,\ymu)
    (5,\ymu) edge node[below] {\tiny 0} (6,\yalpha)
    (5,\ymu) edge node[above] {\tiny 0} (6,\yrho);

    \drawdtgedges{\ybeta}{0}
    \drawdtgedges{\yalpha}{2}
    \drawdtgedges{\ygamma}{4}
    \drawdtgedges{\yrho}{1}
    \node[supported] at (0,\ybeta) {};
    \node[open goal] at (2,\yalpha) {};
    \node[open goal] at (4,\ygamma) {};
    \node[open goal] at (1,\yrho) {};

  \end{tikzpicture}
  \caption{Virtual graph used for computing the minimal causal chain of the assertions of \autoref{fig:ex-causal-network}.}
  \label{fig:ex-causal-chains-graph}
\end{figure}

In the example of \autoref{fig:ex-causal-chains-graph}, we can easily find the causal chain of \autoref{fig:ex-causal-chain} by looking for the shortest path from $\tuple{d_0,\beta}$ to $\tuple{d1,\rho}$.
This shortest path takes 4 red edges for a final cost of 12, allowing us to conclude that $\hc(\rho) = 12$. This corresponds to the causal chains shown in \autoref{fig:ex-causal-chain}.

In practice, the computation of $\hc(\alpha)$ is done by a backward Dijkstra search: initializing the priority queue with nodes $\{ \tuple{f,\alpha} | f \in dom(cond(\alpha)) \}$.
Search progresses by selecting the node with the least cost in the priority queue and adding its direct ancestors to the queue with an updated cost.
Search continues until a node $\tuple{f',\beta}$, where $\beta$ is \emph{a priori} supported, is extracted from the queue.
The cost of this node gives the cost of the minimal causal chain to $\alpha$ (i.e. $\hc(\alpha)$).

\subsection{Search Strategies}
\label{sec:search-strategies}

The search algorithm is responsible for choosing which node of the search tree to expand in order to quickly find a solution of good quality.
The quality of the solution and the time spent finding it are often conflicting objectives.
By default in FAPE, the priority is given to the latter and plan quality is left as a secondary objective, used as a tie breaking criteria.

As is usually the case in plan-space planning, our search procedure (\autoref{alg:plan}) requires two choices to be made at each search iteration.
The first one is the nondeterministic choice of which partial plan to consider next that will define the order in which the search space is explored.
The second one is the choice of a flaw to be fixed in the selected partial plan.
As all flaws must eventually be fixed for a partial plan to become a solution, this choice is not a backtracking point but will have an impact on the shape of the search tree.
A search strategy is composed of two schemes that dictate the choices made in those two cases.

As can be expected, a good strategy is not universal as it must take into account many specifics of the problem at hand.
Because we support such a wide range of planning problems, from generative to HTN problems, we define two strategies.
The first one aims at being very general and having good performance on a wide range of problems while the second is specifically tailored for fully hierarchical problems.

\subsubsection{General Search Strategy}
\label{sec:general-search-strategy}

\paragraph{Plan selection.}

\def \qall {\ensuremath{Q_{All}}\xspace}
\def \qchildren {\ensuremath{Q_{Chi}}\xspace}

Our plan selection strategy is conceptually based on $A\epsilon$ \cite{Ghallab83} in that it contains two queues both sorted by the same priority function \funcf. 
The first one \qall contains all partial plans that have been generated and not expanded.
The second one \qchildren is a subset of \qall that is limited to children of the last expanded node.
Those two queues serve different purposes that can be seen as diversification versus intensification: the planner chooses the partial plan to consider next either as the globally most promising according to $\funcf$ or commits to its previous choice and tries to further advance the last chosen plan.

The choice of the queue to use is governed by a parameter $\epsilon$ and is done as follows:
if ~$\minimum_{\chronicle\in\qall} \funcf(\chronicle) < (1+\epsilon) \times \minimum_{\chronicle\in\qchildren} \funcf(\chronicle)$~ then the next partial plan is the one with the lowest $\funcf$ value in \qall. Otherwise it is the one with the lowest $\funcf$ value in \qchildren.
In practice, it means the search is restricted to the children of the last expanded node until they are significantly worse than the globally best partial plan, where the threshold of ``significantly'' is given by $\epsilon$.
The objective of this technique is to compensate for the inaccuracies in the definition of $\funcf$.

The priority function $\funcf(\chronicle)$ is defined as the sum of the following values:
\begin{enumerate}
\item the number of assertions in $\assertions_\chronicle$. This helps in estimating the search effort already done. It can be seen as a normalized version of the number of actions in the plan as other parts of $\funcf$ depend on the number of assertions rather than the number of actions.
\item the number of unrefined task in $\plan_\chronicle$. This conservatively estimates the search effort left due to unrefined tasks.
\item the number of unsupported assertions in $\assertions_\chronicle$.
\item the number of assertions involved in at least one threat. This is a conservative estimation of the search effort left due to threats. 
  We do not consider the number of threats itself as it can be, in the worst case, quadratic in the number of assertions in the plan, resulting in a very important impact on the value of $\funcf$.
  Furthermore the addition of a single temporal constraint is likely to solve many threats at once.
\item the expected number of assertions that must be added to the partial plan. This is computed as the sum, for every unsupported assertion $\alpha \in \assertions_\chronicle$, of $\hc(\alpha)$ where $\hc$ gives the number of assertions needed to reach the minimal causal chain, as defined in \autoref{sec:min-causal-chain}. 
\end{enumerate}

This definition of the priority function $\funcf$ can be seen as composed of the usual $g+h$ parts where $g$ is given by the first item while $h$ is given by the following 4.
More specifically for $h$, the items 2 to 4 give the search effort directly visible as flaws in the partial plan while the fifth item provides a heuristic estimation of the search effort required.

\paragraph{Flaw Selection.}

The flaw selection strategy aims at sorting flaws in order to select the one whose resolution would the more beneficial (or least detrimental) to search.
Choosing which flaw to select next is a tricky question as it mainly permits an organization of the search space.
A similar use case occurs in constraint satisfaction problems as the choice of the next variable to be given a value.
A very efficient heuristic for this choice is to choose the one with the smallest domain because it results in the smallest branching factor for the early stages of search.
As a result, the search space of the constraint solver can be kept small.
Things are more complex in planning as a plan has more types of components whose interactions are hard to take into account.
Our strategy is given by the ordered list below. If the first rule favors one flaw over another, then only this one is kept. Otherwise the next rule is used to break the tie.
\begin{enumerate}
\item Prefer a flaw that has a single resolver. A single resolver means that there is a single option to choose from. Thus, no mistake can be made in applying it.
\item Prefer unrefined tasks over other types of flaws.
  This priority has the advantage of giving the priority to task refinement, thus quickly reaching the point where the plan contains most of its actions with no unrefined tasks.
  This is useful as unrefined tasks are weakly accounted for in our plan selection strategy. Getting rid of those early means we will quickly get to a point where our $\funcf$ function is more informative. 
  \item Prefer unsupported assertions over other types of flaw. Given two unsupported assertions, choose the one on the maximally abstract state variables as defined by \textcite{Knoblock1994}.
\item Finally, prefer the flaw with the fewest resolvers.
\end{enumerate} 

Furthermore, it should be noted that if there is a flaw with no resolver, then this partial plan is necessarily a dead-end and the planner can proceed with the next partial plan.

\subsubsection{Forward Hierarchical Search Strategy}
\label{sec:hierarchical-search-strategy}

Our second search strategy is dedicated to HTN problems and is conceptually similar to the forward search techniques of HTN planners like SHOP2 or SIADEX.
The key idea is to hand back some control to the domain designer about which plans will be explored first.
For this reason, the planner will try the different refinements in the order defined in the domain and commit to them until they are proved unsound.
The planner also follow an early commitment strategy through its flaw selection strategy, which is meant to detect early any inconsistencies in the current plan.

\paragraph{Plan selection.}
Plan selection is done in a depth-first manner with chronological backtracking.
When a node is expanded, the choice of the next partial plan to expand among its children is as follows:
\begin{itemize}
\item if the last resolved flaw was an unrefined task, meaning that each possible partial plan matches a possible action for refining it, then give priority to the action defined first in the domain.
\item otherwise sort the candidate children by the priority function of the general search strategy.
\end{itemize}

This prioritization allows the domain designer to force the planner to explore plans in a predefined order.
The rest of the search decisions, such as the choice of how to resolve a threat, is left to our general search strategy.

\paragraph{Flaw selection.}
Similarly to the general search strategy, we define the flaw selection strategy as a sequence of rules whose application is meant to choose the next flaw to resolve: 
\begin{enumerate}
\item Prefer flaws with a single resolver.
\item Prefer the flaw that has the earliest interaction time. The interaction time of an unsupported assertion (resp. an unrefined task) is defined as
the earliest time of its start time point (e.g. $\diststn(\timeorigin,t_s)$ for an unsupported persistence $\persistencetuple{t_s,t_e}{sv}{v}$).
The interaction time of a threat is taken as the maximum of the interaction time of both assertions it involves.
\item Prefer threats over other types of flaws.
\item Prefer unrefined tasks over other types of flaws. When comparing two unrefined tasks, prioritize the one that was introduced first.
\item Like for the general strategy, prefer unsupported assertions and give priority to the maximally abstract ones.
\item Prefer flaws with the fewest resolvers.
\end{enumerate}

The general idea behind those rules is to bias the planner into a forward search by dealing with unsupported assertions and unrefined tasks that appear early in the partial plan.
Indeed, solving them will typically result in the introduction of causal chains involving assertions from the initial state.
The construction of those causal chains forces the instantiation of object variables involved in them and permits an easier verification of the conditions on actions.

While this is the general idea, it does not prevent the planner from inserting some of the later actions of the plan early during search by solving a flaw with a single resolver.

\section{Empirical Evaluation}
\label{sec:chap2-experiments}

\newcommand{\ipc}[1]{{\smaller \sc (ipc#1)}\xspace}
\newcommand{\laas}{{\smaller \sc (laas)}\xspace}
\newcommand{\genpb}{{-\smaller\sc Gen}}
\newcommand{\fullhierpb}{{-\smaller\sc FullHier}}
\newcommand{\parthierpb}{{-\smaller\sc PartHier}}

We have presented an algorithm able to plan both in a generative and a  hierarchical fashion.
Our approach is complemented with a number of techniques intended to improve the efficiency of the planner by \i inferring constraints to cut branches of the search tree, and \ii guiding the planner to efficiently explore its search space.

In this section, we first study how each of the techniques we have described contribute to the overall efficiency of the system.
We then compare FAPE with state-of-the-art temporal planners from the International Planning Competition (IPC).
We show FAPE to be competitive in a fully generative and domain-independent setting and  that the addition of hierarchical knowledge further improves its performance.

\subsection{Evaluation of the Different Components of the Planner}

\subsubsection{Evaluation of reachability analysis}
\label{sec:evaluation-reachability}

We start by evaluating our proposed reachability analysis independently of other techniques.
The motivation to do so comes from the fact that \i it generalizes the delete-free analysis done by classical planners to a temporal setting and \ii the other adaptation of this technique by \textcite{Coles2008} is easily represented in our framework by considering an additional relaxation.
We can hence make a direct comparison between them.

\paragraph{Tested Configurations.}
We distinguish 5 configurations of the planner depending on how far it pushes the reachability analysis:
\begin{itemize}
\item \hfull is the configuration where no limitation is put on the number of iterations for reachability analysis.
\item $R_k$, with $k=5$ and $k=1$, denotes the configuration where the number of iterations is limited to $k$.
  This makes the algorithm strongly polynomial and reduces the overhead when many iterations are needed to converge.
  On the other hand, the algorithm might incorrectly label unreachable actions as reachable and fluents/actions will typically be found to have more optimistic earliest appearance times.
\item \hplus denotes the configuration where all after-conditions are ignored.
  In practice, it means that the propagation will stop right after the first Dijkstra propagation (in the middle of the first iteration).
  This configuration is equivalent, for our more expressive temporal model, to the reachability analysis performed by {\sc popf} and related planners \cite{DBLP:conf/aips/ColesCFL10,Coles2008,Benton2012,Coles:2012una}
\item $\emptyset$ denotes the configuration where no reachability analysis is performed.
  In this case, the planner does not ground the problem, which reduces its overhead.
\end{itemize}
To allow for an unbiased comparison, the reasoning on causal networks (\autoref{sec:causal-network}) is deactivated in all configurations as it requires reasoning on the ground problem (which is inaccessible for the last configuration).

\paragraph{Test Domains.}
We evaluate our reachability analysis technique on several temporal domains with and without hierarchical features, the former involving many interdependencies between high-level actions and their subactions.
The  \emph{satellite}, \emph{rovers}, \emph{logistics}, \emph{blocks}, \emph{driverlog} and \emph{hiking} domains are the eponymous domains from the International Planning Competition and are all temporally simple (i.e. they have no after-conditions).
The domain definitions of those problems have been manually translated into ANML and their problems were automatically translated by domain dependent parsers.

The \emph{handover} domain is a robotics problem presented by \textcite{dvorak-2014-planrob}, the \emph{dock-worker} domain is the  dock worker domain of \textcite{Ghallab2004}.
\emph{race} is a robotics domain adapted from CHIMP \cite{Stock2015}, where a waiter-robot must serve clients. It notably features navigation constraints expressed using hierarchical features and deadlines specifying when a client must be served.
\emph{springdoor} is another robotics problem where the robots must move objects between several places with closed doors.
Opening a door requires complex interactions between several actions (turning the knob, pushing the door and releasing the knob) which can be performed by the robot that must pass through the door if it carries nothing, or by another robot if its hands are full.
\emph{baking} is a domain presented by \textcite{Cushing2007a} of baking pottery in kilns where the action of baking must be concurrent with an action showing that the kiln is switched on.

Hierarchical versions of the domains have their names appended with `\fullhierpb' or '\parthierpb' if they are fully or partly hierarchical respectively.

\begin{table}[tb!p]
  \centering
  \begin{tabular}{|l||c|c|c|c|c|} \hline 
     & \providecommand{\hfull}{$h_\infty$}\hfull & \providecommand{\hfive}{$h_5$}\hfive & \providecommand{\hone}{$h_1$}\hone & \providecommand{\hplus}{$h^+$}\hplus & \providecommand{\hnone}{$\emptyset$}\hnone \\ 
  \hline
\ipc4 airport\genpb & \textbf{5} & \textbf{5} & \textbf{5} & \textbf{5} & \textbf{5} \\ 
  \ipc4 airport-tw\genpb & \textbf{6} & \textbf{6} & \textbf{6} & \textbf{6} & \textbf{6} \\ 
  \ipc2 blocks\genpb & 24 & 24 & 24 & 24 & \textbf{26} \\ 
  \laas dock-worker\genpb & \textbf{9} & \textbf{9} & \textbf{9} & \textbf{9} & \textbf{9} \\ 
  \ipc8 driverlog\genpb & \textbf{1} & \textbf{1} & \textbf{1} & \textbf{1} & \textbf{1} \\ 
  \laas handover\genpb & \textbf{4} & 3 & 3 & 3 & 3 \\ 
  \ipc2 logistics\genpb & \textbf{7} & \textbf{7} & \textbf{7} & \textbf{7} & 4 \\ 
  \ipc4 pipesworld-dl\genpb & 5 & 5 & 5 & 5 & \textbf{6} \\ 
  \ipc5 rovers\genpb & 34 & 34 & 34 & 34 & \textbf{38} \\ 
  \ipc4 satellite-tw\genpb & \textbf{9} & \textbf{9} & \textbf{9} & \textbf{9} & 8 \\ 
  \ipc8 satellite\genpb & 16 & 16 & 16 & 16 & \textbf{17} \\ 
  \textbf{Total Generative} & 120 & 119 & 119 & 119 & \textbf{123} \\[5pt]
  \ipc2 blocks\fullhierpb & \textbf{10} & \textbf{10} & 9 & 4 & 9 \\ 
  \ipc2 blocks\parthierpb & 28 & 28 & 28 & 30 & \textbf{32} \\ 
  \laas dock-worker\fullhierpb & \textbf{22} & \textbf{22} & \textbf{22} & 20 & 20 \\ 
  \laas dock-worker\parthierpb & \textbf{12} & \textbf{12} & \textbf{12} & \textbf{12} & \textbf{12} \\ 
  \laas handover\fullhierpb & \textbf{10} & 1 & 1 & 1 & 1 \\ 
  \ipc8 hiking\fullhierpb & \textbf{20} & 13 & 12 & 12 & 12 \\ 
  \ipc2 logistics\parthierpb & \textbf{28} & \textbf{28} & \textbf{28} & 27 & 27 \\ 
  \laas race\fullhierpb & \textbf{13} & \textbf{13} & \textbf{13} & \textbf{13} & \textbf{13} \\ 
  \ipc8 satellite\parthierpb & \textbf{18} & \textbf{18} & \textbf{18} & \textbf{18} & \textbf{18} \\ 
  \laas baking\parthierpb & \textbf{6} & \textbf{6} & \textbf{6} & \textbf{6} & \textbf{6} \\ 
  \ipc8 turnandopen\fullhierpb & \textbf{8} & \textbf{8} & \textbf{8} & 0 & 0 \\ 
  \textbf{Total Hierarchical} & \textbf{175} & 159 & 157 & 143 & 150 \\ 
  \textbf{Total} & \textbf{295} & 278 & 276 & 262 & 273 \\ 
   \hline

  \end{tabular} 
  \caption{Number of solved problems for various domains with a 30 minutes timeout.  The best results are shown in bold.}
  \label{tab:solved-per-dom}
\end{table} 

\paragraph{Results and Discussion.}
Table~\ref{tab:solved-per-dom} and Figure~\ref{fig:solved-over-time} present the number of problems solved using different reachability models.
On generative problems (first half of the table), only a few domains benefit from reachability analysis: \emph{handover}, \emph{logistics} and \emph{satellite-tw}.
On other generative domains, the cost of grounding the problem is slightly higher or on-par with the benefits of using reachability analysis.
As expected, all configurations using reachability analysis perform identically on temporally simple problems.

This state of affairs changes on hierarchical problems as they contain many interdependencies.
In those domains, \hfull outperforms the other configurations: solving the highest number of 
problems on all but one domain. \hfive and \hone are respectively second and third best performers while
\hplus does not provide significant pruning of the search space; the computational overhead makes it perform slightly worse than no 
reachability checks (denoted by $\emptyset$).

\begin{figure}[htbp!]
  \centering
  \input{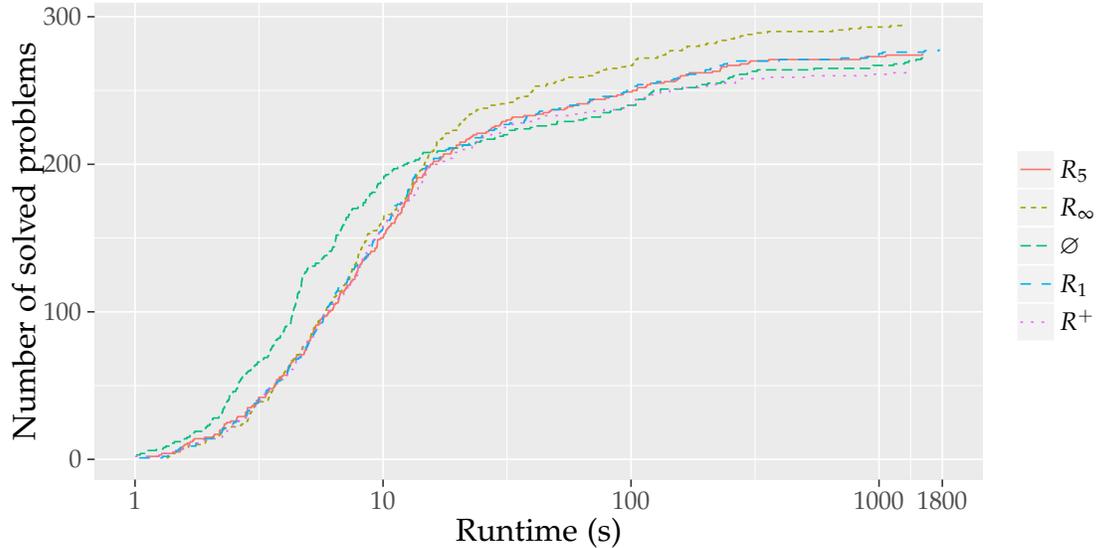}
  \caption{Number of solved tasks by each configuration within a given time amount.}
  \label{fig:solved-over-time}
\end{figure}

\begin{table}[tbh!]
  \centering
  \begin{tabular}{|l||c|c|c|c||c|} \hline 
     & \providecommand{\hfull}{$h_\infty$}\hfull & \providecommand{\hfive}{$h_5$}\hfive & \providecommand{\hone}{$h_1$}\hone & \providecommand{\hplus}{$h^+$}\hplus & \# iter \hfull \\ 
  \hline
\ipc4 airport\genpb & \textbf{13.6} & \textbf{13.6} & \textbf{13.6} & \textbf{13.6} & 1 \\ 
  \ipc4 airport-tw\genpb & \textbf{53.4} & \textbf{53.4} & \textbf{53.4} & \textbf{53.4} & 1 \\ 
  \ipc2 blocks\genpb & 0 & 0 & 0 & 0 & {1} \\ 
  \laas dock-worker\genpb & \textbf{55.3} & \textbf{55.3} & \textbf{55.3} & 0 & 6.0 \\ 
  \ipc8 driverlog\genpb & \textbf{41.8} & \textbf{41.8} & \textbf{41.8} & \textbf{41.8} & 1 \\ 
  \laas handover\genpb & \textbf{88.8} & 87.0 & 87.0 & 55.6 & 29.0 \\ 
  \ipc2 logistics\genpb & \textbf{39.4} & \textbf{39.4} & \textbf{39.4} & \textbf{39.4} & 1 \\ 
  \ipc4 pipesworld-dl\genpb & 0 & 0 & 0 & 0 & {1} \\ 
  \ipc5 rovers\genpb & \textbf{45.7} & \textbf{45.7} & \textbf{45.7} & \textbf{45.7} & 1 \\ 
  \ipc4 satellite-tw\genpb & \textbf{1.3} & \textbf{1.3} & \textbf{1.3} & \textbf{1.3} & 1 \\ 
  \ipc8 satellite\genpb & 0 & 0 & 0 & 0 & {1} \\ \hline
  \ipc2 blocks\fullhierpb & \textbf{83.1} & \textbf{83.1} & \textbf{83.1} & 0 & 7.4 \\ 
  \ipc2 blocks\parthierpb & \textbf{79.9} & \textbf{79.9} & \textbf{79.9} & 0 & 2.0 \\ 
  \laas dock-worker\fullhierpb & \textbf{80.8} & 48.8 & 48.8 & 0 & 36.7 \\ 
  \laas dock-worker\parthierpb & \textbf{76.3} & 58.8 & 58.8 & 0 & 57.7 \\ 
  \laas handover\fullhierpb & \textbf{99.0} & 87.4 & 87.4 & 0 & 47.2 \\ 
  \ipc8 hiking\fullhierpb & \textbf{74.1} & \textbf{74.1} & \textbf{74.1} & 0 & 24.7 \\ 
  \ipc2 logistics\parthierpb & \textbf{95.7} & \textbf{95.7} & \textbf{95.7} & 10.4 & 2.8 \\ 
  \laas race\fullhierpb & \textbf{94.3} & \textbf{94.3} & \textbf{94.3} & 0 & 22.5 \\ 
  \ipc8 satellite\parthierpb & \textbf{15.3} & \textbf{15.3} & \textbf{15.3} & \textbf{15.3} & 2.0 \\ 
  \laas baking\parthierpb & \textbf{87.6} & \textbf{87.6} & \textbf{87.6} & 0 & 5.8 \\ 
   \hline

  \end{tabular} 
  \caption{
    Percentage of ground actions detected as unreachable from the initial state.
    For each problem instance, the percentage is obtained by comparing the number of ground actions detected as unreachable from the initial state to the original number of ground actions.
    Those values are then averaged over all instances of a domain.
    The last column gives the average number of iterations needed by \hfull to converge on its initial propagation.
  }
  \label{tab:percent-reachable-actions}
\end{table}

Table~\ref{tab:percent-reachable-actions} presents the percentage of actions detected as unreachable by different configurations.
As expected, \hfull, \hfive, \hone and \hplus perform identically on temporally simple problems.
However, \hplus is largely outperformed on all but one hierarchical domain.
The good performance of \hone with respect to \hplus shows that a single complete iteration is often sufficient to capture most of the problematic after-conditions.
However, on more complex problems such as \emph{dock-worker}, \emph{hiking\fullhierpb}, \emph{handover\fullhierpb}, more iterations are beneficial either in terms of detected unreachable actions or in terms of solved problems.
The initial propagation can take as much as 58 iterations for \hfull.
The subsequent propagations are typically faster as they are made incrementally.
As expected, a single iteration was sufficient to converge on all temporally simple problems.

\subsubsection{Evaluation of other components}
\label{sec:evaluation-other}

\def \cfape {\ensuremath{\mathcal{C}_{\textit{full}}}\xspace}
\def \cgen {\ensuremath{\mathcal{C}_{\textit{gen}}}\xspace}
\def \clift {\ensuremath{\mathcal{C}_{\textit{lift}}}\xspace}
\def \cnocn {\ensuremath{\mathcal{C}_{\neg \textit{CN}}}\xspace}
\def \cnodelcheck {\ensuremath{\mathcal{C}_{\neg \textit{DelCk}}}\xspace}
\def \cnodecvars {\ensuremath{\mathcal{C}_{\neg \textit{DecVars}}}\xspace}
\def \cnoae {\ensuremath{\mathcal{C}_{A^*}}\xspace}

\def \flat {{\smaller \textsc{Flat}}\xspace}
\def \parthier {{\smaller\textsc{PartHier}}\xspace}
\def \fullhier {{\smaller\textsc{FullHier}}\xspace}

We next evaluate how each of the techniques presented contributes to the overall efficiency of the system.
For this purpose, we consider 6 configurations of the planner each one having a specific technique deactivated.
We compare this against the full configuration of FAPE.
\begin{itemize}
\item \cfape is the full configuration with all the techniques previously discussed enabled. 
  It uses the \emph{general search strategy} (\autoref{sec:general-search-strategy}) by default and switches to the \emph{forward hierarchical strategy} (\autoref{sec:hierarchical-search-strategy})  when facing a fully hierarchical domain.
\item \cgen is the configuration where FAPE always uses the general search strategy.
  It differs from \cfape on fully hierarchical domains where the forward hierarchical search strategy would have been preferred.
\item \clift is a fully lifted version of FAPE.
  More specifically it does not use reachability analysis and thus all features that require grounding are deactivated (i.e. causal networks and instantiation/refinement variables).
  This configuration is essential to measure the benefits and penalties for grounding the problem.
\item \cnoae use $A^*$ in place of our variant of $A^\epsilon$ (\autoref{sec:general-search-strategy}).
\item \cnodelcheck does not check that there is a sufficient delay from the start of a task to the moment its possible effect is required to support an unsupported assertion.
  Hence it can result in additional resolvers being considered for supporting an unsupported assertion (\autoref{sec:unsupported-assertions}).
\item \cnodecvars does not use refinement variables to disregard the resolvers of unrefined tasks that involve unreachable actions (\autoref{sec:inst-supp-constr}).
\item \cnocn does not use the causal network to \i infer constraints, \ii prune impossible resolvers and \iii improve the heuristic with needed assertions (\autoref{sec:causal-network}).
\end{itemize}

It should be noted that our test configurations do not consider variations for the flaw ordering strategies.
We have tested other such strategies but they resulted in poor performance of the planner.
While in general other strategies can be efficient (e.g. many ground state-space planners use a ``threat first'' strategy), their efficiency is strongly coupled with the rest of the strategies used in the planner.
This factor can explain the dominance of the current strategy as it has evolved with the rest of the system.

\paragraph{Overview of results.}
We start by giving a broad overview of the results given in \autoref{tab:feature-results}.
The table contains the number of problems solved by each configuration with a 30 minutes timeout and a memory limit of 3GB.

\begin{table}[tbp]
  \centering
  \def \nb {~\smaller }
  \begin{tabular}{l c c c c c c c}
     & \providecommand{\cfape}{$C_{full}$}\cfape & \cgen & \providecommand{\clift}{$C_{lift}$}\clift & \cnoae & \cnodelcheck & \cnodecvars & \cnocn \\ 
  \hline
\ipc4 airport\genpb & \textbf{6} & \textbf{6} & 5 & \textbf{6} & \textbf{6} & \textbf{6} & 5 \\ 
  \ipc4 airport-tw\genpb & \textbf{7} & \textbf{7} & 6 & \textbf{7} & \textbf{7} & \textbf{7} & 6 \\ 
  \ipc2 blocks\genpb & 25 & 25 & \textbf{26} & 23 & 25 & 25 & 24 \\ 
  \laas dock-worker\genpb & \textbf{9} & \textbf{9} & \textbf{9} & \textbf{9} & \textbf{9} & \textbf{9} & \textbf{9} \\ 
  \ipc8 driverlog\genpb & \textbf{4} & \textbf{4} & 1 & 0 & \textbf{4} & \textbf{4} & 1 \\ 
  \laas handover\genpb & \textbf{4} & \textbf{4} & 3 & \textbf{4} & \textbf{4} & \textbf{4} & \textbf{4} \\ 
  \ipc2 logistics\genpb & \textbf{23} & \textbf{23} & 4 & 12 & \textbf{23} & \textbf{23} & 7 \\ 
  \ipc4 pipesworld-dl\genpb & \textbf{6} & \textbf{6} & \textbf{6} & \textbf{6} & \textbf{6} & \textbf{6} & 5 \\ 
  \ipc5 rovers\genpb & 34 & 34 & \textbf{38} & 32 & 34 & 34 & 33 \\ 
  \ipc4 satellite-tw\genpb & \textbf{10} & \textbf{10} & \textbf{10} & 7 & \textbf{10} & \textbf{10} & 8 \\ 
  \ipc8 satellite\genpb & 16 & 16 & \textbf{17} & 12 & 16 & 16 & 14 \\ 
  \textbf{Total Generative} & \textbf{144} & \textbf{144} & 125 & 118 & \textbf{144} & \textbf{144} & 116 \\[5pt]
  \ipc2 blocks\fullhierpb & 10 & 7 & \textbf{15} & 10 & 10 & 4 & 10 \\ 
  \ipc2 blocks\parthierpb & 27 & 27 & \textbf{32} & 26 & 27 & 29 & 28 \\ 
  \laas dock-worker\fullhierpb & \textbf{22} & \textbf{22} & \textbf{22} & \textbf{22} & 14 & 20 & \textbf{22} \\ 
  \laas dock-worker\parthierpb & \textbf{12} & 9 & \textbf{12} & \textbf{12} & \textbf{12} & 7 & \textbf{12} \\ 
  \laas handover\fullhierpb & \textbf{10} & \textbf{10} & 1 & \textbf{10} & \textbf{10} & \textbf{10} & \textbf{10} \\ 
  \ipc8 hiking\fullhierpb & \textbf{20} & 2 & 2 & \textbf{20} & 17 & 12 & \textbf{20} \\ 
  \ipc2 logistics\parthierpb & \textbf{28} & \textbf{28} & 27 & 27 & \textbf{28} & \textbf{28} & \textbf{28} \\ 
  \laas race\fullhierpb & \textbf{13} & 10 & 5 & \textbf{13} & \textbf{13} & \textbf{13} & \textbf{13} \\ 
  \ipc8 satellite\parthierpb & \textbf{18} & \textbf{18} & \textbf{18} & 14 & \textbf{18} & \textbf{18} & 14 \\ 
  \laas baking\parthierpb & \textbf{6} & \textbf{6} & \textbf{6} & \textbf{6} & \textbf{6} & \textbf{6} & 5 \\ 
  \ipc8 turnandopen\fullhierpb & \textbf{8} & 0 & 0 & \textbf{8} & \textbf{8} & 0 & \textbf{8} \\ 
  \textbf{Total Hierarchical} & \textbf{174} & 139 & 140 & 168 & 163 & 147 & 170 \\ 
  \textbf{Total} & \textbf{318} & 283 & 265 & 286 & 307 & 291 & 286 \\ 
   \hline

  \end{tabular}
  \caption{Number of problems solved by each configuration with a 30 minutes timeout.}
  \label{tab:feature-results}
\end{table}

The first important point is to see that while all features contribute to the efficiency of the planner, none is critical to its overall performance.
Indeed, the absence of a feature resulted in 11 to 53 less problems being solved which is only a small subset of the overall 318 problems solved by the full configuration.
However, the difficulty of a planning problem is typically exponential in the number of goals in the problem, so solving a few additional problems for a domain therefore has more significance than it might seem.

As expected the \cgen, \cnodelcheck and \cnodecvars variants behave exactly as \cfape on generative problems.
The use of our \emph{forward hierarchical strategy} is however critical in some hierarchical domains, as it can be seen by the poor performance of \cgen on the \emph{hiking} and \emph{turnandopen} domains.
The use of delay checks (absent in \cnodelcheck) and of refinement variables (absent in \cnodecvars) is less critical but contributes to the scaling up of the planners on some hierarchical domains.
The gain of using causal networks (absent in \cnocn)  and $A^\epsilon$ (absent in \cnoae) is also globally noticeable and benefits many domains while not being critical to any of them. 

The most important results are the ones related to \clift as this configuration does not ground the problem.
Thus it can be expected to scale up better on problems that would feature many ground actions.
The gains of non-grounding are noticeable on the \emph{blocks}, \emph{rovers} and \emph{satellite} whose most difficult problems contain many objects.
A limitation of our test set (and of problems from the IPC in general) is that the difficulty of the problem (e.g. length of the plan) is directly correlated to the number of objects in the problem.
Thus we have no instances in our test set where \clift would find a trivial solution plan while \cfape would fail because it could not ground the problem.
On the rest of the test set, we can see the gain, in terms of search control, of applying reachability analysis. This is most noticeable
with the hierarchical versions of \emph{handover}, \emph{hiking} and \emph{turnandopen}.
Nevertheless, we believe it is important to keep the ability to perform a fully lifted search even if it means not using some of the heuristics we have developed.
A view of the overhead required for grounding can be witnessed in \autoref{fig:runtime-grounded-lifted}, where we can see that the lifted version tends to solve simple problems faster because it does not need to ground the problem.

\begin{figure}[htpb!]
  \centering
  \input{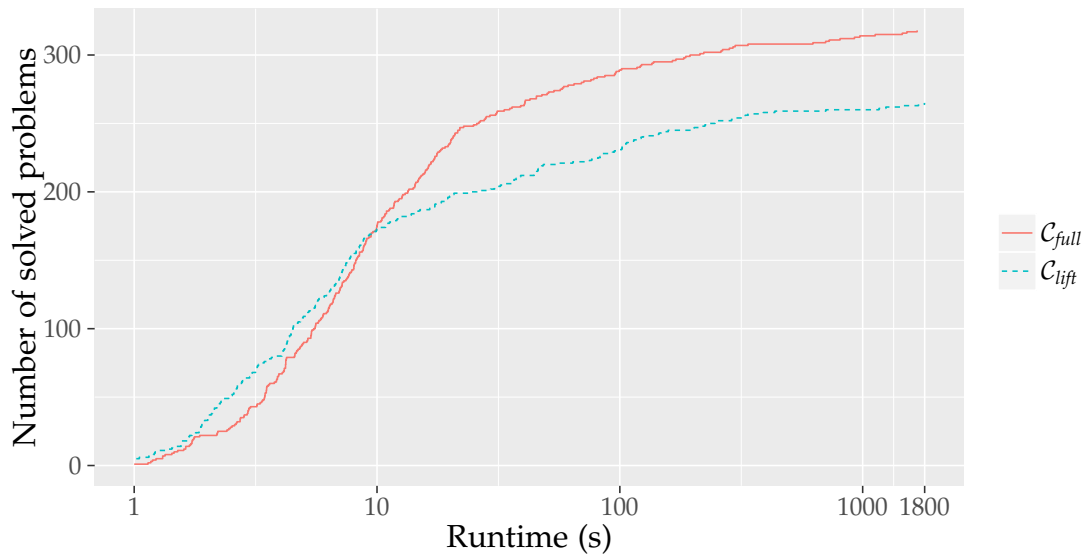}
  \caption{Time needed to solve problems with \cfape and \clift. The Java Virtual Machine takes an important share of time to load and perform optimization during the first seconds of the run. This overhead is however comparable for both configurations.}
  \label{fig:runtime-grounded-lifted}
\end{figure}

\autoref{tab:expanded-nodes} gives the average number of nodes expanded by each configuration on problems solved by all configurations.
On those problems, it should be apparent that the full configuration of FAPE actually requires very little search to find solutions.
Other configurations, and \clift especially, tend to explore a larger part of the search space.
On all these problems, the average branching factor is between 2 and 3.5, with only small variations between the different configurations.

\begin{table}[tbh!]
  \centering
  \smaller
  \begin{tabular}{lrrrrrrr}
     & \providecommand{\cfape}{$C_{full}$}\cfape & \cgen & \providecommand{\clift}{$C_{lift}$}\clift & \cnoae & \cnodelcheck & \cnodecvars & \cnocn \\ 
  \hline
\ipc4 airport\genpb & 164 & 164 & 226 & \textbf{154} & 164 & 164 & 314 \\ 
  \ipc4 airport-tw\genpb & 204 & 204 & 284 & \textbf{195} & 204 & 204 & 450 \\ 
  \ipc2 blocks\genpb & 162 & 162 & 201 & \textbf{139} & 162 & 162 & 182 \\ 
  \laas dock-worker\genpb & 90 & 90 & \textbf{85} & 152 & 90 & 90 & 90 \\ 
  \laas handover\genpb & \textbf{17} & \textbf{17} & 15\,931 & 25 & \textbf{17} & \textbf{17} & 73 \\ 
  \ipc2 logistics\genpb & \textbf{26} & \textbf{26} & 24\,016 & 36 & \textbf{26} & \textbf{26} & 3\,440 \\ 
  \ipc4 pipesworld-dl\genpb & \textbf{230} & \textbf{230} & 898 & 612 & \textbf{230} & \textbf{230} & 1\,436 \\ 
  \ipc5 rovers\genpb & 105 & 105 & \textbf{98} & 153 & 105 & 105 & 104 \\ 
  \ipc4 satellite-tw\genpb & 50 & 50 & \textbf{46} & 1\,824 & 50 & 50 & 676 \\ 
  \ipc8 satellite\genpb & \textbf{125} & \textbf{125} & 292 & 230 & \textbf{125} & \textbf{125} & 184 \\ 
  \textbf{Total Generative} & \textbf{1\,174} & \textbf{1\,174} & 42\,076 & 3\,520 & \textbf{1\,174} & \textbf{1\,174} & 6\,949 \\[3pt]
  \ipc2 blocks\fullhierpb & 62 & 263 & \textbf{39} & 62 & 88 & 72 & 70 \\ 
  \ipc2 blocks\parthierpb & 84 & 84 & \textbf{76} & 121 & 84 & 89 & 566 \\ 
  \laas dock-worker\fullhierpb & 38 & \textbf{33} & 631 & 38 & 41 & 121 & 39 \\ 
  \laas dock-worker\parthierpb & \textbf{90} & 498 & 101 & \textbf{90} & 93 & 75\,088 & \textbf{90} \\ 
  \laas handover\fullhierpb & \textbf{12} & \textbf{12} & 163 & \textbf{12} & \textbf{12} & 16 & \textbf{12} \\ 
  \ipc8 hiking\fullhierpb & \textbf{34} & 36 & 61 & \textbf{34} & 236 & 55 & \textbf{34} \\ 
  \ipc2 logistics\parthierpb & \textbf{50} & \textbf{50} & 56 & 50 & 50 & 84 & 516 \\ 
  \laas race\fullhierpb & \textbf{17} & 17 & 376 & \textbf{17} & 32 & 29 & \textbf{17} \\ 
  \ipc8 satellite\parthierpb & \textbf{87} & \textbf{87} & 112 & 403 & \textbf{87} & 110 & 105 \\ 
  \laas baking\parthierpb & \textbf{58} & 847 & \textbf{58} & \textbf{58} & \textbf{58} & \textbf{58} & 243 \\ 
  \textbf{Total Hierarchical} & \textbf{531} & 1\,925 & 1\,675 & 885 & 781 & 75\,723 & 1\,691 \\ 
  \textbf{Total} & \textbf{1\,705} & 3\,099 & 43\,751 & 4\,405 & 1\,955 & 76\,897 & 8\,640 \\ 
   \hline

  \end{tabular}
  \caption{
    Average number of expanded partial plans on problems solved by all configurations.
    This number does not integrate flaws with a single resolver as FAPE would resolve any such flaw immediately, before adding the node to the search tree.
  }
  \label{tab:expanded-nodes}
\end{table}

\paragraph{Partial vs Full Hierarchy.}

A fair number of domains use partial hierarchies.
A simple and relevant example of the benefits of using partial hierarchies is the \emph{blocks-\parthier} domain, given in Appendix~\ref{sec:blocks-parthier}. 
In this domain, the single task-dependent primitive action is \act{stack}.
When compared to a flat version of the problem it features an additional high-level action $\act{DoStack}(a,b)$ that either does nothing if $b$ is on $a$ or decomposes to $\act{stack}(a,b)$ if it is not.
In the resulting problem, the planner is only allowed to perform one \act{stack} action per \tsk{DoStack} task in the problem.
On the other hand, it can use as many \act{pickup}, \act{putdown} and \act{unstack} actions as necessary to establish the conditions of the \act{stack} actions.
This formulation puts a constraint on the plan: do not use more \act{stack} actions than necessary.
As result, the planner solves more instances of the partially hierarchical domain and does it on average 3.7 times faster than for the generative version.
When compared to the fully hierarchical version of the domain (Appendix~\ref{sec:blocks-fullhier}), the partially hierarchical one is obviously simpler and requires less domain engineering.
Furthermore the partially hierarchical version is more easily solved by FAPE.
This should not be too surprising: FAPE already does a decent job of solving the flat version of the problem and this simple extension simply provides some additional help.
On the other hand the hierarchical version is a very different problem.
While extending the fully hierarchical version with more domain knowledge could probably make it competitive with the partially hierarchical one, there is no need to do it because the simpler partially hierarchical version already performs well.

A similar approach is taken in the \emph{handover-\parthier} and \emph{logistics-\parthier}.
In \emph{handover}, the only task-dependent actions are the ones involving manipulation of an object (\act{pick}, \act{place} and \act{handover}) while the movements of the robots are left free.
Similarly, in \emph{logistics} the only hierarchical parts are the \act{load} and \act{unload} actions.
The movements of the fleet of planes and trucks are left free.
If the planner was to entirely decompose the original task network, it would have a set of \act{load} and \act{unload} actions and would simply need to plan the fleet movements between those.

\subsection{Empirical Comparison with IPC Planners}
\label{sec:pddl-empirical-comparison}
 
\newcommand \fapehier {FAPE-\textsc{\smaller Hier}\xspace}
\newcommand \fapegen {FAPE-\textsc{\smaller Gen}\xspace}

\newcommand\tikzmark[2]{%
\tikz[remember picture,baseline] \node[inner sep=2pt,outer sep=0] (#1){#2};%
}

\newcommand\link[2]{%
\begin{tikzpicture}[remember picture, overlay, >=stealth, shift={(0,0)}]
  \draw[->,color=gray] (#1) to (#2);
\end{tikzpicture}%
}

\paragraph{Experimental setup.}

For comparison with state of the art PDDL temporal planners, we consider the 11 temporal domains from the International Planning Competition (IPC) all of which have PDDL2.1 versions. 
We have manually written ANML versions of the domains that closely mirror the original PDDL model: domains have the same actions and a direct mapping from predicates to corresponding state variables.
For each domain, we wrote a domain-dependent automated translator that parsed the original PDDL problems and output ANML problem files.

We use hierarchical versions of a subset of those domains (\emph{blocks}, \emph{hiking}, \emph{logistics}, \emph{turnandopen}).
Those domains were chosen either because they have a natural expression with partial hierarchies (\emph{blocks} and \emph{logistics}) or because FAPE had difficulties in solving the generative versions of the problem (\emph{hiking} and \emph{turnandopen}).

Three of the selected domains have ``advanced'' temporal features. Namely, \emph{airport-tw} and \emph{satellite-tw} have temporal windows that respectively restrict the instants at which a plane can take off and at which a satellite can transmit data.
In addition, the goals of \emph{pipesworld-dl} are associated with deadlines that must be met by the solution plan.
All domains, in their ANML and PDDL versions, are available online in FAPE's public repository.%
\footnote{Available at \url{https://github.com/laas/fape}}

\paragraph{Planners.}

We compare FAPE to POPF \cite{DBLP:conf/aips/ColesCFL10}, OPTIC \cite{Benton2012} and Temporal Fast-Downward (TFD \cite{Eyerich2012}).
POPF is a complete PDDL2.1 planner based on temporally-lifted progression planning \cite{Cushing2007}.
As such, it can be seen as a forward-search planner taking a late-commitment approach in the ordering of actions and uses the $h^{\textit{FF}}$ heuristic adapted for temporal planning.
OPTIC is a recent extension to POPF that supports more advanced PDDL features, including PDDL2.2 timed initial literals and PDDL3 preferences.

Temporal Fast-Downward (TFD \cite{Eyerich2012}) is a temporal extension of the successful Fast-Downward classical planner using a decision-epoch mechanism \cite{Cushing2007}.
It performs heuristic search in the space of time-stamped states, using an adapted version of the context-enhanced additive heuristic \cite{Helmert2008}.
TFD supports PDDL2.1 syntax but is not complete as it only supports a limited class of problems with required concurrency.
More specifically it cannot handle problems with interdependent actions.

POPF and TFD were runner-up in the temporal satisficing track of IPC-2011 and IPC-2014 respectively.
We did not consider YAHSP, the winner of those two tracks, in our comparison as it only supports temporally simple problems and is therefore strictly less expressive than the other planners considered here.

We distinguish two versions of FAPE.
\fapegen denotes the purely generative version of FAPE that only considered generative domain encodings with no hierarchical information.
It uses the general search strategy and has no domain-dependent knowledge.
\fapehier uses the hierarchical versions of the \emph{blocks}, \emph{hiking}, \emph{logistics} and \emph{turnandopen} domains together with the \emph{forward hierarchical} search strategy.
For generative domains, it uses the general search strategy and is equivalent to \fapegen.

\paragraph{Results.}
All tests were performed on an Intel Core i7 with 3GB of RAM and allowed to run for 30 minutes.
The results are given in \autoref{tab:pddl-num-solved} in terms of the number of problems solved by each planner within the time limit.

The performance of the purely generative version of FAPE is comparable with that of POPF and OPTIC.
TFD is ahead in terms of number of problems solved.
The addition of hierarchical knowledge in 4 of the domains allows FAPE to solve 36 more instances.
As a result, it outperforms POPF and OPTIC, but still lags behind TFD in number of solved problems.
However it should be noted that the advantage of TFD can be entirely reduced to its excellent performance on the \emph{airport} domain. 
This domain is particularly problematic for FAPE because it contains conditional effects that are not natively supported by FAPE and that were worked around by partially grounding the problem during the translation to ANML. 
This forced grounding however interacts badly with FAPE' search strategy that relies on lifted exploration and very negatively impacts its performance on this particular domain.

\begin{table}[htb!]
  \centering
  \def \cfape {{FAPE-\smaller\sc Gen}}
  \def \cfapehier {{FAPE-\smaller\sc Hier}}
  \def \genpb {}
  \def \discreet#1 {\color{black!100}{#1}}
  \begin{tabular}{l c c c c c} 
     & \providecommand{\cfape}{$C_{full}$}\cfape & \providecommand{\cfape}{$C_{full}$}\cfapehier & POPF & OPTIC & TFD \\ 
  \hline
\ipc4 airport\genpb & 6 & \discreet{6} & 7 & 7 & \textbf{37} \\ 
  \ipc4 airport-tw\genpb & 7 & \discreet{7} & \textbf{17} & 7 & 1 \\ 
  \ipc2 blocks\genpb * & 25 & 27 & 32 & 32 & \textbf{35} \\ 
  \ipc8 driverlog\genpb & \textbf{4} & \discreet{\textbf{4}} & 0 & 0 & 0 \\ 
  \ipc2 logistics\genpb * & 22 & \textbf{27} & \textbf{27} & \textbf{27} & \textbf{27} \\ 
  \ipc4 pipesworld-dl\genpb & 6 & \discreet{6} & 6 & \textbf{13} & 2 \\ 
  \ipc5 rovers\genpb & \textbf{34} & \discreet{\textbf{34}} & 26 & 26 & 29 \\ 
  \ipc4 satellite-tw\genpb & \textbf{10} & \discreet{\textbf{10}} & 6 & 4 & 0 \\ 
  \ipc8 satellite\genpb & 16 & \discreet{16} & 3 & 4 & \textbf{17} \\ 
  \ipc8 turnandopen\genpb* & 0 & 8 & 8 & 9 & \textbf{18} \\ 
  \ipc8 hiking\genpb* & 0 & \textbf{20} & 10 & 9 & 19 \\ 
  \textbf{Total} & 130 & 165 & 142 & 138 & \textbf{185} \\ 
   \hline

  \end{tabular}
  \caption{Number of problems solved in 30 minutes for various temporal IPC domains.
    The best performance in given in bold.
    \fapehier uses hierarchical versions of the starred domains and generative versions of the others.
  }
  \label{tab:pddl-num-solved}
\end{table}

A focused subset of the results is given in \autoref{tab:pddl-num-solved-temporal} for problems with deadlines and timewindows.
Those are the only domains of the test set where time is strictly needed, i.e., on all other domains every solution plan has a valid totally ordered counterpart.
While being the overall best performer, TFD exhibits poor performance on those domains, solving only 3 problems.

\begin{table}[htb!]
  \centering
  \def \cfape {{FAPE \smaller(\sc Gen/Hier)}}
  \begin{tabular}{l c c c c}
     & \providecommand{\cfape}{$C_{full}$}\cfape & POPF & OPTIC & TFD \\ 
  \hline
\ipc4 airport-tw\genpb & \textbf{17} & {7} & \textbf{7} & 1 \\ 
  \ipc4 pipesworld-dl\genpb & 6 & 6 & \textbf{13} & 2 \\ 
  \ipc4 satellite-tw\genpb & \textbf{10} & 6 & 4 & 0 \\ 
  \textbf{Total} & 23 & 19 & {24} & 3 \\ 
   \hline

  \end{tabular}
  \caption{Results limited to domains featuring deadlines or timewindows. \fapehier does not appear separately as we only considered generative versions of those domains.}
  \label{tab:pddl-num-solved-temporal}
\end{table}

\section{Related work and discussion}
\label{sec:soa}

\subsection{PDDL Temporal Planners}
\label{sec:comparison-pddl}

\paragraph{STRIPS and PDDL.}
The original PDDL language \cite{McDermott1998} and its ancestors STRIPS and ADL, define actions as state-transition functions with a uniform duration.
This was extended with the introduction of PDDL2.1 for the purpose of the third International Planning Competition (IPC) \cite{Fox2003}.
The philosophy behind PDDL2.1 is to see a durative action as two instantaneous at-start and at-end actions that both produce instantaneous state-transitions. Those ``snap'' actions are linked together by duration constraints that restrict the possible delays between the start and the end times of the action as well as durative conditions that require some condition to hold in all states traversed while performing the action.
While this seems like a minor extension, it allows the expression of temporal planning problems with required concurrency \cite{Cushing2007} and with interdependent actions \cite{Cooper2013}.

As pointed out by \textcite{Smith2003}, a strong limitation of the language is that conditions and effects can only be placed at the start and end of the action.
While this limitation can be avoided by splitting a complex temporal action with intermediate time points into multiple subactions \cite{Fox2004a}, encoding such durative actions by hand is difficult and error prone.
A possible approach is to compile a more expressive language into PDDL2.1 to benefit from its large ecosystem (as done for PDDL-TE \cite{Cooper2010} or for ANML \cite{anml2008}).

The PDDL2.1 language has been further extended with \emph{timed initial literals} that allow truth assignments on predicates at arbitrary times (in PDDL2.2 \cite{Edelkamp2004}) and temporally extended goals expressed as constraints on the state trajectory (in PDDL3.0 \cite{Gerevini2005}).
Even though those extensions are essential in representing real world problems, they have not been much used in temporal planning, e.g.,  none of the participants of the temporal track in the IPC-14 supported them natively.%
\footnote{See \url{https://helios.hud.ac.uk/scommv/IPC-14/planners_actual.html} for the supported features of temporal planners.}

\paragraph{Forward-chaining planners.}
Not surprisingly, most temporal planners participating in planning competitions evolved from classical planners.
Similarly to the classical planning tracks , the temporal satisficing tracks have been dominated by forward-chaining temporal planners.
We can partition forward-search planners into three categories depending on their search space:

\emph{First-fit temporal planners} are essentially classical planners that temporally schedule a sequential solution.
The most surprising example is the baseline planner that (unofficially) won the temporal track of IPC-2008 by greedily rescheduling the sequential solution provided by {\sc MetricFF} \cite{Hoffmann2003}.
A more advanced implementation of this approach is YAHSP \cite{Vidal2004,Vidal2011a,vidal2014yahsp3} whose second and third versions respectively won the temporal tracks of IPC-2011 and IPC-2014.
While such planners have the advantage of being simple, they are incomplete as they can only solve temporally simple problems that do not require concurrency between actions \cite{Cushing2007}.

\emph{Decision-epoch planners} maintain a timestamp (called the decision epoch) at which they can schedule the actions.
Successors of search nodes are generated by either starting a new action at the timestamp or advancing the timestamp (typically to be just after the next effect).
This technique has been the base of many influential temporal planners such as SAPA \cite{Do2003},  Temporal Fast-Downward (TFD) \cite{Eyerich2012}, and others \cite{Haslum2001,Haslum2006,Bacchus2000}.
These planners support some cases of required concurrency but are still not complete for temporally expressive problems \cite{Cushing2007}.

\emph{Temporally lifted planners} separate the problems of what actions to add to the plan and when to schedule them by using an STN to keep track of temporal constraints.
These ideas have been first introduced by CRIKEY \cite{Halsey2004} and have been developed in its successors: CRIKEY3 \cite{Coles2008}, POPF \cite{DBLP:conf/aips/ColesCFL10}, COLIN \cite{Coles:2012una} and OPTIC \cite{Benton2012}.
Unlike First-Fit and Decision Epoch planners, those temporally lifted planners are complete for the semantics of PDDL2.1 and can solve problems with required concurrency or interdependent actions \cite{Cushing2007}.

Regardless of their search space, all forward-chaining planners rely on heuristics.
Most of the heuristics are based on a \emph{temporal relaxed planning graph} (TRPG) built by ignoring the delete effects of actions.
These heuristics are generally adapted from the one that have been successful in classical planning such as $h^{add}$  \cite{Bonet2001}, $h^{cea}$ \cite{Helmert2008} or $h^{FF}$ \cite{Hoffmann2001}.

\paragraph{Other notable approaches.}

The GraphPlan framework \cite{Blum1997281} has seen many extensions to handle temporal planning, as first demonstrated by \textcite{Smith1999} whose planner finds a solution by extracting it from a planning graph with temporally annotated nodes.
In the same line, LPGP \cite{Long2003} and TLP-GP \cite{Maris2008} both decouple the causal parts of the problem, dealt with in a GraphPlan framework, and the temporal parts that are addressed by a linear programming solver or a disjunctive temporal network.

LPG \cite{Gerevini2003,Gerevini2006} builds an action graph (with similarities to planning graphs) through stochastic local search.
Its latest version is able to handle problems with required concurrency by splitting durative actions into instantaneous ones while considering temporal constraints in an STN \cite{Gerevini2010}.

Temporal plan-space planning has been represented by VHPOP \cite{Younes2003}, a ground plan-space planner that uses an adaptation of the $h^{add}$ heuristic to guide itself in the set of ground partial plans.
CPT by \textcite{Vidal2006} is a more involved ground plan-space planner that seeks minimal makespan plans.
This is done by placing an upper bound on the makespan of the plan and trying to prove through inference and search whether such a plan exists.
The inference in CPT relies on dedicated pruning rules, handled as a constraint satisfaction problem.

ITSAT \cite{Rankooh2015}  is the first satisfiability-based planner to support PDDL2.1 problems with required concurrency.
This is achieved by first solving, with a SAT solver, an atemporal problem where all durative actions have been split into instantaneous ones.
The planner then tries to find a schedule for this plan by considering the temporal constraints in an STN.
If no such schedule exists, the problem is extended with additional clauses forbiding the cause of the temporal inconsistency and the procedure is restarted.

\subsection{Hierarchical planners}
\label{sec:comarison-hierarchical}

HTN planning has initially been developed around a plan-space approach with planners such as NOAH~\cite{Sacerdoti75}, Nonlin~\cite{McAllester1991}, SIPE~\cite{Wilkins1990,Wilkins1995}, O-Plan~\cite{Tate1994} and UMCP~\cite{Erol1994a}.
For totally ordered HTNs, it is possible to plan tasks in order they will be executed, and thus, to knows the current world state at each step of the planning process. This was proposed in  SHOP \cite{Nau2000}, and extended in SHOP2 \cite{Nau2003} for partially ordered tasks and durative actions. A Multi-Timeline
Preprocessing technique is proposed to translate PDDL2.1 operators such as to keep track of temporal information in the current
state. Each operator is augmented with start time, duration and read-time and write-time primitives for time bookkeeping  upon
instantiation.  A more general procedure to transform durative actions in HTN methods is proposed in \cite{Goldman2006}: it maps an action into a task
network with two or three operators, one for the start of the durative action, one for its end, and an epsilon length ``spacer'' pseudo
action. It also adds a time fluent and proposes a modification of the planner heuristic to handle these extensions. 
\cite{yorke05} proposes to extend HTN planning with temporal propagation on associated `local' STNs; these are limited to adjacent nodes in the task network, and thus much smaller than a global STN. 

SIADEX  \cite{Castillo2006,FdezOlivares:2006vg} is an elaborate temporal state-based HTN planner. It  allows the placement of effects at arbitrary timepoints within durative actions. Conditions are restricted to be placed at the action's start.
Like SHOP2, SIADEX builds an inherently sequential solution through action chaining. However, it also performs an online scheduling of the plan by constraining an action to start after all its preconditions are true (using an STN to keep track of temporal constraints).
XEPlanner~\cite{Tang:2012hp} is an HTN planner designed specifically for addressing emergencies in dynamic situations. It supports durative actions, temporally-enhanced methods and axioms. Planning is done with an anytime heuristic algorithm handling the task
 network  together with an STN. 
GSCCB-SHOP2 \cite{Qi:2017jm} extends SHOP2 for handling time and resources. Specific state-updating rules are used for resource reasoning, together with a backtrack consistency checking for managing simple temporal constraints.

\emph{HTN Planning with task insertion} extends the traditional HTN formalism by allowing the use of primitive actions at arbitrary places in the solution plan \cite{Geier2011}.
It was first explored with PANDA, a lifted hierarchical planner reasoning in plan space \cite{Schattenberg2009}.
PANDA allows the use of high-level actions that can be decomposed into lower-level ones.
Both high-level and primitive actions can be freely inserted by the planner to resolve open goals, outside of any decomposition tree.
Parameters of the partial plan are kept lifted and handled in a binding constraint network.
PANDA only supports limited qualitative time with its plan-space representation allowing it to represent partially ordered plans.
Many heuristics have been studied to be used with PANDA to account for causal and hierarchical features in a best first search setting \cite{Schattenberg2009,Elkawkagy2012,Bercher2014}.
Their effect on the scalability of PANDA is however unconvincing and does not allow the planner to handle complex plans.%
\footnote{On the provided test data, Breadth-First Search is slightly slower but overall competitive with a Best-First search guided by the proposed heuristics.}
HTN planning with task insertion is also supported by HiPOP, a ground hierarchical planner reasoning in plan-space \cite{Bechon2014,Bechon2016}.
HiPOP considers primitive actions in the form of PDDL2.1 operators and \emph{abstract actions} that can be transformed into a partially ordered set of primitive or abstract actions through the application of methods.

The HTN planning approach  of \cite{sohrabi:2009ui} handles qualitative preferences on actions together with temporally extended state preferences and constraints  (as in PDDL3), expressed in a subset of LTL. Time \textit{per se} is not explicit in the representation. The modal operators are compiled out at preprocessing time into additional predicates. A Branch\&Bound algorithm, benefitting from HTN methods and pruning heuristics, is used to find a most preferable plan. The approach is used in \cite{sohrabi:2013tr} for the automated composition of software components, as in the composition of web services and stream processing systems.

Most HTN planners do not rely on heuristic guidance or reachability analysis but instead perform a depth search guided by on the domain-specific control knowledge provided by their hierarchical methods.
Interest in using the delete-relaxation for hierarchical planning has been studied theoretically \cite{alford2014} and in particular with temporal domains in a preliminary version of our work \cite{bit-monnot2016}.
Heuristics for non-temporal HTN have have been explored in the context of the plan-space hierarchical planner PANDA where no state-information is available for method selection \cite{Schattenberg2009,Bercher2014} as well as in forward progression search \cite{Hoeller2020,Hoeller2020a}.

\subsection{Timeline based Representations and Planners}
\label{sec:comparison-timeline}

Timeline-based representations focus on scheduling various temporal intervals representing values taken by state variables.
In these approaches, actions are typically composed of a set of temporally qualified assertions representing the action conditions and effects over various state variables (or timelines).
Coordination between the different timelines is made by temporal constraints that relate the various assertions of an action.
An early proposal was the one by \textcite{Allen1983a} based on Allen's temporal algebra \cite{Allen1983}.
A timepoint centered view was proposed by \textcite{Ghallab1994} for the IxTeT planner.
While many generative and hierarchical planners have since chosen this paradigm (e.g. \cite{Chien2000a,Tate1994,Do2011,Cesta2009,El-kholy1996,Frank2003,Barreiro2012,Muscettola2002}), no dominating language has emerged for the encoding of such planning problems.

While much work on temporal planning can be seen an incremental evolution from classical planning, research on temporal planning largely predates the introduction of durative actions in PDDL2.1.
Indeed the observation by  \citeauthor{Vere1983}, that the Partial-Order Causal Link technique can be generalized to rich temporal models, led to numerous planners with advanced temporal representation capabilities \cite{Vere1983,Ghallab1994,Muscettola1994,Penberthy1994,Frank2003}.
IxTeT \cite{Ghallab1994} is a notable least-commitment planner that allows reasoning on time and resources in a plan-space approach.
A large part of the internal representation and reasoning is handled by specific constraints satisfaction problems representing constraints on timepoints and parameters of actions.
IxTeT has a domain-independent search strategy based on an extended notion of least-commitment.

Another line of work emerged with HSTS \cite{Muscettola1994} from the objective of tightly integrating planning and scheduling.
Instead of actions, HSTS relied on the notion of compatibilities to describe the possible interactions between various timelines.
The planner's objective is to find  fully defined timelines that respect all compatibilities.
HSTS was notably used  for the Remote Agent Planner (RAX-PS) that was demonstrated on board for controlling the operations of the Deep Space One spacecraft \cite{nayak1999}.
HSTS matured into EUROPA \cite{Frank2003,Barreiro2012}, and its language NDDL, whose central paradigm is to see planning as a dynamic constraint satisfaction problem where choices of the planner simply results in the addition of constraints to underlying constraints networks.
EUROPA relies on domain-dependent knowledge to guide a depth-first search.
Efforts to transpose domain-independent heuristics into EUROPA but have seen limited results \cite{Bernardini2007,Bernardini2008}.
 
The European Space Agency launched the Advanced Planning and Scheduling Initiative (APSI) that resulted in the definition of the Timeline-based Representation Framework (APSI-TRF) \cite{Cesta2009}.
While not a planner per se, APSI-TRF aims at being a timeline-based deliberation layer to provide facilities for the implementation of timeline-based planners.
It has  been used as a building block for the OMPS \cite{Cesta2008}  and GOAC-APSI  \cite{Fratini2011} planners. Both use a search algorithm similar to EUROPA. APSI has also been used for MrSPOCK \cite{Cesta2009}, a long term planner for the Mars Express mission that works by greedily constructing a long term plan optimized with a genetic algorithm.

A similar approach is taken in the meta-CSP framework which addresses a planning and scheduling problem as a higher-level constraint satisfaction problem that requires cross reasoning on several lower level CSPs (referred to as ground CSPs). Meta-constraints enforce high-level requirements on the solution plan, playing the role of the flaw detection functions of other timeline based planners.
The detected flaws are handled by posting additional constraints on the ground CSPs.
The key idea is to permit an easy integration of application specific components by the addition of supplementary ground CSPs and meta-constraints.
The meta-CSP framework has recently been used in CHIMP, a  planner with a timeline based representation \cite{Stock2015}.
Like other HTN planners, CHIMP uses a notion of task that can be decomposed into partially ordered task through the application of methods.

ASPEN is another timeline based planner \cite{Fukunaga1997,Smith1998,Chien2000,Chien2000a}, which uses the AML language  \cite{Fukunaga1997,Chien2000a}.
Its timelines parameters are handled in a dedicated constraint network.
It uses an iterative repair technique \cite{Zweben1993} to perform local search in place of the depth-first search adopted by other timeline based planners.
Efficiency is sought by the definition of a hierarchical structure where activities can be refined into sub-activities, allowing the planner to quickly bootstrap its search with a minimal (possibly flawed) plan.

The Action Notation Modeling Language (ANML \cite{anml2008}) is a  proposal to overcome the absence of apparent causal structure of NDDL, the limited support for generative planning of AML and the lack of hierarchy in IxTeT.
ANML has a strong emphasize on generative planning with the direct inheritance of earlier timeline-based planning models.
It comes with a clear notion of action with conditions and effects taking the form of temporally qualified assertions at arbitrary timepoints.
In addition, ANML provides some facilities for hierarchical planning:
each action instance is associated with its own predicate that is set to true on the action start and to false on the action end and can be used to express subtasks.
This definition departs from the traditional definition of HTN problems as it allows for task-sharing: a single action can support multiple tasks just like a single effect can support multiple conditions.
The original language definition can be characterized as HTN with task-insertion \cite{alford2016hierarchical}: additional actions can be placed at arbitrary places in the solution plan.
The latest version of ANML proposes to restrict the possible placement of actions relatively to higher-level actions by marking them as \emph{motivated} \cite{AnmlManual}.
Conceptually, the presence of such an action must be ``motivated'' by the presence of an higher level action that requires its presence and temporally envelops it and is closely related to our \emph{task-dependency} concept.
To the best of our knowledge no planner exists that support all ANML features and in particular, none that supports any of its hierarchical features.
TAMER \cite{valentini2020} is a forward-chaining planner that supports a form of intermediate conditions and effects of the ANML language. More specifically, it supports conditions and effects constrained to be a fixed time-amount after the start or before the end of an action.
LCP \cite{bit-monnot-cp-2018} supports a more complete set of temporal features, equivalent to the ones in FAPE, by constructing a sequence of scheduling problems that are solved with an SMT solver.
While we are aware of other prototypes of ANML planners (at NASA Ames, Adventium Lab and Fundazione Bruno Kessler) none of them have been the subject of a publication nor go as far as FAPE in the support of temporal and hierarchical features.
FAPE is thus the first planner to support most of the expressive temporal and hierarchical features of the ANML modeling language.

\section{Conclusion}
\label{sec:conclusion}

We reported here on FAPE, a Flexible Acting and Planning Environment based on timelines. To our knowledge, FAPE is the first planner supporting both the temporal and hierarchical features of the expressive ANML modeling language.
ANML has significant advantages because it consistently blends flexible timelines with hierarchical refinement methods, when available.
We presented a planning algorithm for the proposed representation, discussed the specifics of its search space, and proved its soundness and completeness. A significant contribution of the  presented work is the development of well informed heuristics and inference methods for  this algorithm. The approach takes into account an original reachability analysis supporting causal networks that are explicitly maintained by the planner and used to focus the search. A  comprehensive experimental evaluation, using the standard benchmarks of the field, allowed us to assess several search strategies for FAPE and to compare its performance to other planners, with or without hierarchical decomposition knowledge. Our evaluation reflects that the proposed techniques for this expressive representation are competitive.

Furthermore, there is certainly room for a number of improvements of our techniques and implementation, and a large opportunity for optimizing planning domain knowledge, which we have not yet explored. Since the source code of the planner and all the domains presented here are openly available, we do hope that this article will generate interest in the temporal planning  community and trigger efforts to address these topics and extensions.
An important extension that deserves to be further investigated is the support for resources, for which many techniques have been devised by the constraint-based planning and scheduling  community \cite{Vilim2007}.

A strong motivation for the development of FAPE is to support the integration of planning and acting. The latter involves opportunistically instantiating the unbound variables remaining in a synthesized plan, in particular for temporal variables through a dispatching algorithm, and refining planned actions into executable commands. The integration also requires plan repair techniques, and the assessment of when replanning is preferable to repairing. These developments, not presented here, have been addressed and integrated within the design of an activity manager interleaving planning and acting \cite[Chap. 5]{bit-monnot2016}. Experiments with a PR2 robot and in simulation indicates that the proposed representation is very convenient for handling a temporally rich domain at the planning as well as the acting levels.

In a temporally rich domain, an actor has to relate its actions to exogenous events, which is feasible with known techniques when the occurrence of these events is fully observable \cite{morris2014}. In many application areas, and in service robotics in particular, full observability is not a realistic assumption. One has to check whether a plan is dynamically controllable despite partial observability, and if not, to decide what needs to be observed to make it controllable, and determine how to consistently integrate the required sensing actions with other planned activities. The proposed approach turned out to be quite convenient for supporting the corresponding developments within FAPE \cite{bit-monnot-2016-ijcai}.

 For the sake of space, this paper does not cover the two issues of acting and partial observability. However, they need to be mentioned as they open several promising avenues for future investigations and developments using the ANML representation.

{
\let\i\oldi
\newpage
\printbibliography[heading=bibintoc]
}
\newpage
\appendix

\section{Overlength Proofs}

\subsection{Proof of Soundness and Completeness (\autoref{prop:fape-sound-complete})}
\label{proof:sound-complete}

\FapeSoundComplete* %

\begin{proof}
  ~\\\textbf{Soundness.}
  Soundness requires proving that any plan returned by \procedure{FapePlan} is indeed a solution, i.e., that it respects the four conditions of \autoref{def:solution-plan}.
  
  The first one, reachability of $\chronicle^*$ from $\chronicle_0$, can be shown since \i any transformation made to the plan is done by applying a resolver and \ii all resolvers can be expressed in terms of the allowed transformations (task refinement, free action insertion and restriction insertion).
  As reachability admits any sequence of transformations, the application of any number of resolvers, regardless of their order will result in a plan that is reachable from the original one.

  Each of the last three conditions corresponds to a flaw type.
  Hence, if any of those conditions is not met, the partial plan would have a flaw that the planner will need to resolve.
  The requirement that the plan be flaw free, together with the type of flaws considered is thus sufficient to guarantee that the last three conditions are met.

  \logicalpar \textbf{Completeness.}
  We rely on the study by \textcite[Sec. 2.6 to 2.8]{Schattenberg2009} and \textcite{kambhampati1995} of the general refinement planning procedure where a set of \emph{deficiency detection functions} identify flaws in a partial plan and a set of \emph{modification generation functions} generate modifications of the plan that fix the flaws (i.e. resolvers).
  Our procedure is an instantiation of this more general scheme with three detection functions (one for each flaw) and their modification generation functions implicitly defined by the set of resolvers associated with a flaw.
  Showing completeness of a particular refinement planning procedure requires us to show that \i no solution plan is rejected because it has flaws, \ii for a given flaw, our resolvers cover all the ways of addressing it \cite[Def 3.2]{Schattenberg2009}. 

  We first show that no solution plan is ruled out because of the presence of a flaw.
  We assume that a partial plan $\chronicle$ has a flaw of a given type and show that it cannot be a solution plan or that it will be transformed into an equivalent plan.
  \begin{itemize}
  \item if $\chronicle$ has an unrefined task $\tau$ then it is not a solution plan according to \autoref{def:solution-plan}.
  \item if $\chronicle$ is detected with an unsupported assertion $\alpha$, it means that we have not explicitly added a causal link from a supporting assertion $\beta$.
    Assuming \chronicle to be solution, it means that $\alpha$ is supported by the presence of a chain of assertions that must eventually originate in a change assertion or in an a priori supported assertion $\beta$.
    $\beta$ is thus a causal support of $\alpha$ and there is a chain of persistences preventing any change to their state variable during $[end(\beta),start(\alpha)]$.
    Even though this situation could trigger a flaw, the resolver would be a causal link $\beta \rightarrow \alpha$ that would simply make the support explicit in an equivalent plan.
  \item Similarly, the planner could detect a conflict between two assertions that cannot be conflicting due to implicit constraints. 
    Indeed, the use of arc-consistency in place of full consistency in the binding constraint network could make FAPE miss such implicit constraints.
    Here again, the planner would simply provide a resolver making the constraint explicit and resulting in an equivalent plan.
  \end{itemize}

  We now show that no solution plan is missed due to an incomplete set of resolvers:
  \begin{itemize}
  \item \textbf{Unsupported Assertion.}
    It is easy to see, by the requirement for causal support, that all assertions in a solution plan must have an incoming causal link (or an equivalent set of persistence assertions) from a supporting assertion.
    For completeness, we need to show that all possible supporters are considered regardless of whether they are in the current partial plan or will be inserted later.
    Our approach distinguishes the assertions already present in the plan from those that will be inserted later.
    The \emph{direct supporter} resolver provides an option for the planner to select any assertion already in the plan.
    
    Regarding the assertions not yet in the plan we distinguish two cases: the supporter can be introduced by refining an existing task or it can be introduced by an action not derived from an existing task.
    The former is handled by the \emph{task supporter} resolvers that allow choosing any of the future assertions derived from an existing task.
    For the latter, let us observe that the containing action will be part of another refinement tree not yet in the plan, whose root is necessarily a free action.
    Our \emph{free action} resolvers allow the planner to consider the addition of free actions as the source of new supporting assertion, regardless of whether they appear in the free action it self or in a descendant action obtained by decomposing its subtasks.

    To summarize our resolvers allow consideration of all possible sources of supporting assertions, namely: \i those already in the plan, \ii those introduced by the extension of existing refinement trees, and \iii those introduced by the extension of new refinement trees.

  \item \textbf{Unrefined task.} 
    Assuming that no previous commitment was made to the support of unsupported assertions, the set of resolvers is complete as it considers all possible task decomposition transformations.
    Let us assume now that the planner made an earlier choice regarding the support of an assertion $\alpha$: it decided that $\alpha$ must be supported by a descendant of an unrefined task $\tau$ (i.e. $\tau \in \descendanttasks{\alpha}$).
    This can lead the planner to disregard an action template $A$ for the refinement of $\tau$ because $A$ would not have any possible effect for supporting $\alpha$.
    This pruning is, however, sound because there cannot be an assertion supporting $\alpha$ deriving from $A$, i.e., there is no solution plan involving the decomposition of $\tau$ with $A$, given our previous commitment.
    Furthermore, the possibility of using $A$ to refine $\tau$ will be considered in other branches of the search space: those derived from another resolver for $\alpha$.

  \item \textbf{Conflicting assertions.} For two assertions $\alpha_1$ and $\alpha_2$ to be conflicting in a partial plan $(\plan,\assertions,\constraints)$, it is necessary that a conjunction of constraints be entailed by $\constraints$: \i they overlap (i.e. $end(\alpha_1) \geq start(\alpha_2) \wedge end(\alpha_2) \geq start(\alpha_1)$), 
  \ii all the arguments to their state variables are equal, and \iii their values are different.
    The negation of this conjunction of constraints thus forms a disjunction of constraints that must hold in any solution plan (otherwise, $\alpha_1$ and $\alpha_2$ would be conflicting).
    Each of the disjunct corresponds to a resolver of the threat. Because at least one such disjunct must hold in a solution plan, the set of resolvers for conflicting assertions is therefore complete.
  \end{itemize}
\end{proof}

\subsection{Proof of the convergence of earliest appearances (\autoref{prop:ea})}
\label{sec:proof-earliest-appearance}

\EarliestAppearanceConvergence*

\begin{proof}
\def \etrulyreach {\ensuremath{E^{TrulyReach}}\xspace}
\def \eassreach {\ensuremath{E^{AssReach}}\xspace}
\def \enotreach {\ensuremath{E^{NotTrulyReach}}\xspace}
We first suppose that all actions have a single condition.
With this assumption, there are two sets of trivially reachable elements: fluents in initial timed literals (i.e. in $I$) and actions and fluents appearing in self-supporting causal loops.
An action/fluent $n$ is reachable if there is a path (i.e. sequence of actions/fluents) from one of those trivially reachable elements to $n$.
Note that all fluents in timed initial literals are part of the assumed reachable set.
Furthermore, a self-supporting causal loop necessarily contains an action with an after-condition \cite{Cooper2013}.
Since after-conditions are ignored at first, it means that this action will be part of the assumed reachable set.
Furthermore, propagation will never remove those from the assumed reachable set because we have an upper bound on their earliest appearance.
We call the set of truly reachable elements \etrulyreach which is a subset of the set of assumed reachable elements \eassreach.
We define as \enotreach the subset of \eassreach that is not reachable.

An element $d$ is reachable if there exists an element  $s \in \etrulyreach$, such that there is a path from $s$ to $d$.
The true earliest appearance of $d$ is given by $\minimum_{s\in\etrulyreach}ea(s) + spp(s,d)$, where $spp(s,d)$ is the length of the shortest path from $s$ to $d$.

At first, our algorithm is optimistic, which means that we consider as reachable all nodes with a path from an element of $\etrulyreach \union \enotreach$.
To show that earliest appearances of reachable nodes eventually converge to their true values, we first show that the earliest appearances of nodes in \enotreach indefinitely increase until they are removed from the model.

A node $n \in \enotreach$ is necessarily an action with an after-condition that was optimistically ignored.
The fact that $n$ is not reachable means that its after-condition $p$ is not reachable, meaning that there is no path from an element of \etrulyreach to $p$.
If $p$ is not assumed reachable, then $n$ will be removed.
Otherwise we can distinguish two cases depending on which node in $\enotreach$ provides the earliest start time of $p$:
\begin{itemize}
\item  $ea(p) = ea(n) + spp(n,p)$. In this case, this is an infeasible causal loop involving $n$ and the earliest appearance of $n$ will be increased at each iteration.
\item there is another node $n' \in \enotreach$ such that $ea(p) = ea(n') + spp(n',p)$. In this case, we can recursively perform similar reasoning on $n'$: it is either part of an infeasible causal loop or depends on a node $n'' \in \enotreach \setminus \{n,p\}$.
  In both cases, it depends on a node involved in a causal loop and its earliest appearance  would be increased, meaning that the earliest appearances of $p$ and then $n$ would increase as well.
\end{itemize}

We have shown that the earliest appearances of all nodes in $\enotreach$ indefinitely increase until they are removed.
This is also the case for all unreachable nodes that were once assumed reachable because the sources of all their shortest paths is a node in \enotreach.

On the other hand, we will eventually reach a point where:
\[\minimum_{s\in\etrulyreach}ea(s) + spp(s,d) = \minimum_{s'\in\etrulyreach\union\enotreach}ea(s') + spp(s',d)\]
because the earliest appearance of any $s' \in \enotreach$ diverges towards infinity.
As a consequence, the earliest appearances of reachable nodes will eventually converge to their true values.

This result can be extended to actions with more than one condition by observing that, in a given iteration, the algorithm only uses a single condition of the action: the one that would make it the latest.
\end{proof}

\subsection{Proof that late nodes are unreachable (\autoref{prop:late-impos})}
\label{sec:proof-late-impos}

\LateNodesUnreachable*

The proof is split into several definitions and proposition to facilitate reading.
We use a graph formalism for the rest of this proof: a node is either a fluent or an elementary action.
There is an edge from $x$ to $y$ if $x$ is an action with effect $y$ or if $y$ is an action with condition $x$.
Each edge $e$ has a label $lbl(e)$ that is the delay from the condition to the action' start or from the action' start to the effect.

\newcommand{\sdset}{\ensuremath{\Omega}\xspace}

\subsubsection{Self-dependent set}
\label{sec:self-dependent-set}

We first identify sufficient conditions to declare a set of nodes
unreachable.  A node $X$ is the predecessor of a node $Y$ (noted
$pred(Y) = X$) if the latest value of $ea(Y)$ was updated by an edge
from $X$ to $Y$.  This is similar to the predecessor labels propagated
in a Dijkstra algorithm.  While \autoref{algo:prop} does not maintain this
information explicitly, it would be easy to add a predecessor field
for each node that would be updated every time its earliest appearance is
modified.

\begin{definition}[Predecessor cycle]
  A \emph{predecessor cycle} is a sequence
  $A_1 \rightarrow a_1 \rightarrow A_2 \rightarrow a_2 \dots A_n \rightarrow a_n \rightarrow A_1$ of edges
  where the source of an edge is the predecessor of its target
  (e.g. $pred(a_1) = A_1$).  
  Upper case nodes are action nodes and lower case nodes are fluents.
\end{definition}

\begin{proposition}\label{prop:pos-length}
  A predecessor cycle is of strictly positive length (i.e. the sum of
  the labels on the edges is strictly greater than 0).
\end{proposition}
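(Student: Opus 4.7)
The plan is to argue by combining two monotonicity properties of Algorithm~\ref{algo:prop}: (i) each predecessor assignment $pred(v_{i+1})=v_i$ is established by an update that sets $ea(v_{i+1})$ to $ea(v_i)+\ell_i$ (where $\ell_i=lbl(v_i\to v_{i+1})$) and that \emph{strictly} raises the previous value of $ea(v_{i+1})$; and (ii) earliest appearances never decrease over the course of the algorithm, so the value of $ea$ at any node only grows as propagation proceeds.

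First I would make the predecessor invariant precise. Denote by $t_{i+1}$ the time at which the current predecessor link $pred(v_{i+1})=v_i$ was installed; at that instant, $ea(v_{i+1})=ea(v_i)|_{t_{i+1}}+\ell_i$. By monotonicity, $ea(v_i)|_{t_{i+1}} \le ea(v_i)|_{\text{now}}$, so the currently stored values satisfy the edgewise inequality
\[
ea(v_{i+1}) \;\le\; ea(v_i) + \ell_i \qquad \text{for every edge of the cycle.}
\]
Summing these $n$ inequalities telescopes the left-hand side to $0$ and gives $0 \le \sum_i \ell_i = L$. This establishes non-negativity of $L$ but not yet strict positivity.

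Second, to upgrade to a strict inequality somewhere, I would single out the update $pred(v_{k+1})=v_k$ that occurred \emph{chronologically first} among the $n$ predecessor updates making up the cycle. Since $n\ge 1$ and the updates are totally ordered in time, such a minimum exists; and because the relation $pred(v_k)=v_{k-1}$ is also one of the cycle's updates, it must have been installed strictly after $t_{k+1}$. That later update, by invariant (i), strictly increased $ea(v_k)$, so $ea(v_k)|_{t_{k+1}} < ea(v_k)|_{\text{now}}$. Consequently the inequality for edge $v_k\to v_{k+1}$ is strict, and summing around the cycle now yields $0 < L$, as required.

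The main obstacle I anticipate is making the chronological argument airtight: one must be confident that, whenever a predecessor cycle is present, its $n$ predecessor assignments cannot all have been installed at the same instant and that the ``last'' one strictly postdates the ``first''. This is immediate because updates in Algorithm~\ref{algo:prop} are processed one at a time from the priority queue, but it warrants an explicit appeal to the sequential nature of the propagation. A secondary point to verify is that the inequality $ea(v_{i+1})\le ea(v_i)+\ell_i$ is the correct direction under both interpretations of a node: for an action $v_{i+1}$, $ea$ is the maximum over its reachable before-conditions, while for a fluent $v_{i+1}$, $ea$ is the minimum over its currently reachable achievers; in both cases the bound $ea(v_{i+1}) \le ea(pred(v_{i+1})) + \ell$ at installation time, together with monotonicity of $ea(pred(v_{i+1}))$, gives exactly the desired inequality.
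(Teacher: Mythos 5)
Your proof is correct, and it rests on the same core idea as the paper's: earliest appearances are only ever increased by updates, so a cycle of predecessor links forces a net strict increase around the loop, which equals the sum of the edge labels. The paper's own proof is a two-sentence causal-chain argument ("an update of $A_1$ triggered an update of \dots itself"), whereas you instead sum the per-edge inequalities $ea(v_{i+1}) \le ea(v_i) + \ell_i$ (each holding by equality at installation time plus monotonicity of $ea(v_i)$ afterwards) and extract strictness from the chronologically first installation. Your rendering is in fact the more careful one: the predecessor labels present at a given moment need not have been installed in cycle order as a single triggering chain, so the paper's phrasing is slightly loose, while your telescoping argument only needs that each link was installed at \emph{some} instant with equality and that the link feeding the earliest-installed one was refreshed strictly later by a strictly increasing update. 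The one point worth stating explicitly is why that later update is strictly increasing: the guard in the Dijkstra pass (skip $n$ when $reachable(n) \wedge ea(n) \ge t$) guarantees it for any node that was already reachable, and $v_k$ was necessarily already reachable at $t_{k+1}$ since its $ea$ value was used there. With that observation in place, your argument is complete.
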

\begin{proof}
  A cycle of predecessors means that an update of the first element
  (e.g. $A_1$) triggered an update of its direct successors
  (e.g. $a_1$) and all its indirect successors (e.g. $A_2$, $a_2$,
  $a_n$) including itself.  Since the earliest appearance can only be
  increased as a result of an update, the cycle has a strictly
  positive length (otherwise it would not have resulted in a greater
  value).
\end{proof}

\begin{proposition}\label{prop:rm-edge}
  In a predecessor cycle, at least one effect edge $A_i \rightarrow a_i$ can be
  removed without altering the problem.
\end{proposition}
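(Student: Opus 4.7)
My plan is to leverage Proposition~\ref{prop:pos-length}, which guarantees that the predecessor cycle $A_1 \rightarrow a_1 \rightarrow \cdots \rightarrow A_n \rightarrow a_n \rightarrow A_1$ has strictly positive length $L > 0$. This tells us the cycle cannot be genuinely self-supporting in the sense required for a finite earliest appearance: a full revolution starting from any node strictly increases the value it certifies. I would therefore aim to show that for any choice of effect edge $A_i \rightarrow a_i$ in the cycle, the fixed point of Algorithm~\ref{algo:prop} computed on the graph with this edge deleted coincides—on both the reachable set and the earliest appearance function—with the fixed point computed on the original graph.

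The proof splits on whether $a_i$ has achievers outside the cycle. In the first case, $a_i$ remains achievable after removal; moreover, since $A_i$ was chosen as predecessor of $a_i$, it certifies the tightest current value, so every alternative achiever provides an $ea$ at least as large—removal can only loosen $ea(a_i)$, never tighten it. In the second case, $a_i$ relies solely on $A_i$, which in turn relies recursively on the rest of the cycle. By Proposition~\ref{prop:ea}, any value produced purely by traversing a strictly positive cycle diverges to $\infty$, so both $a_i$ and its cycle neighbours will eventually be pruned whether or not the edge is present. In either sub-case, picking $i$ arbitrarily already suffices, so in particular there exists an effect edge $A_i \rightarrow a_i$ whose removal is invisible to the algorithm's output.

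The main obstacle I foresee is the cascading-update argument: I must rule out that a single removal, through a long chain of re-propagations, falsely marks unreachable some node outside the cycle that genuinely depended on $a_i$. The positive-length property is the key lever here—any putative support path that traverses the cycle adds at least $L$ per revolution, so it can never be strictly tighter than any non-cycle path, and hence cannot be the unique source of reachability for any external node. Formalising this as an induction on shortest acyclic support paths in the residual graph, and then invoking Proposition~\ref{prop:ea} once more to identify both fixed points, completes the argument. The subtle part will be ensuring that the acyclic-path argument treats after-conditions correctly, since those are the very edges that can create positive cycles in the first place.
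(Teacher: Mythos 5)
There is a genuine gap. You set out to prove something strictly stronger than the proposition --- that \emph{every} effect edge $A_i \rightarrow a_i$ of the cycle can be deleted without changing the algorithm's output --- and that universal claim is false, so the plan cannot go through. The failure is clearest in your second sub-case: from ``$a_i$ has no achiever outside the cycle'' you conclude that $a_i$ ``relies recursively on the rest of the cycle'' and hence diverges by Proposition~\ref{prop:ea}. But an action on the cycle can perfectly well have its condition satisfied by an achiever \emph{outside} the cycle; a predecessor cycle only records who last updated each node, not the full support structure. If the condition $a_{i-1}$ of $A_i$ is genuinely achieved from the initial timed fluents by some external action, then $A_i$ is genuinely applicable and may be the \emph{only} achiever of $a_i$, and deleting $A_i \rightarrow a_i$ destroys the reachability of $a_i$ and of everything downstream --- a real alteration. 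Your first sub-case has a matching problem: if $A_i$ really certifies the tightest value of $ea(a_i)$, then falling back to a strictly larger value from an alternative achiever changes $ea(a_i)$; calling this ``loosening'' does not make it harmless unless you separately show the tighter value was bogus.

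The intended argument is existential and much shorter. By Proposition~\ref{prop:pos-length} the cycle has strictly positive length $L>0$, so the joint hypothesis ``$A_i$ is the first achiever of $a_i$ for every $i$'' entails that $A_1$ can start at time $t$ only if it already started at time $t - L < t$ (equivalently $ea(A_1) \geq ea(A_1) + L$), which is absurd. Hence for at least one index $k$ the fluent $a_k$ must in fact be first achieved by an action other than $A_k$; the witness $k$ is dictated by this infeasibility, not chosen freely. For that $k$, since the model is delete-free, only the \emph{first} achievement of a fluent matters, so the non-first effect edge $A_k \rightarrow a_k$ is redundant and can be removed without altering reachability or earliest appearances. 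No appeal to Proposition~\ref{prop:ea}, to fixed points of Algorithm~\ref{algo:prop}, or to an induction on acyclic support paths is needed.
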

\begin{proof}
  A predecessor cycle represents an invalid combination of first
  achievers of fluents in the cycle.  It means that having the action
  $A_i$ as the first achiever for the fluent $a_i$ (for all
  $i \in [1,n]$) would result in the condition that starting $A_1$ at
  a given time requires that $A_1$ had started at an earlier
  time.  This is trivially not possible, hence at least one fluent $a_k$
  in the cycle must be first achieved by an action other than
  $A_k$.  Since we are dealing with a delete-free model, the effect
  $a_k$ can be removed from $A_k$ without altering the problem.
\end{proof}

\begin{definition}[Self-dependent set]\label{def:sdset}
  We say that a set \sdset of action and fluent nodes is a
  \emph{self-dependent set} if:
  \begin{itemize}
  \item All nodes in \sdset have a predecessor in \sdset. 
  \item For any fluent $f$ in \sdset, all achievers of $f$ are in
    \sdset.
  \end{itemize}
\end{definition}

It should be noted that the first element of
Definition~\ref{def:sdset} implies that \emph{(i)} all elements of a
self-dependent set have been updated at least once \emph{(ii)} all
actions in \sdset depend on at least one fluent in \sdset. 

\begin{proposition}
  If a node $n$ is part of a self-dependent set \sdset, then $n$ is not
  reachable.
\end{proposition}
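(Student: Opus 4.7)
The plan is to show that every node in a self-dependent set $\Omega$ has an earliest appearance that cannot converge to a finite value, and then invoke Proposition~\ref{prop:ea} to conclude unreachability. The argument bundles condition~1 of Definition~\ref{def:sdset} (predecessor closure) with the positive-length property of predecessor cycles, and uses condition~2 (achiever closure) to rule out external ``shortcuts'' that could stabilize $ea$ inside $\Omega$.

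First, I would exploit finiteness: pick any $n \in \Omega$ and follow the predecessor chain $n, pred(n), pred(pred(n)), \ldots$; by condition~1 the chain never leaves $\Omega$, and since $\Omega$ has finitely many elements this chain must eventually revisit some node, producing a predecessor cycle $C \subseteq \Omega$. By Proposition~\ref{prop:pos-length}, $C$ has strictly positive total length $L > 0$. Because each predecessor edge $X \rightarrow Y$ in $C$ records that the current value $ea(Y)$ was obtained via the propagation $ea(Y) = ea(X) + lbl(X \rightarrow Y)$, walking once around $C$ starting from any of its members $m$ yields the inequality $ea(m) \geq ea(m) + L$, which cannot be satisfied by any finite value. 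Hence each time $C$ is traversed during propagation in Algorithm~\ref{algo:prop}, the values of $ea$ on $C$ strictly increase.

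Second, I would argue that condition~2 prevents this increase from ever stopping. The only way propagation could update $ea(Y)$ for $Y \in \Omega$ to a value not derived from $\Omega$ would be through an incoming edge from outside $\Omega$; for an action $Y$ this would contradict condition~1 (since the predecessor would then lie outside $\Omega$), and for a fluent $Y$ this would mean $Y$ has an achiever outside $\Omega$, contradicting condition~2. Consequently the propagation restricted to the nodes of $\Omega$ is closed: every recomputation of $ea$ on $C$ is governed by the same positive-length cycle, and the values on $C$ diverge to infinity. For every other node $m' \in \Omega$, condition~1 gives a predecessor chain from $m'$ back into $C$, so $ea(m')$ is forced to grow in lockstep.

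Finally, I would apply Proposition~\ref{prop:ea} in its contrapositive form: a node whose $ea$ value does not converge to a finite value is unreachable (either it remains at $\infty$ or diverges until it is removed from the graph). Since every $m \in \Omega$ falls in this category, every node of $\Omega$ is unreachable, as claimed.

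The main obstacle will be articulating precisely why the propagation on $\Omega$ cannot be ``rescued'' by a reachable source outside $\Omega$. The two closure conditions of Definition~\ref{def:sdset} are exactly designed for this, but one must be careful to note that (i)~action nodes in $\Omega$ can have before-conditions outside $\Omega$ without breaking the argument, because such external conditions only influence $ea$ through predecessor assignments, and by condition~1 the actual predecessor of any $n \in \Omega$ lies in $\Omega$; and (ii)~after-conditions handled in the second phase of Algorithm~\ref{algo:prop} can only \emph{delay} a node further, never decrease its $ea$, so they cannot counteract the divergence along $C$.
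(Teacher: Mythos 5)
Your proof takes a genuinely different route from the paper's, and the route has a gap. The paper argues semantically about the relaxed problem: the positive-length predecessor cycle witnesses an infeasible combination of first achievers, so by Proposition~\ref{prop:rm-edge} one effect edge of the cycle can be deleted without altering the problem; condition~2 of Definition~\ref{def:sdset} guarantees that the deprived fluent's new achiever (hence its new predecessor) still lies in $\Omega$, so $\Omega$ remains self-dependent and the deletion can be iterated until some fluent runs out of achievers altogether and the whole set collapses. You never use Proposition~\ref{prop:rm-edge}; instead you argue dynamically about the behaviour of Algorithm~\ref{algo:prop}, claiming the $ea$ values on the cycle diverge and then invoking the contrapositive of Proposition~\ref{prop:ea}.

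The gap is in the divergence step. A predecessor cycle is a static snapshot: it records how the \emph{current} value of each node was last obtained, and Proposition~\ref{prop:pos-length} gives it positive total length. It does not follow that the algorithm keeps re-propagating around it. Within a single Dijkstra pass each node is expanded at most once, so the cycle is traversed at most once per pass and the candidate that would re-increase its first node is discarded; further traversals happen only if the after-condition enforcement step re-seeds the queue. The algorithm can therefore quiesce with the predecessor cycle in place and all $ea$ values finite --- this is exactly the situation the late-node test exists to catch, and it is why the bounded-iteration variants $R_K$ can wrongly retain unreachable nodes. So the sentence ``every recomputation of $ea$ on $C$ is governed by the same positive-length cycle, and the values on $C$ diverge'' asserts precisely what needs to be proved; your own remark that after-conditions ``only delay a node further'' treats them as a harmless perturbation when they are in fact the only mechanism that could drive the repeated traversal you need. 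There is also a mild circularity: the direction of Proposition~\ref{prop:ea} you invoke (reachable implies convergent) tacitly assumes reachable nodes are never removed, which rests on the soundness of the late-node removal that this very lemma is meant to underwrite. The paper's edge-removal argument avoids both issues because it reasons directly about achievability in the relaxed problem rather than about what the propagation does; to repair your version you would need to supply the missing argument that the propagation, run without removals, necessarily revisits the cycle infinitely often.
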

\begin{proof}
  We first show that there is a predecessor cycle composed exclusively
  of nodes in \sdset.  All nodes in \sdset have a predecessor and this
  predecessor is in \sdset.  Since \sdset is finite, there is at least
  one node in \sdset that is an indirect predecessor of itself.

  Since we have a predecessor cycle, we can safely remove an edge in
  this cycle without altering the problem.  This means that one fluent
  $f$ in \sdset is deprived of one of its achievers.   $f$ gets a new
  predecessor and its earliest appearance is updated.  Note that the
  new $pred(f)$ is still in \sdset by definition of a self-dependent
  set.

  In this new model, \sdset is still a self-dependent set.  The above
  steps can thus be repeated until one fluent of \sdset has no
  achievers left.  This fluent and all actions depending on it can be
  deleted.  The nodes that are left from \sdset still form a
  self-dependent set. The above procedure can thus be repeated until
  all fluents and actions of \sdset have been proved infeasible.
\end{proof}

\begin{example}
  The graph in \autoref{fig:ex-sdset} shows a problem with no possible
  actions and fluents.  We display a possible combination of
  predecessor edges (in red) to highlight the presence of a
  self-dependent set.  One self-dependent set in this problem is
  $\{b,A_1,A_2,a,B\}$.

  We have a cycle of predecessors $b \rightarrow A_1 \rightarrow a \rightarrow B \rightarrow b$ with an accumulated delay
  (sum of the labels) of 1.  This cycle can be read as \emph{``If $A_1$
    is the first achiever of $a$ and $B$ is the first achiever of $b$
    then $a$ can only be achieved at time $t$ if it was achieved at time
    $t-1$.''} This is of course not possible: either $b$ or $a$ needs
  another first achiever.  The only possibility is to select $A_2$ as
  first achiever for $a$ and $a$ can be removed from the effects of
  $A_1$.

  In this equivalent model, $A_2$ is the new predecessor of $a$ which
  results in a predecessor cycle $b \rightarrow A_2 \rightarrow a \rightarrow B \rightarrow b$.  Consequently, achieving
  $a$ and $b$ require selecting another first achiever for one of
  them.  Since we have no other options left, all nodes in this cycle
  are not possible.

  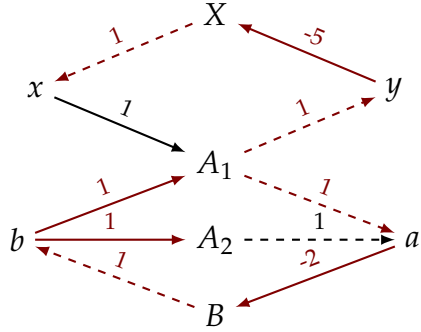
\begin{figure}[H] 
    \centering
    \begin{tikzpicture}[node distance=1cm]
      \node[action] (X) {$X$};
      \node[action,below of=X] (Y) {};
      \node[action,below of=Y] (A1) {$A_1$};
      \node[action,below of=A1] (A2) {$A_2$};
      \node[action,below of=A2] (B) {$B$};
      \node[fluent,left=2cm of Y] (x) {$x$};
      \node[fluent,right=2cm of Y] (y) {$y$};
      \node[fluent,left=2cm of A2] (b) {$b$};
      \node[fluent,right=2cm of A2] (a) {$a$};

      \path
      (X) edge[eff,pred] node[onedge] {1} (x)
      (A1) edge[eff,pred] node[onedge] {1} (y)
      (A1) edge[eff,pred] node[onedge] {1} (a)
      (A2) edge[eff] node[onedge] {1} (a)
      (B) edge[eff,pred] node[onedge] {1} (b)

      (x) edge[cond] node[onedge] {1} (A1)
      (b) edge[cond,pred] node[onedge] {1} (A1)
      (b) edge[cond,pred] node[onedge] {1} (A2)
      (a) edge[cond,pred] node[onedge] {-2} (B)
      (y) edge[cond,pred] node[onedge] {-5} (X)
      ;
    \end{tikzpicture}
    \caption{Problem in graph representation.  Edges in red represent a
      possible assignment of predecessors at some point in the
      propagation.}
    \label{fig:ex-sdset}
  \end{figure}
\end{example}

\subsubsection{From propagation to the identification of self-dependent set}
\label{sec:from-prop-self}

\newcommand{\lateset}{\ensuremath{\mathcal L}\xspace}

We have identified sufficient conditions to declare a group of nodes
unreachable.  We now show how the identification of such a set can be
integrated into Algorithm~\ref{algo:prop}.

We say that a set of nodes \lateset is late if any node in \lateset has an
earliest appearance at least \dmax time units greater than any node
not in \lateset ; \dmax being the maximum delay of any edge of the graph.

\begin{equation}
\forall x \notin \lateset, y \in \lateset, ea(x) + \dmax < ea(y)
\end{equation}

The intuition behind the definition of a late set is that all late
nodes are separated from non-late nodes by a temporal gap.  Furthermore,
this temporal gap is big enough so that the earliest appearances of
late nodes could not have been influenced by a non-late node (i.e. the
predecessor of a late node is a late node).

\begin{proposition}
  If \lateset is a set of late nodes, then \lateset is a self-dependent set.
\end{proposition}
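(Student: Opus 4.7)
The plan is to verify directly that any late set $\mathcal{L}$ satisfies the two clauses of Definition~\ref{def:sdset}. I would first make explicit the key fact about edge labels that couples the lateness inequality ($ea(x) + d_{max} < ea(y)$ for $x \notin \mathcal{L}, y \in \mathcal{L}$) to the propagation rules: by construction, $d_{max}$ is an upper bound on the label of every edge of the graph, and every action--effect edge has label $1 \leq d_{max}$. I would also remark that the late-set check is performed after a Dijkstra pass and after the after-condition enforcement step, so we may use the invariants the propagation maintains for all nodes still in the graph.

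For clause (1) I would take an arbitrary $y \in \mathcal{L}$ and its predecessor $x = pred(y)$. Since $y$ has been updated at least once, $ea(y) = ea(x) + lbl(x{\rightarrow}y)$, and $lbl(x{\rightarrow}y) \leq d_{max}$ gives $ea(x) \geq ea(y) - d_{max}$. Were $x \notin \mathcal{L}$, the defining inequality of a late set would force $ea(x) < ea(y) - d_{max}$, a contradiction; hence $x \in \mathcal{L}$.

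For clause (2), I would take a fluent $f \in \mathcal{L}$ and any achiever $A$ still present in the graph (any removed achiever is irrelevant, since the claim is about the current model). By the invariant preserved by the Dijkstra pass, as soon as $A$ is marked reachable at time $ea(A)$ its effect $f$ is pushed onto $Q$ with candidate time $ea(A)+1$; once propagation quiesces, $ea(f) \leq ea(A) + 1 \leq ea(A) + d_{max}$, i.e.\ $ea(A) \geq ea(f) - d_{max}$. If $A$ were not in $\mathcal{L}$, the lateness condition would give $ea(A) < ea(f) - d_{max}$, contradicting the previous inequality. Thus every achiever of $f$ lies in $\mathcal{L}$, and the two clauses together show that $\mathcal{L}$ is a self-dependent set.

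The only delicate point I anticipate is the second clause: one must carefully argue that the invariant $ea(f) \leq ea(A)+1$ holds at the precise moment the late-set test is evaluated, taking into account that after-condition enforcement may have pushed actions back onto $Q$. I would resolve this by observing that the test occurs only after all items popped during the current Dijkstra pass have been expanded, so for any action--effect pair $(A, f)$ both still in the graph, the effect edge has been relaxed at least once with the current $ea(A)$, and monotonicity of $ea$ during subsequent updates preserves the needed inequality.
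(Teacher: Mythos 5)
Your proof is correct and follows essentially the same route as the paper's: clause (1) via the predecessor-edge bound $lbl \leq d_{max}$ combined with the lateness gap, and clause (2) via the invariant that a fluent's earliest appearance is within one time unit (hence within $d_{max}$) of each of its remaining achievers. Your extra care about when the invariant $ea(f) \leq ea(A)+1$ holds relative to the Dijkstra pass is a welcome precision, but it does not change the argument; the only tiny slip is asserting $ea(y) = ea(x) + lbl(x\rightarrow y)$ as a current equality rather than one holding at the last update of $y$ (the paper notes $ea(x)$ may have increased since, which only strengthens the needed inequality).
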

\begin{proof}
  We prove the two conditions of a set to be self-dependent.

  Let $Y$ be the current predecessor of a node $X \in \lateset$ and $e_{YX}$ the
  edge from $Y$ to $X$.  At the last update of $X$, $ea(X)$ was set to
  $ea(Y) + lbl(e_{YX})$.  We know that $ea(Y)$ can only increase and that
  $lbl(e_{YX}) \leq \dmax$.  Consequently, we still have
  $ea(Y) + \dmax \geq ea(X)$, meaning that $Y$ is necessarily in \lateset.  We have shown
  that if $X$ is in \lateset, $pred(X)$ is in \lateset.

  Next let $x$ be a fluent in \lateset.  Because a fluent takes the
  minimum earliest appearance of all its achievers, no such achiever
  can be more than \dmax time units before it.  All achievers of a
  fluent in \lateset are therefore in \lateset as well.
\end{proof}

\begin{corollary}
  Any node in a late set is not reachable.
\end{corollary}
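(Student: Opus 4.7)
The plan is to observe that this corollary is essentially an immediate consequence of the two results just established, chained together. First I would note that the preceding proposition shows that any late set $\lateset$ satisfies the conditions of \autoref{def:sdset}, i.e., it is a self-dependent set: every node in $\lateset$ has its predecessor in $\lateset$ (because the temporal gap of at least $\dmax$ prevents a non-late predecessor from having produced the current $ea$ value), and every achiever of a fluent in $\lateset$ is itself in $\lateset$ (because a fluent inherits the minimum $ea$ over its achievers, so no achiever can be more than $\dmax$ earlier than the fluent).

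Next I would invoke the earlier proposition stating that every node contained in a self-dependent set is unreachable. Since each element of $\lateset$ belongs to the self-dependent set $\lateset$ itself, unreachability transfers to every element. Formally, the proof is a one-line composition:
\[
n \in \lateset \;\Longrightarrow\; n \in \sdset \text{ for some self-dependent } \sdset \;\Longrightarrow\; n \text{ is not reachable.}
\]

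There is no real obstacle here: all the substantive work has been done in \autoref{prop:late-impos}'s supporting lemma (that late sets are self-dependent) and in the structural argument about self-dependent sets (that they can be iteratively pruned via predecessor cycles of strictly positive length until every node is eliminated). The corollary is simply the packaging step that connects the two results into the form invoked during Algorithm~\ref{algo:prop} to justify line~\ref{line:rm-late}, where late actions are removed from the reachability graph. Accordingly, I would keep the proof to a one or two sentence statement of this composition, without restating the self-dependent set argument.
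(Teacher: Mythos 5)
Your proposal is correct and matches the paper's intent exactly: the corollary is stated without proof precisely because it is the immediate composition of the proposition that a late set is a self-dependent set with the earlier proposition that every node of a self-dependent set is unreachable. Nothing further is needed.
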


\begin{proposition}
  Any non-reachable node will eventually be part of a late set.
\end{proposition}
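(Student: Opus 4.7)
The plan is to combine \autoref{prop:ea} with a pigeonhole-style gap argument on the sequence of earliest appearances maintained by \autoref{algo:prop}. The key observation is that the unreachable nodes split into two kinds: those with an after-condition on a truly unreachable fluent, which get cleared out by the recursive removal of line~\ref{line:rm-end}, and those that remain because they participate only in infeasible (strictly positive-length) causal loops. The latter family is exactly what the late-set criterion must catch, and it is there that \autoref{prop:ea} is doing the real work.

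First I would invoke \autoref{prop:ea} to set up the bookkeeping. For every truly reachable node, $ea$ converges in finitely many iterations to a finite limit; let $M$ be the maximum of those limits, which is finite because there are only finitely many nodes. For every assumed-reachable-but-actually-unreachable node still present in the graph, $ea$ diverges to $\infty$. Because there are also only finitely many such nodes, there exists an iteration $t^\star$ after which every reachable node has reached its final value $ea \leq M$ and every remaining unreachable node has $ea > M + \dmax$. I would then define $\lateset$ at iteration $t^\star$ to be the set of remaining nodes whose earliest appearance exceeds $M + \dmax$. By construction $\lateset$ contains every surviving unreachable node and no reachable node, and the defining inequality of a late set is immediate: for any $x \notin \lateset$ one has $ea(x) \leq M$, hence $ea(x) + \dmax \leq M + \dmax < ea(y)$ for every $y \in \lateset$. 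Thus the algorithm's late check at line~\ref{line:rm-late} flags the node at iteration $t^\star$, which is the claim.

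The main obstacle is handling the interaction between the two removal mechanisms: an unreachable node might be deleted by recursive removal before ever being classified as late, and one must argue that this only helps. I plan to resolve this by observing that both mechanisms only shrink the population of unreachable nodes still in the graph, so the dichotomy above applies to whatever population remains at any given iteration. A secondary subtlety is that the definition of late is stated for actions only, whereas the proposition speaks of nodes; unreachable fluents are swept away by recursive removal as soon as all of their achievers have been marked late, so the combined procedure covers every unreachable node exactly as the proposition asserts.
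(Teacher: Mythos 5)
Your argument is correct and follows essentially the same route as the paper's own proof: invoke the convergence/divergence dichotomy of \autoref{prop:ea}, bound the reachable earliest appearances by a finite $M$, and observe that the finitely many surviving unreachable nodes must eventually all exceed $M + \dmax$, at which point they form a late set. Your additional bookkeeping (the iteration $t^\star$, the interaction with recursive removal, and the action-versus-fluent distinction) only makes explicit what the paper's three-sentence sketch leaves implicit.
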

\begin{proof}
  All earliest appearances of reachable nodes converge towards a finite
  value.  On the other hand, earliest appearances of non-reachable nodes
  diverge towards $+\infty$.  At some point, the earliest appearance of unreachable nodes
  will be greater by \dmax than the latest reachable node.
\end{proof}

A possible implementation for detecting unreachable nodes in \autoref{algo:prop} is thus to check whether some nodes are identifiable as late during propagation and delete all those nodes.
Such an implementation is guaranteed to finish and remove all unreachable nodes.

\section{Example of ANML domains}

\begin{multicols}{2}

  \lstset{style=anml}

\subsection{Dock Worker}
\label{sec-dock-anml}

Below is a partial view of the dock-worker domain that served as an illustration of the representation in this paper.

\vspace*{-15pt}
\lstset{escapeinside={<@}{@>}}
\begin{lstlisting}
/*** Types, <@\textcolor{darkgreen}{functions}@>, state variables ***/
type Truck with {
  // loc: Time x Truck => Dock
  fluent Dock loc;
};
type Container with {
  fluent (Dock or Truck or Ship) pos;
};
...

// travel_time: Dock x Dock => Integer
constant int travel_time(Dock d1, Dock d2);
// connected: Dock x Dock => Boolean
constant bool connected(Dock d1, Dock d2);


/*** Actions ***/

// move <@\textcolor{darkgreen}{action}@> of Figure 1
action move(Truck r, Dock d1, Dock d2) {
  duration := travel_time(d1,d2);
  [all] r.loc == d1 :-> d2;
  [start,t] occupant(d1) == r :-> NIL;
  [t2,end] occupant(d2) == NIL :-> r;
  t < t2;
};

// the two high-level <@\textcolor{darkgreen}{actions}@> achieving the transport task (Figure 2).
action transport(Container c, Dock d) {
  motivated; // task-dependant

  // m1-transport
  :decomposition{
    [all] c.pos == d;
  };
 
  // m2-transport
  :decomposition{
    constant Truck r;
    constant Dock ds;
    connected(ds,d);
    ds != d;
    [start] r.loc == ds;
    [start] c.pos == ds;
    // three totally <@\textcolor{darkgreen}{ordered}@> subtasks
    [all] ordered( 
      load(r,c,ds),
      move(r,ds,d),
      unload(r,c,d)
    );
  };
};


/*** Instances and Constants ***/
instance Dock dock1, dock2, dock3;
instance Truck r1, r2;
instance Ship ship1;

travel_time(dock1, dock2) := 7;
travel_time(dock2, dock3) := 9;


/*** Problem Statement (Figure 3) ***/

// initial state and expected evolution
[start] r1.loc := dock1;
[start] r2.loc := dock2;
[start] c1.pos := ship1;
[start+10] ship1.docked := pier1;
[t_undock] ship1.docked := NIL;
start+20 <= t_undock; t_undock <= start+30;

// goals and tasks
[end] r1.loc == dock1;
[end] r1.loc == dock2;
[start,end] contains transport(c1,dock3);

\end{lstlisting}
  
\subsection{Blocks-\parthier}
\label{sec:blocks-parthier}

\begin{lstlisting}
type Location; 
type Block < Location;

predicate clear(Block b);
predicate handempty();
function Location on(Block b);

instance Location TABLE, HAND;

action pickup(Block b) {
  duration := 5;
  [all] clear(b) == true;
  [all] on(b) == TABLE :-> HAND;
  [all] handempty == true :-> false;
};

action putdown(Block b) {
  duration := 5;  
  [all] clear(b) == true;
  [all] on(b) == HAND :-> TABLE;
  [all] handempty == false :-> true;
};

action stack(Block b, Block c) {
  motivated; // i.e. task-dependent
  duration := 5;  
  [all] on(b) == HAND :-> c;
  [all] handempty == false :-> true;
  [all] clear(c) == true :-> false;
  [all] clear(b) == true;
};

action unstack(Block b, Block c) {
  duration := 5;  
  [all] on(b) == c :-> HAND;
  [all] handempty == true :-> false;
  [all] clear(b) == true;
  [all] clear(c) == false :-> true;
};

action DoStack(Block a, Block b) {
  motivated; // i.e. task-dependent
  :decomposition {
    [all] on(a) == b;
  };
  :decomposition {
    [all] stack(a,b);
  };
};

\end{lstlisting}

\subsection{Blocks-\fullhier}
\label{sec:blocks-fullhier}

\begin{lstlisting}
type Location; 
type Block < Location;

predicate clear(Block b);
predicate handempty();
function Location on(Block b);

instance Location TABLE, HAND;

action pickup(Block b) {
  motivated; // i.e. task-dependent
  duration := 5;
  [all] clear(b) == true;
  [all] on(b) == TABLE :-> HAND;
  [all] handempty == true :-> false;
};

action putdown(Block b) {
  motivated; // i.e. task-dependent
  duration := 5;  
  [all] clear(b) == true;
  [all] on(b) == HAND :-> TABLE;
  [all] handempty == false :-> true;
};

action stack(Block b, Block c) {
  motivated; // i.e. task-dependent
  duration := 5;  
  [all] on(b) == HAND :-> c;
  [all] handempty == false :-> true;
  [all] clear(c) == true :-> false;
  [all] clear(b) == true;
};

action unstack(Block b, Block c) {
  motivated; // i.e. task-dependent
  duration := 5;  
  [all] on(b) == c :-> HAND;
  [all] handempty == true :-> false;
  [all] clear(b) == true;
  [all] clear(c) == false :-> true;
};

action uncover(Block a) {
  motivated; // i.e. task-dependent
  :decomposition {
    [all] clear(a) == true;
  };
  :decomposition {
    [start] clear(a) == false;
    constant Block onA;
    [start] on(onA) == a;
    [all] ordered(
      uncover(onA),
      unstack(onA,a),
      putdown(onA));
  };
};

action DoStack(Block a, Block b) {
  motivated; // i.e. task-dependent
  :decomposition {
    [all] on(a) == b;
  };
  :decomposition {
    [start] on(a) == TABLE;
    [all] ordered(
      uncover(a),
      uncover(b),
      p: pickup(a),
      s: stack(a, b));
      end(p) = start(s);
  };
  :decomposition {
    constant Block other;
    other != TABLE;
    [start] on(a) == other;
    [all] ordered(
      uncover(a),
      uncover(b),
      u: unstack(a, other),
      s: stack(a, b));
    end(u) = start(s);

  };
};
\end{lstlisting}
\end{multicols}

\end{document}